\documentclass{article}

\usepackage{microtype}
\usepackage{graphicx}
\usepackage{subcaption}
\usepackage{booktabs} 

\usepackage{hyperref}
\usepackage{enumerate}
\usepackage{enumitem}

\usepackage[preprint]{icml2026}

\usepackage{amsmath}
\usepackage{amssymb}
\usepackage{mathtools}
\usepackage{amsthm}

\newcommand{\E}{\mathbb{E}}
\usepackage[capitalize,noabbrev]{cleveref}

\theoremstyle{plain}
\newtheorem{theorem}{Theorem}[section]
\newtheorem{proposition}[theorem]{Proposition}
\newtheorem{lemma}[theorem]{Lemma}
\newtheorem{corollary}[theorem]{Corollary}
\theoremstyle{definition}
\newtheorem{definition}[theorem]{Definition}
\newtheorem{assumption}[theorem]{Assumption}
\theoremstyle{remark}
\newtheorem{remark}[theorem]{Remark}

\usepackage[textsize=tiny]{todonotes}

\icmltitlerunning{Why Self-Rewarding Works: Theoretical Guarantees for Iterative Alignment}

\begin{document}

\twocolumn[
  \icmltitle{Why Self-Rewarding Works: Theoretical Guarantees for\\ Iterative Alignment of Language Models}



  \icmlsetsymbol{equal}{*}

  \begin{icmlauthorlist}
    \icmlauthor{Shi Fu}{yyy}
    \icmlauthor{Yingjie Wang}{yyy}
    \icmlauthor{Shengchao Hu}{yyy}
    \icmlauthor{Peng Wang}{yyy}
    \icmlauthor{Dacheng Tao}{yyy}
  \end{icmlauthorlist}

  \icmlaffiliation{yyy}{Nanyang Technological University}

  \icmlcorrespondingauthor{Yingjie Wang}{yingjiewang@upc.edu.cn}
  \icmlcorrespondingauthor{Dacheng Tao}{dacheng.tao@ntu.edu.sg}

  \icmlkeywords{Self-Rewarding, Language Models}

  \vskip 0.3in
]



\printAffiliationsAndNotice{}  


\begin{abstract}
     Self-Rewarding Language Models (SRLMs) achieve notable success in iteratively improving alignment without external feedback. Yet, despite their striking empirical progress, the core mechanisms driving their capabilities remain unelucidated, leaving a critical gap in theoretical understanding. This paper provides the first rigorous theoretical guarantees for SRLMs. We first establish a lower bound that characterizes the fundamental limits of a single update step, revealing a critical dependence on the quality of the initial model. We then derive finite-sample error bounds for the full iterative paradigm, showing that performance improves at a rate of $\widetilde{\mathcal{O}}\left(1/\sqrt{n}\right)$ with sample size $n$. Crucially, our analysis reveals that the dependence on the initial model decays exponentially with the number of iterations $T$. This provides a formal explanation for \emph{why} self-rewarding succeeds: it robustly overcomes poor initialization by steering the dynamics toward internal stability and consistency. Finally, we instantiate our theoretical framework for the linear softmax model class, yielding tailored guarantees that connect our high-level insights to practical model architectures.
\end{abstract}

\section{Introduction}

Contemporary language models have achieved unprecedented success in numerous areas of natural language processing. Aligning these powerful models with human preferences is critical for their safe deployment, a task conventionally addressed by Reinforcement Learning from Human Feedback (RLHF) \citep{ouyang2022training,rafailov2023direct}. However, the efficacy of RLHF is fundamentally predicated on the availability of large-scale, high-quality human preference data. This reliance introduces two major challenges: first, the process of collecting human annotations is expensive and difficult to scale \citep{gao2023scaling}; second, the inherent cognitive limitations of human evaluators may cap the performance of models intended to achieve superhuman capabilities \citep{burnsweak}. Consequently, this dependence on external supervision forms a critical bottleneck, hindering the development of more autonomous and capable AI systems \citep{huang2022large,huangself}.

To overcome these limitations, the paradigm of Self-Rewarding Language Models (SRLMs) has emerged, demonstrating considerable potential \citep{yuanself,wu2024meta,prasadself}. This approach facilitates iterative self-alignment without external feedback by enabling the language model to serve as both its own policy and reward model. Within this framework, the model generates candidate responses and leverages its intrinsic judgment to assign rewards, using this self-generated supervision to produce a refined policy that serves as the reward model for the subsequent iteration. The efficacy of this approach has been empirically validated in recent studies \citep{yuanself,zhou2024calibrated,wangcream,xiong2025self,li2025self}, which report significant performance gains across a variety of tasks.

However, despite the remarkable empirical success of SRLMs, our understanding of their core working mechanisms remains limited. Why do these models achieve stable and continuous improvement rather than succumbing to degeneration or even collapse \citep{shumailov2024ai}? Current research largely remains at the level of empirical observation and methodological refinement, lacking a solid theoretical foundation to explain the reasons for their success or to characterize their performance boundaries. This theoretical gap prevents us from deeply understanding the capability limits and potential risks of SRLMs, and it constrains our ability to pursue more profound improvements.


To fill this critical theoretical void, this paper provides the first rigorous theoretical guarantees for the iterative self-rewarding alignment process in language models. Our main contributions can be summarized as follows:

\begin{itemize}[leftmargin=10pt]
    \item \textbf{Fundamental Limitations of Single-Step Updates.} We establish a theoretical lower bound on the failure probability of a single-step update, proving its critical dependence on the initial model's quality. This result theoretically explains why poorly initialized models fail to achieve effective alignment in one step.
    \item \textbf{Finite-Sample Guarantees for Iteration.} We derive finite-sample error bounds for the iterative self-rewarding paradigm, proving that its performance steadily improves at a rate of $\widetilde{\mathcal{O}}\left(1/\sqrt{n}\right)$ as the sample size $n$ increases. Our analysis explicitly shows that the dependence on the initial model's quality decays exponentially with the number of iterations $T$.
    \item \textbf{The Core Mechanism of Iterative Alignment.} We identify the mechanism that explains \emph{why self-rewarding works}: the iterative update operates as a contraction mapping on the \emph{policy condition number} $\kappa_t$, which reflects both the model’s internal consistency and alignment stability. This dynamic causes the impact of poor initialization to vanish exponentially, allowing the system to self-stabilize and achieve internal consistency before improving performance, thus overcoming the single-step failure barrier.
    \item  \textbf{Application to Linear Softmax Models.} We apply our general theoretical framework to the linear softmax model class, deriving performance guarantees for this specific model architecture. This connects our theoretical insights with practical application.
\end{itemize}

\section{Related Work}

In this section, we review the literature most relevant to our work. We discuss recent advancements in SRLMs, followed by theoretical guarantees for self-training.

\textbf{Self-Rewarding Language Models.} Existing alignment methods, particularly RLHF, heavily rely on high-quality reward models or continuous human feedback, which creates a major bottleneck for scalability. To overcome this, \citet{yuanself} proposed the SRLM paradigm, which leverages the language model itself to act as both the policy and the reward model. In this framework, the policy model generates response candidates for unlabeled prompts, while the reward model employs intrinsic self-evaluation to score these responses based on their quality \citep{zheng2023judging,wataoka2024self,gu2024survey}. This process can be iteratively repeated to improve alignment performance without human intervention \citep{zhao2025learning,xiong2025self,chen2024language}. 


Recent research has concentrated on further refining the SRLM framework, with a particular emphasis on improving training stability and data quality. For instance, \citet{wangcream} introduced regularization to enhance the consistency of DPO rewards across different iterations, thus providing more robust preference data. Additionally, \citet{prasadself} and \citet{zhou2025self} focused on enhancing consistency to improve the reliability of both the reward models and the preference data. The paradigm has also been extended to new modalities, with \citet{zhou2024calibrated} and \citet{li2025self} successfully applying it to Vision-Language Models (VLMs). However, despite significant empirical progress and effectiveness, to the best of our knowledge, a complete and rigorous theoretical analysis of SRLMs is still missing. This gap prevents a deep understanding of the internal mechanisms that explain why this paradigm is successful.

\textbf{Theoretical Guarantees for Self-Training.} Theoretical work on self-training remains limited. Existing research has predominantly focused on the self-distillation objective \citep{hinton2015distilling} in classification and regression tasks, aiming to provide convergence guarantees. Several studies have established theoretical guarantees for self-training under idealized settings, such as linear models \citep{mobahi2020self,frei2022self,das2023understanding,pareek2024understanding}. Furthermore, \citet{allentowards} offered guarantees for feed-forward neural networks, while \citet{boix2024towards} proposed a more general PAC-style framework. Another line of related work investigates the problem of model collapse in self-consuming training loops \citep{alemohammadself, bertrandstability}. These works explore this direction by analyzing population risk dynamics under specific modeling assumptions, such as linear contexts \citep{dohmatob2024model,feng2025beyond}, Gaussian models \citep{shumailov2024ai,alemohammadself,suresh2024rate,jainscaling}, and asymptotic regimes \citep{marchi2024heat}. Further explorations have been conducted in the context of simplified diffusion models \citep{futowards} and attention-based architectures \citep{futheoretical}. In the area of self-improvement, \citet{huangself} proposed a sharpening mechanism as a key driver of self-improvement and extended the theoretical framework of self-training to language models. However, their work is confined to a non-iterative setting.

In contrast to all prior work, this paper establishes the first rigorous theoretical framework for the iterative alignment paradigm of SRLMs. By deeply analyzing the underlying mechanisms that enable stable and progressive performance improvements, we theoretically explain the success of this paradigm and provide stringent convergence rates and performance guarantees.

\section{Preliminaries} \label{sec:preliminaries} This section establishes the notation and theoretical background. We formally define the core components of the SRLM paradigm~\citep{yuanself,wangcream}, to be the model class, the reward mechanism, and the DPO-style training objective that governs model updates.

\paragraph{Notations and Setup.} Let $x \in \mathcal{X}$ denote a prompt and $y \in \mathcal{Y}$ be a response, where $\mathcal{Y}=\mathcal{V}^{H}$ for a vocabulary space $\mathcal{V}$ and a sequence length $H$. A language model is a conditional distribution $\pi: \mathcal{X} \rightarrow \Delta(\mathcal{Y})$ that maps a prompt $x$ to a distribution over responses. We assume prompts are drawn from a distribution $\mu$ over the prompt space $\mathcal{X}$. The term $\pi(y \mid x)$ denotes the probability that the model generates response $y$ given $x$.  The self-rewarding process iteratively applies updates at each round $t$, starting from an initial model $\pi_0$ and yielding the sequence of models $\{\pi_0, \pi_1, \dots, \pi_T\}$.

\paragraph{Self-Reward Mechanism and Data Generation.} 
The process relies on a self-reward signal generated by the model itself. Consistent with prior work \citep{yoon2025pacr,zhang2025self,zhang2025right}, we utilize the model's own likelihood assignment as a proxy for its \textit{intrinsic assessment} of generation quality. Specifically, aligning with the formulation in the literature \cite{huangself,agarwal2025unreasonable,du2025confidence}, the reward for a response $y$ given a prompt $x$ under the current policy $\pi_t$ is defined as its log-probability:
\begin{equation} \label{eq:self_reward} 
    r_t(y \mid x) := \log \pi_t(y \mid x).
\end{equation}
This self-referential reward function encourages the model to assign higher probability mass to sequences it already deems plausible, effectively reinforcing regions of high \textit{internal consistency} within the response space. To prepare for the update at round $t$, a dataset $\mathcal{D}_t=\{(x, y, y')\}$ of $n$ examples is constructed. Each example is formed by first sampling a prompt $x \sim \mu$, then generating two responses $y, y' \sim \pi_t(\cdot \mid x)$. Subsequently, their relative quality is measured by the self-reward difference, $\Delta r_t(x,y,y') = r_t(y\mid x)-r_t(y'\mid x)$. 

This stylized self-reward function $r_t(y \mid x) = \log \pi_t(y \mid x)$ has been widely adopted in the literature \cite{huangself,agarwal2025unreasonable,du2025confidence}. Moreover, recent empirical breakthroughs \cite{du2025confidence} demonstrate that such raw log-probability rewards can surprisingly outperform complex, trained reward models in reasoning tasks. Consequently, our work provides the rigorous theoretical grounding for this emerging paradigm, isolating the internal stability mechanisms that drive its empirical success.

\paragraph{Model Update via DPO.} The goal of each training round is to improve the model via a KL-regularized objective. We seek a new model $\pi_{t+1}$ that maximizes the expected reward $r_t$ while remaining close to the current model $\pi_t$, which acts as the reference $\pi_{\mathrm{ref}}$. This is formulated as:
\begin{equation} 
\label{eq:kl_regularized} \pi_{t+1} \in \arg\max_{\pi} \mathbb{E}_{x\sim\mu} \big[ \mathbb{E}_{y\sim\pi} [r_t(x,y)] - \beta^{-1} \mathrm{KL}(\pi \| \pi_{\mathrm{ref}}) \big]. 
\end{equation} 

In practice, this optimization can be achieved by minimizing a DPO-style regression objective \citep{rafailov2023direct,gao2024rebel,huangself}. Specifically, for round $t$ where the reference model is $\pi_{\mathrm{ref}}=\pi_t$, the loss function is defined as:
\begin{equation}
\label{eq:dpo_loss}
\mathcal{L}^{\mathrm{DPO}}_t(\pi) = \frac{1}{n}\sum_{(x,y,y')\in\mathcal{D}_t}  \left( \Delta r_\pi(x,y,y') - \Delta r_t(x,y,y') \right)^2,
\end{equation}
where $\Delta r_\pi(x,y,y')=\beta\left[ \log \frac{\pi(y\mid x)}{\pi_{\mathrm{ref}}(y\mid x)} - \log \frac{\pi(y'\mid x)}{\pi_{\mathrm{ref}}(y'\mid x)} \right]$. Minimizing this loss, $\mathcal{L}^{\mathrm{DPO}}_t$, effectively executes the model improvement step outlined in Eq.~\eqref{eq:kl_regularized}. This entire update can be conceptualized as the application of an operator $\mathcal{T}_{r_t}: \Pi \rightarrow \Pi$, which maps the current model $\pi_t$ to the improved model $\pi_{t+1}$ by minimizing the DPO loss with the reference model set to $\pi_t$:
\begin{equation} 
\label{eq:operator} \pi_{t+1} = \mathcal{T}_{r_t}(\pi_t), \quad \text{where} \ \mathcal{T}_{r_t}(\pi_t) = \arg\min_{\pi\in\Pi}\mathcal{L}^{\mathrm{DPO}}_t(\pi). 
\end{equation} After $T$ rounds, the final model is the result of composing these operators: $\pi_T=\big(\mathcal{T}_{r_{T-1}}\circ\cdots\circ\mathcal{T}_{r_0}\big)(\pi_0)$. Since the reward $r_t$ is determined by $\pi_t$, the training loop is entirely self-contained.

\section{Fundamental Limitations of Single-Step Self-Rewarding}\label{sec: lower bound}


This section introduces the Policy Condition Number\footnote{This quantity is analogous to what is sometimes called the concentrability or coverage coefficient, a concept studied in the theory of offline reinforcement learning and self-improvement~\citep{xierole,gao2024rebel,amortila2024scalable,huangself}.}, a metric designed to quantify a model's suitability for self-alignment. It characterizes both the model's internal consistency and the numerical stability of the self-rewarding update process. Building on this concept, we derive a formal lower bound on the failure rate of a single update step, revealing its fundamental limitations. Furthermore, we show how these learning failures inevitably translate into inference errors under greedy decoding, highlighting the necessity of an iterative framework.

The policy condition number, denoted $\kappa_t$, is formally defined as the expected inverse probability of the policy's own most likely response:
\begin{equation}
\kappa_t := \mathbb{E}_{x \sim \mu} \left[ \frac{1}{\pi_t(y_t^*(x) \mid x)} \right],
\end{equation}
where $y_t^*(x) := \arg\max_{y \in \mathcal{Y}} \pi_t(y \mid x)$. Drawing an analogy from numerical analysis, this parameter measures how well-conditioned the policy is for self-reward. A large value of $\kappa_t$ signifies an \textit{ill-conditioned} policy, one that is diffuse and lacks confidence in its own predictions by assigning low probability to its modal outputs. This ill-conditioning makes the single-step update unreliable and heightens the risk of model collapse \citep{shumailov2024ai}. A central finding of our work is that while a large initial condition number $\kappa_0$ can lead to failure, the iterative process progressively controls this quantity and ensures that its influence diminishes over time.

We now present a formal lower bound on the failure rate for any self-rewarding algorithm operating within a single iteration. The result is established by considering a worst-case instance where subtle statistical signals are intentionally obscured by an information-sparse prompt distribution.

\begin{theorem}[Single-Step Failure Rate Lower Bound]\label{thm:lower bound}
Let any self-rewarding algorithm produce a model \(\pi_1\) using \(n\) samples generated from a base model \(\pi_0\). There exists a hard problem instance, characterized by a model class \(\Pi\) and an initial policy condition number \(\kappa_0\), such that the failure rate of any algorithm with a fixed budget \(n\) is lower-bounded. Failure is defined as the event that the policy assigns a probability of at most \(1/2\) to its own optimal response \(y_1^*(x) := \arg\max_{y \in \mathcal{Y}} \pi_1(y \mid x)\). The risk-form of the lower bound is given by:
\begin{equation}
\mathbb{P}_{x \sim \mu} \left[ \pi_1(y^*_1(x) \mid x) \le \frac{1}{2} \right]
\gtrsim
\left( \frac{\kappa_0 \log|\Pi|}{n} \right)^{1/2},
\end{equation}
where $\gtrsim$ hides logarithmic factors with respect to $n$ and $\kappa_0$.
\end{theorem}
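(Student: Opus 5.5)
The plan is to adapt the lower-bound construction for sharpening-based self-improvement (in the spirit of \citet{huangself}) and reduce the claim to a minimax testing problem via Le Cam / Fano. The hard instance consists of a prompt distribution $\mu$ that places most of its mass on a single uninformative prompt $x_0$. The remaining mass $\varepsilon$ is spread over $m \approx \log|\Pi|$ informative prompts $x_1,\dots,x_m$. The model class $\Pi=\{\pi^{(\sigma)}:\sigma\in\{\pm1\}^m\}$, with $|\Pi|=2^m$, is indexed by hidden signs. On each informative prompt $x_i$, the base model $\pi_0^{(\sigma)}(\cdot\mid x_i)$ puts mass $1/\kappa$ on a ``good'' response $y_i^{\sigma_i}$, slightly less mass on a competitor $y_i^{-\sigma_i}$ (a gap of order $\delta/\kappa$), and spreads the rest over many low-probability responses. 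The parameters are chosen so that $\kappa_0=\mathbb{E}_\mu[1/\pi_0(y_0^*\mid x)]\asymp \kappa$ up to constants, and so that the informative prompts alone contribute order $\varepsilon\kappa$ to $\kappa_0$.

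The argument then proceeds in four steps.
\begin{enumerate}
\item \textbf{Failure forces a wrong decision.} Fix an informative prompt $x_i$. I will show that any output $\pi_1$ either fails there, i.e.\ $\pi_1(y_1^*(x_i)\mid x_i)\le 1/2$, or concentrates more than half its mass on one of $y_i^{\pm}$. Any $\pi_1$ that commits to the wrong sign is declared a failure, by restricting $\Pi$ or by defining success relative to the sharpened target $\pi_0$-mode. Hence
\[
\mathbb{P}_\mu[\text{fail}] \ge \varepsilon\cdot\frac{1}{m}\sum_i \mathbb{P}[\hat\sigma_i\neq\sigma_i].
\]
\item \textbf{Information per coordinate.} The $n$ samples $(x,y,y')$ drawn from $\pi_0$ (the self-rewards $\log\pi_0$ are computable from the $\pi_0$ samples, so they are functions of the data) hit $x_i$ about $n\varepsilon/m$ times. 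Each such hit lands on $\{y_i^\pm\}$ with probability $\approx 2/\kappa$. Conditioned on landing there, the sample reveals $\sigma_i$ only through a Bernoulli bias of order $\delta$. The KL divergence between the data laws for $\sigma_i=\pm1$ is therefore $\lesssim (n\varepsilon/m)\,\delta^2/\kappa$.
\item \textbf{Assouad's lemma.} Applying Assouad gives per-coordinate error at least a constant whenever $n\varepsilon\delta^2\lesssim m\kappa$.
\item \textbf{Balancing.} Take $\delta$ constant and $\varepsilon\asymp m\kappa_0/n$, so that the failure probability is $\gtrsim\varepsilon\asymp \kappa_0\log|\Pi|/n$. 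That is only the squared rate. To obtain the square root, I will instead let the gap $\delta$ be small and take the informative mass $\varepsilon$ larger. Balancing $\varepsilon$ against $\delta$ so that the detectability constraint $n\varepsilon\delta^2\asymp m\kappa$ holds with $\delta\asymp\varepsilon$ yields $\varepsilon\asymp(\kappa_0 m/n)^{1/2}$, which is the claimed rate. Here the ``failure'' event must be triggered with constant probability whenever the sign is unresolved, even at small gap.
\end{enumerate}

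The main obstacle is step 4 together with step 1: making failure at threshold $1/2$ genuinely hinge on a small-gap decision. Because $y_1^*$ is defined relative to $\pi_1$ itself, an algorithm could trivially succeed by putting all its mass on either candidate. I would address this by restricting $\Pi$ to a realizable class in which every $\pi\in\Pi$ is an exponential tilt of $\pi_0^{(\sigma)}$ for some $\sigma$. Concentrating on the wrong candidate then requires selecting the wrong $\sigma$, and the DPO-style KL constraint limits the sharpening achievable on the mis-specified coordinate to below $1/2$.

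I would also verify that the dominant mass at $x_0$ keeps $\kappa_0$ bounded, and that the logarithmic factors arising from Assouad and from the heavy tail of low-probability responses are exactly those hidden by $\gtrsim$.
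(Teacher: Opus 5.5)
\textbf{There is a genuine gap in Steps 2 and 4, which is exactly where the square root has to come from.}

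\emph{The information bound in Step 2 is false.} In the self-rewarding data model the algorithm sees the rewards $\log\pi_0(y\mid x)$; the paper's proof works with triplets $(x,y,\log\pi_0(y\mid x))$, and in the DPO data they enter through $\Delta r_0$. These rewards are not functions of the samples, because they depend on the hidden $\sigma$. In your construction the class is known, and the two possible masses on $y_i^{\pm}$ are $1/\kappa$ and $(1-\delta)/\kappa$. So a single draw landing on $y_i^{+}$ or $y_i^{-}$ reveals $\sigma_i$ exactly through its reward. The Bernoulli-bias bound $(n\varepsilon/m)\,\delta^2/\kappa$ therefore fails, and shrinking the gap $\delta$ buys nothing. $\sigma_i$ stays hidden only on the event that no sample at $x_i$ hits $\{y_i^{\pm}\}$, and that event has constant probability iff $n\varepsilon/m\lesssim\kappa$, independently of $\delta$. (In a reward-free model, sending $\delta\to 0$ would make $\sigma_i$ undetectable for every $n$, so the small-gap lever measures the margin, not $\kappa_0$.)

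\emph{The Step 4 balancing is also off arithmetically.} Imposing $n\varepsilon\delta^2\asymp m\kappa$ with $\delta\asymp\varepsilon$ gives $\varepsilon\asymp(m\kappa/n)^{1/3}$, not a square root, and the choice $\delta\asymp\varepsilon$ is never justified.

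\textbf{The missing idea is one you wrote down and then dropped.} $\kappa_0$ is a $\mu$-average, $\kappa_0\approx(1-\varepsilon)+\varepsilon\kappa$. So the informative prompts may carry a per-prompt inverse mode probability $\kappa\asymp\kappa_0/\varepsilon$ while $\kappa_0$ stays fixed. Your setup asserts both $\kappa_0\asymp\kappa$ and that the informative prompts contribute order $\varepsilon\kappa$; these are compatible only for $\varepsilon\asymp 1$. Setting $\kappa\asymp\kappa_0$ in Step 4 loses exactly the factor $1/\varepsilon$, which is why you only reached $\kappa_0\log|\Pi|/n$. With $\delta$ constant and $\kappa\asymp\kappa_0/\varepsilon$, the no-hit condition $n\varepsilon/m\lesssim\kappa_0/\varepsilon$ yields $\varepsilon\asymp(\kappa_0 m/n)^{1/2}$.

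This is precisely the paper's argument:
\begin{enumerate}
\item It hides the mode at each informative prompt among $M$ nearly uniform responses.
\item It chooses $M\asymp\kappa_0/\Delta$ so that $(1-\Delta)+\Delta M(1-\gamma)\lesssim\kappa_0$, with $\Delta=16\varepsilon$.
\item It uses Markov to show that each $x_i$ receives at most $2n\Delta/d$ samples with probability at least $1/2$. Hence the favored response is unobserved with constant probability unless $n\gtrsim dM/\Delta\asymp d\kappa_0/\varepsilon^2$.
\item It substitutes $d=\log|\Pi|/\log M$ and inverts.
\end{enumerate}
Your two-candidate sign class, corrected this way, would even avoid the $\log M$ factor that the paper absorbs into $\gtrsim$.

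\textbf{On Step 1.} Your worry about the own-mode event is legitimate: with failure defined via $\pi_1$'s own mode, a point mass never fails. The paper does not resolve this with a tilt/KL device. Instead it measures success against the base model's mode, $\widehat\pi(y^{\mathcal I}(x)\mid x)>1/2$, which forces $\widehat j_i=j_i^\star$. Your device would in any case need $\delta$ bounded away from $0$. The minimal KL cost of pushing either candidate above $1/2$ differs only by $O(\delta)$, so a KL budget cannot separate the right candidate from the wrong one at small gap. That is a further inconsistency in the small-gap route.
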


\begin{remark}
Theorem \ref{thm:lower bound} formalizes the statistical barriers of a single self-rewarding step. This expression makes clear that when the effective difficulty $\kappa_0 \log|\Pi|$ is on the same order as the sample size $n$, the failure rate lower bound remains a constant. In such cases, a single self-rewarding update cannot reliably improve the model, regardless of the algorithmic details. Two concrete scenarios illustrate when $\kappa_0$ can be on the same order as $n$:
\begin{itemize}[leftmargin=*, topsep=0pt, itemsep=0pt, parsep=0pt, partopsep=0pt]
\item \textit{Near-uniform base policy.} Suppose the base model $\pi_0$ assigns a nearly uniform distribution over a large response space $\mathcal{Y}$ of size $M$. Then the most likely response $y^*(x)$ has probability only slightly larger than $1/M$, so
\[
\kappa_0 \;\asymp\; M.
\]
If the sample budget $n$ is not significantly larger than $M$, then $\kappa_0/n$ is a constant, leading to a constant failure probability.

\item \textit{Autoregressive low-confidence base policy.} Consider an $H$-step autoregressive model where at each step the top-1 token probability is at most $\alpha\in(0,1)$. Along the modal path, the probability of the full sequence satisfies $\pi_0(y^*(x)\mid x) \;\le\; \alpha^H$, so that
\[
\kappa_0 \;\gtrsim\; \alpha^{-H}.
\]
If we set $H\asymp\log n/\log (1/\alpha)$, then $\alpha^{-H}\asymp n$, which implies that $\kappa_0$ is of the same order as the sample budget $n$, leading to a failure rate lower bounded away from zero.
\end{itemize}
These examples show that diffuse or low-confidence base policies naturally yield $\kappa_0$ on the scale of $n$, which makes a single self-rewarding update statistically unreliable.
\end{remark}
The failure condition $\pi_1\!\bigl(y_1^*(x)\mid x\bigr)\le \tfrac12$, adopted in Theorem~\ref{thm:lower bound}, is a natural threshold. As the subsequent proposition reveals, it precisely identifies the circumstances under which greedy decoding fails. Consequently, the lower bound reflects a meaningful and practically relevant notion of decoder breakdown.

\begin{proposition}[Greedy Decoding Fails for Unaligned Models]\label{proposition}
There exist autoregressive models $\pi$ over the space $\mathcal{Y} = \mathcal{V}^H$ and prompts $x$ for which the optimal sequence $y^*(x)$ is unique, and its probability satisfies $\pi(y^*(x) \mid x) \leq 1/2$. Then, the greedy decoding strategy, defined as $\hat{y}_h = \arg\max_{y \in \mathcal{V}} \pi(y \mid \hat{y}_{<h}, x)$ for all $h \in [H]$, fails to recover the optimal sequence, i.e., $\hat{y} \neq y^*(x)$.
\end{proposition}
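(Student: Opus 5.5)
We will prove this existence statement with an explicit construction. The simplest is $H=2$ and $\mathcal{V}=\{a,b,c\}$; larger vocabularies or horizons are handled by padding with deterministic tokens or zero-probability tokens. The idea is to make the greedy first token lead into a high-entropy branch. The greedy prefix then carries more total mass, but no single completion inside it is as likely as the best completion of a less probable prefix.

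Concretely, for a fixed prompt $x$, we set the first-token distribution to $\pi(a\mid x)=0.6$ and $\pi(b\mid x)=0.4$. Conditional on $a$, the second token is split as $\pi(\cdot\mid a,x)=(1/3,1/3,1/3)$ over $\{a,b,c\}$, giving each sequence $(a,\cdot)$ probability $0.2$. Conditional on $b$, the second token is $\pi(a\mid b,x)=0.9$ and $\pi(b\mid b,x)=0.1$.

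We then verify the properties in order:
\begin{enumerate}[leftmargin=*]
\item The joint probabilities are $0.2,0.2,0.2$ for the $a$-branch and $0.36,0.04$ for the $b$-branch.
\item The optimal sequence $y^*(x)=(b,a)$ is therefore unique, and $\pi(y^*(x)\mid x)=0.36\le 1/2$.
\item Greedy decoding picks $\hat y_1=a$, since $0.6>0.4$. Hence $\hat y\neq y^*(x)$ regardless of how ties in the second step are broken, because $y^*_1=b$.
\end{enumerate}
If strict tie-free greedy choices are desired, we perturb the $a$-branch to $(0.34,0.33,0.33)$. This keeps every $a$-branch sequence below $0.36$.

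For general $H$, we make all tokens after position $2$ deterministic, so the joint probabilities are unchanged. For general $|\mathcal{V}|\ge 3$, we assign zero mass to the extra tokens, or a tiny $\epsilon$ mass if full support is needed, with $\epsilon$ small enough that the strict inequalities survive. We also note that for $|\mathcal{V}|=2$ a similar construction works with $H=3$: spread the greedy branch over four completions and concentrate the other branch.

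The main (mild) obstacle is reading the quantifiers correctly. The proposition asserts existence of models on which greedy decoding fails. It does not claim that $\pi(y^*\mid x)\le 1/2$ always implies failure, which would be false. Indeed, a sequence probability $\le 1/2$ is necessary for failure: if $\pi(y^*\mid x)>1/2$, every prefix of $y^*$ has conditional probability $>1/2$ and so is greedy-selected. The proof therefore only needs the explicit instance above, together with a remark on this converse direction to justify $1/2$ as the natural threshold.
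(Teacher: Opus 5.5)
Your proposal is correct and follows essentially the same route as the paper: an explicit instance with a first-step trap, where the greedy first token opens a higher-mass but high-entropy branch, the unique optimum lies in a lower-mass, (nearly) deterministic branch, and so greedy already errs at $h=1$. The paper parametrizes its construction by an arbitrary target $p^*=\pi(y^*(x)\mid x)$, whereas you give one numerical instance, which suffices for the existence claim; your explicit check of all competing paths and your converse remark (that $\pi(y^*(x)\mid x)>1/2$ forces greedy to succeed) make rigorous two points the paper treats only informally.
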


\begin{remark}
Proposition~\ref{proposition} demonstrates the direct and severe consequence of a low-confidence model at inference time. It establishes that for certain models, standard decoding algorithms such as greedy search are guaranteed to fail. This failure occurs because the model's confidence in its single best prediction is outweighed by its collective uncertainty over all other alternatives, allowing a locally optimal yet globally incorrect token to trap a myopic decoder and ensure a suboptimal sequence.

This result bridges \emph{learning failure} and \emph{inference failure}. The low-confidence condition is not a pathological corner case but a statistically predictable outcome of learning. As shown by Theorem~\ref{thm:lower bound}, a single self-rewarding update initialized with a poorly conditioned base policy (large policy condition number $\kappa_0$) has a non-negligible probability of producing exactly such a low-confidence model.

Taken together, the two results yield a stark conclusion. When the effective difficulty satisfies $\kappa_0 \asymp n$, Theorem~\ref{thm:lower bound} implies a constant-order lower bound on the probability of learning failure. Consequently, for sufficiently weak initialization, there is a constant probability that a single update produces a policy that, by Proposition~\ref{proposition}, is certain to output low-quality sequences under greedy decoding. This cascade of failure, from a high probability of learning error to a deterministic inference error, exposes the fundamental vulnerability of single-step self-improvement and its critical dependence on the initial model. This motivates the iterative self-rewarding framework developed in the subsequent sections, which progressively mitigates this dependence and enables robust alignment even from weak initialization.
\end{remark}

\section{Finite-Sample Analysis for Iterative Self-Rewarding Alignment}

This section addresses the fundamental limitations of single-step updates identified in Section \ref{sec: lower bound} by developing a theoretical framework for iterative self-rewarding alignment. Within this framework, we establish finite-sample guarantees showing that sustained improvements across iterations overcome the fragility of a single step, yielding a provably stable process that ensures robust alignment even from poorly conditioned initial models. To formalize this, we first introduce a standard realizability assumption and define key analytical constants.

\begin{assumption}[Realizability]\label{assumption}
    Let $\pi^*_\beta$ be the optimal model that maximizes the KL-regularized objective in Eq.~\eqref{eq:kl_regularized}. We assume that this model is contained within our model class, i.e., $\pi^*_\beta \in \Pi$.
\end{assumption}

Assumption~\ref{assumption} is a natural and standard requirement in statistical learning theory \citep{vapnik2013nature,shalev2014understanding}, ensuring that the target model lies within the considered class. Then, we define analytical constants that capture the stability of iterative self-rewarding updates.

\begin{definition}[Confidence and Margin Constants]
We introduce two analytical constants that quantify the stability of self-rewarding updates:
\begin{itemize}[leftmargin=*, topsep=0pt, itemsep=0pt, parsep=0pt, partopsep=0pt]
    \item Minimum confidence. The minimum confidence across all inputs $x \in \mathcal{X}$ and update rounds $t \in [0,T]$ is defined as
    \[
    c := \inf_{x \in \mathcal{X}, \; t \in [0, T]} \pi_t\!\left(y^*_t(x) \mid x\right).
    \]
    \item Margin constant. The margin $\gamma > 0$ quantifies the minimum log-probability gap between the optimal action and the strongest suboptimal competitor across all rounds:
    \[
    \gamma := \inf_{x\in\mathcal{X}, \; t \in [0,T]} \ \log\!\left(\frac{\pi_t(y_t^*(x)\mid x)}{\max_{y'\neq y_t^*(x)}\pi_t(y'\mid x)}\right).
    \]
\end{itemize}
\end{definition}
The minimum confidence threshold $c>0$ ensures that the model retains a non-trivial level of certainty in its predictions during the iterative process. This requirement is a standard practice in the literature~\citep{zhang2023and,huangself,fu2025selfverification}. The margin $\gamma > 0$, which quantifies the separation between the optimal action and its closest competitor, is similarly well-established~\citep{simchowitz2019non,velegkas2022best,huangself}. The assumption of a unique optimal sequence implies a strictly positive margin, as identifiability necessitates that the optimal response be distinguishable from suboptimal alternatives. Furthermore, in continuously parameterized models, exact ties ($\gamma = 0$) constitute a set of measure zero and are therefore negligible in practice.

\begin{theorem}[Finite-Sample Guarantee for Iterative Self-Rewarding Alignment]\label{thm:main results}
Suppose Assumption \ref{assumption} holds, and the regularization parameter is set to $\beta \lesssim 1/\sqrt{n}$. For any $\delta, \rho \in (0,1)$, after $T$ rounds of iterative self-rewarding updates as defined in Eq.~\eqref{eq:operator}, the resulting model $\pi_T$ ensures that with probability at least  $1-\rho$,
\begin{align}
\mathbb{P}_{x\sim \mu}\big[\pi_T(&y_T^*(x)\mid x) \leq 1-\delta\big]\notag \\
&\lesssim
\frac{1}{\gamma \delta \sqrt{n}} 
\bigg[ \frac{1}{\sqrt{c}} + \frac{\sqrt{\kappa_0}}{(1+\sqrt{nc})^{(T-1)/2}} \bigg]. \label{thm: BOUND}
\end{align}
where $\lesssim$ omits logarithmic factors in $n, \rho, |\Pi|$.
\end{theorem}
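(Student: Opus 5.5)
The plan is to reduce the final-round guarantee to a one-step guarantee and then propagate a recursion on the policy condition number $\kappa_t$ across rounds.

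\textbf{Step 1: one-step guarantee.} Condition on $\pi_t$ and write $\pi_{t+1}=\mathcal{T}_{r_t}(\pi_t)$. With $r_t=\log\pi_t$ and reference $\pi_t$, the KL-regularized maximizer in Eq.~\eqref{eq:kl_regularized} has the closed form $\pi^*_{\beta,t}(y\mid x)\propto \pi_t(y\mid x)^{1+\beta}$, which is a sharpened version of $\pi_t$. By Assumption~\ref{assumption}, the square loss in Eq.~\eqref{eq:dpo_loss} is well specified. A standard least-squares analysis, using Bernstein's inequality and a union bound over $\Pi$, then controls the population error:
\[
\mathbb{E}_{x\sim\mu,\,y,y'\sim\pi_t}\big[(\Delta r_{\pi_{t+1}}-\Delta r_t)^2\big]\lesssim \frac{\log(|\Pi|T/\rho)}{n}.
\]
Because $\Delta r_t$ is bounded by $\log(1/c)$ on the relevant responses, this step goes through with probability $1-\rho/T$.

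\textbf{Step 2: from squared error to mass on the mode.} The error in Step 1 is measured under samples from $\pi_t$. To transfer it to the mode $y_t^*(x)$, change measure, which costs a factor $1/\pi_t(y_t^*\mid x)$; averaging over $x$ produces $\kappa_t$. Combining this with Cauchy--Schwarz gives an $L_1$ bound of order $\sqrt{\kappa_t\log|\Pi|/n}$ on the log-ratio error at the mode.

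The margin $\gamma$ then converts log-ratio error into confidence. Sharpening by the exponent $1+\beta$ multiplies every competitor ratio by $e^{-\beta\gamma}$ or less. Hence, unless the estimation error exceeds order $\beta\gamma$, the event $\pi_{t+1}(y^*\mid x)\le 1-\delta$ is controlled by Markov's inequality. The resulting one-step failure bound is
\[
\mathbb{P}_x\big[\pi_{t+1}(y^*\mid x)\le 1-\delta\big]\lesssim \frac{1}{\gamma\delta\sqrt n}\sqrt{\kappa_t}.
\]
Applied at $t=T-1$, the desired result follows once $\sqrt{\kappa_{T-1}}$ is shown to be at most $c^{-1/2}+\sqrt{\kappa_0}(1+\sqrt{nc})^{-(T-1)/2}$, up to logarithmic factors.

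\textbf{Step 3: contraction of $\kappa_t$.} I would try to prove a recursion of the form
\[
\kappa_{t+1}\le \frac{\kappa_t}{1+\sqrt{nc}}+\frac{C}{c}.
\]
The argument splits on the event where the one-step guarantee holds. On that event, $1/\pi_{t+1}(y^*)$ is within a $(1+\epsilon)$ factor of $1/\pi^*_{\beta,t}(y^*)$. Sharpening shrinks the inverse mode mass: since $\pi^*_{\beta,t}(y^*)\ge \pi_t(y^*)^{1+\beta}/\sum_y\pi_t(y)^{1+\beta}$, the inverse mass decreases by a factor tied to $\beta$ and the collision structure. On the failure event, I would simply use $1/\pi_{t+1}(y^*)\le 1/c$, which is what produces the additive $1/c$ term.

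Unrolling the recursion over $T-1$ rounds and taking square roots gives the stated bound.

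\textbf{Main obstacle.} The contraction factor $(1+\sqrt{nc})^{-1}$ is the hardest step. It must come from the interplay between the improvement due to $\beta$ sharpening and the estimation error, both of order $1/\sqrt n$. To obtain it, I would optimize over $\beta\asymp 1/\sqrt n$ and use $c$ as a lower bound on the mode mass to make the per-round reduction in $\kappa$ proportional to $\sqrt{nc}$. If this exact rate cannot be obtained directly, I would settle for a multiplicative contraction $q<1$ and then track how $q$ depends on $n$ and $c$.
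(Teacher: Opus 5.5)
\textbf{There is a genuine gap, located in Steps 2--3. It comes from the target you fix in Step 1.}

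The operator in the theorem is the argmin of the square loss \eqref{eq:dpo_loss}. Zero loss there forces $\beta\log(\pi/\pi_t)=\log\pi_t+\text{const}$, i.e.\ $\pi\propto\pi_t^{1+1/\beta}$. This is the Gibbs form \eqref{eq:sharp-form} used in the paper's proof, and it corresponds to KL weight $\beta$, not the $\beta^{-1}$ printed in \eqref{eq:kl_regularized}. Your $\pi_t^{1+\beta}$ does not make \eqref{eq:dpo_loss} well specified: plugging it in gives $\Delta r_\pi=\beta^2\Delta r_t$.

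More seriously, for $\beta\lesssim n^{-1/2}$ your target is essentially $\pi_t$ itself. Take $\pi_t(\cdot\mid x)=(0.5,0.3,0.2)$ for every $x$ and $\delta=0.3$. The exact target has mode mass $0.5+O(\beta)$, so even with zero estimation error its failure probability is $1$. Your one-step bound $\sqrt{\kappa_t}/(\gamma\delta\sqrt n)$ instead tends to $0$. Hence the inference ``competitor ratios shrink by $e^{-\beta\gamma}$, hence Markov'' is a non sequitur.

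The same problem breaks Step 3. Sharpening by $1+\beta$ multiplies $1/\pi(y^*)-1$ by a weighted average of $(\pi_t(y)/\pi_t(y^*))^{\beta}$, which tends to $1$ as $n\to\infty$. That is the opposite of $(1+\sqrt{nc})^{-1}\to 0$, so no tuning of $\beta\asymp n^{-1/2}$ can produce the stated contraction.

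Separately, $c$ lower-bounds only the mode mass, so $\Delta r_t$ is not bounded by $\log(1/c)$. The paper instead clips at $B_{n,\rho}\asymp\log(n|\Pi|/\rho)$ and controls the tail separately.

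\textbf{What the paper uses instead: two ingredients your plan lacks.}

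First, Lemma~\ref{lemma:prob_bound_initial}. Markov applied to $1-\pi_{t+1}(y_t^*\mid x)$, plus the margin applied to the reward $\log\pi_t$, gives
$\mathbb{P}_x[\pi_{t+1}(y_t^*\mid x)\le 1-\delta]\le (J_t(\pi^\star)-J_t(\pi_{t+1}))/(\gamma\delta)$,
where $J_t(\pi)=\mathbb{E}_{\pi}[\log\pi_t]$ and $\pi^\star$ is the point mass on $y_t^*$. Since $\pi_{t+1}$ exactly maximizes $\mathbb{E}_\pi[\beta\log(\pi_{t+1}/\pi_t)]-\beta\,\mathrm{KL}(\pi\|\pi_t)$, this gap is bounded as follows:
\begin{itemize}
\item a change-of-measure term, roughly $\kappa_t^{1/2}$ times the reward-scale regression error (the concentrability coefficients being controlled by $\kappa_t$);
\item plus the bias $\beta\,\mathrm{KL}(\pi^\star\|\pi_t)\le\beta\log\kappa_t$.
\end{itemize}
So small $\beta$ enters as weak regularization toward a near-greedy target, not as weak sharpening.

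Second, the contraction is not a sharpening effect at all. Insert that tail bound with $\delta=1-1/u$ into
$\kappa_{t+1}=1+\int_1^{1/c}\mathbb{P}_x[1/\pi_{t+1}(y_{t+1}^*\mid x)\ge u]\,du$.
This gives the sublinear recursion $\kappa_{t+1}\lesssim M_0+K\sqrt{\kappa_t}$, with $M_0\asymp 1/c$ and $K\asymp (c\gamma\delta\sqrt n)^{-1}$ up to logarithmic factors. The factor $(1+\sqrt{nc})^{-1}$ is then, up to $\gamma\delta$ and logarithmic factors, the slope $K/(2\sqrt U)$ of this concave map at its fixed point $U$.

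Your Step 1 and the change of measure to the mode in Step 2 do match the paper.
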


\begin{remark}
Theorem \ref{thm:main results} provides the first finite-sample guarantee for the iterative self-rewarding alignment paradigm, establishing a foundational theoretical framework for a class of methods that, despite their empirical success, have lacked rigorous understanding. To further clarify the asymptotic behavior of Eq. (7): When the sample size $n$ is sufficiently large and the number of iterations $T$ is large enough (as shown in Corollary \ref{corollay 1}, i.e., $T \gtrsim \frac{\log(c\kappa_0)}{\log(1+\sqrt{nc})}$), the initialization-dependent term vanishes. Consequently, up to logarithmic factors and problem-dependent constants ($c, \gamma$), the error bound simplifies to $\widetilde{\mathcal{O}}(1/\sqrt{n})$, demonstrating that the iterative process achieves the standard parametric convergence rate. The theorem formally quantifies the risk that the final model, $\pi_T$, assigns insufficient probability mass to its own optimal response after $T$ rounds of updates. The structure of the bound in Eq.~\eqref{thm: BOUND} reveals a compelling interplay between two key error components. The overall error probability is upper-bounded by a product of the standard statistical learning rate, $1/\sqrt{n}$, and a term, $1/\sqrt{c} + \sqrt{\kappa_0}/(1+\sqrt{nc})^{(T-1)/2}$, that comprises both a stable and a transient error component. The constituents of this bound can be interpreted as follows:
\begin{itemize}[leftmargin=10pt]
    \item \emph{Statistical Efficiency:} The $1/\sqrt{n}$ rate confirms that the iterative process is statistically efficient, with performance improving predictably as the sample size $n$ per iteration increases.
    \item \emph{Stable Error Component:} The first term in the parentheses, $1/\sqrt{c}$, represents an asymptotic, irreducible error floor that is independent of the initial model's quality. This error is governed by the model's minimum confidence, $c$, reflecting the best possible stability the system can achieve after sufficient iteration.
    \item \emph{Transient Error Component:} The second term, $\sqrt{\kappa_0}/(1+\sqrt{nc})^{(T-1)/2}$, captures the influence of the initial model's quality, as measured by the policy condition number $\kappa_0$. Crucially, this term decays exponentially with the number of iterations $T$. This finding not only theoretically validates the intuition that the iterative process progressively mitigates the adverse effects of a poor initialization, but also aligns with the experimental results in \cite{yuanself,zhou2025self,xiong2025self}. Specifically, these results show that performance improves significantly in the early stages of iteration before plateauing, a pattern characteristic of exponential decay.

\end{itemize}

    Furthermore, the bound's dependence on $\kappa_0$ and $n$ is shown to be tight for the single-step ($T=1$) case. Consider this special case, where the exponential decay term's denominator becomes one. For an ill-conditioned initial model where $\kappa_0$ is large, the $\sqrt{\kappa_0}$ term will dominate the $1/\sqrt{c}$ term. Ignoring logarithmic factors and constants, the upper bound on the failure probability simplifies to $\widetilde{\mathcal{O}}\!\left((\kappa_0/n)^{1/2}\right)$. This rate precisely matches the fundamental lower bound of $\widetilde{\Omega}\!\left((\kappa_0/n)^{1/2}\right)$ established in Theorem~\ref{thm:lower bound} for any single-step self-rewarding algorithm. This correspondence demonstrates that our multi-step analysis correctly captures the problem's intrinsic difficulty for the single-step scenario and confirms $\kappa_0$ as a central parameter governing the challenge of self-alignment.

\end{remark}

\begin{remark}\textbf{Why Self-Rewarding Works.}
Theorem~\ref{thm:main results} provides a fundamental explanation for the success of iterative self-rewarding: it transforms a single-step learning problem that would almost certainly fail under poor initialization (large $\kappa_0$) into a robust two-stage process. 

A large initial condition number $\kappa_0$ corresponds to an ill-conditioned model. Theorem~\ref{thm:lower bound} shows that for such models, a single update fails with probability $\widetilde{\Omega}((\kappa_0/n)^{1/2})$. Thus, naive one-shot fine-tuning cannot overcome poor initialization. In contrast, iterative self-rewarding acts as an internal self-correction mechanism, thereby avoiding this difficulty.

\emph{Stage I: Self-Correction and Stabilization.} The early iterations are not aimed at fitting an external true preference. Instead, the model aligns internally by rewarding its own high-confidence outputs. This self-reinforcing process induces a contraction mapping on the policy condition number $\kappa_t$, ensuring exponential convergence toward a stable fixed point. The proof in Appendix \ref{sec_proof_main} formalizes this effect, showing that under mild assumptions, the sequence $\{\kappa_t\}$ satisfies
\[
\kappa_T \le U + q^T(\kappa_0 - U), \quad q \asymp (1+\sqrt{nc})^{-1} < 1,
\]
where $U$ is the fixed-point condition number as $T\to\infty$, and $q^T(\kappa_0-U)$ is the transient initialization component, which decays geometrically with each iteration. Hence, the adverse influence of poor initialization quickly disappears, guiding the model into a well-conditioned regime. This stage functions as an implicit regularization, prioritizing internal consistency before external generalization.

\emph{Stage II: Efficient Statistical Learning.} Once stabilized, the effect of $\kappa_0$ becomes negligible. As the initialization-dependent term in Theorem~\ref{thm:main results} vanishes exponentially, the learning dynamics undergo a fundamental shift: they are no longer constrained by the starting point, but are governed by statistical efficiency. Performance improvements follow the standard $\widetilde{\mathcal{O}}(1/\sqrt{n})$ rate, determined solely by the per-round sample size $n$. Ultimately, the process converges to a stable error floor that is independent of initialization, with the final accuracy dictated only by problem difficulty and statistical error.

Iterative self-rewarding succeeds by decomposing the learning problem into two phases: first stabilizing the model through self-alignment dynamics, and then leveraging this stable foundation for efficient statistical learning. In doing so, the framework enforces internal consistency of the model, thereby creating the necessary conditions for effective learning and turning a seemingly hopeless single-step task into a standard, well-behaved learning process.
\end{remark}

\begin{remark}[\textbf{Proof Sketch for Theorem \ref{thm:main results}}]
This proof establishes finite-sample guarantees for iterative self-rewarding alignment, showing it overcomes the limitations of a poor initial model. The core technique is to recursively control the policy condition number $\kappa_t$. Controlling $\kappa_t$ prevents model collapse and ensures the influence of the initial model's quality, $\kappa_0$, diminishes over time.

\emph{Step 1. Single-Step Optimality Gap.} First, we bound the single-step failure probability, the chance that an updated policy $\pi_{t+1}$ assigns low probability to its own optimal response. This probability is linked to a performance gap that is controlled by statistical error and the stability of the previous policy $\pi_t$. This yields guarantees connecting the performance to the policy condition number:
\begin{align}
    \mathbb{P}_{x\sim\mu}&  [\pi_{t}(y_{t+1}^{*}(x) \mid x) \le 1-\delta]\notag \\
    &\lesssim \frac{1}{\gamma\delta}\left(\sqrt{\kappa_{t-1}} \log(n|\Pi|\rho^{-1})n^{-1/2} + \beta \log(\kappa_{t-1})\right).\notag
\end{align}
This inequality forms the inductive link for the analysis.

\emph{Step 2. Recursive Control of the Policy Condition Numbers.} To ensure stability, we derive a recursive inequality for $\kappa_t$. We start with its tail-integral representation:
\[
\kappa_{t-1}  = \int_{0}^{\infty} \mathbb{P}_{x\sim\mu}\left(\frac{1}{\pi_{t-1}(y_{t-1}^{*}(x)|x)} \ge u\right) du.
\]
The integrand is the failure probability bounded in Step 1. Substituting this bound into the integral yields the core recurrence relation:
\[
\kappa_{t-1} \le C_1 + C_2 \log(n|\Pi|\rho^{-1})n^{-1/2}\sqrt{\kappa_{t-1}} + C_3 \beta\log(\kappa_{t-1}),
\]
where $C_1, C_2, C_3$ are constants. The sub-linear dependency on $\sqrt{\kappa_t}$ is crucial, as it prevents explosive growth and ensures the process is stable.

\emph{Step 3. Asymptotic Stability and Final Bound.} Finally, we analyze the recurrence for $\{\kappa_t\}$. It simplifies to a contraction mapping that converges exponentially fast to a fixed point $U$. This gives an explicit bound on the condition number at iteration $T-1$:
\[
\kappa_{T-1} \le U + q^{T-1}(\kappa_0 - U),
\]
where the contraction rate $q < 1$ scales as approximately $(1+\sqrt{n c})^{-1}$. This decomposition reveals that $\kappa_{T-1}$ comprises a stable term $U$ and an exponentially decaying transient term that depends on the initial condition $\kappa_0$. Substituting this bound for $\kappa_{T-1}$ into the single-step guarantee for the final iteration $T$ yields the main result of Theorem~\ref{thm:main results}. The final bound consists of a stable error term with a rate of $\mathcal{O}(1/\sqrt{n})$ and a transient error term, which depends on $\sqrt{\kappa_0}$ and decays exponentially with $T$. This proves that the iterative process robustly mitigates the influence of a poor initialization.

\end{remark}

\begin{corollary}[Iterations to Suppress Initialization Effects] \label{corollay 1}
Building on Theorem~\ref{thm:main results}, if the number of iterations satisfies  
\[
T \;\gtrsim\; \frac{\log(c\kappa_0)}{\log(1+\sqrt{nc})},
\]  
then the initialization-dependent part of the bound becomes negligible, and with probability at least $1-\rho$,  
\[
\mathbb{P}_{x\sim \mu}\!\left[\pi_T(y_T^*(x)\mid x) \leq 1-\delta\right]
\;\lesssim\;
\frac{1}{\sqrt{n}}\cdot\frac{1}{\gamma\delta\sqrt{c}}.
\]  
\end{corollary}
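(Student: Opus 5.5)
The corollary follows from Theorem~\ref{thm:main results} alone: plug in the stated lower bound on $T$ and show that the transient term is dominated by the stable term $1/\sqrt{c}$. Write the bracketed factor in Eq.~\eqref{thm: BOUND} as
\[
\frac{1}{\sqrt{c}} + \frac{\sqrt{\kappa_0}}{(1+\sqrt{nc})^{(T-1)/2}} .
\]
It suffices to show that, under the hypothesis on $T$, the second summand is at most a constant multiple of $1/\sqrt{c}$. Then the whole bracket is $\lesssim 1/\sqrt{c}$, and the claimed bound
\[
\frac{1}{\sqrt{n}}\cdot\frac{1}{\gamma\delta\sqrt{c}}
\]
follows on the same probability-$(1-\rho)$ event that Theorem~\ref{thm:main results} provides. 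No new probabilistic argument is needed.

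The key inequality is
\[
\frac{\sqrt{\kappa_0}}{(1+\sqrt{nc})^{(T-1)/2}} \le \frac{1}{\sqrt{c}},
\]
which is equivalent to $(1+\sqrt{nc})^{T-1}\ge c\kappa_0$. Taking logarithms, this is
\[
T \ge 1 + \frac{\log(c\kappa_0)}{\log(1+\sqrt{nc})}.
\]
This is exactly the assumed condition $T\gtrsim \log(c\kappa_0)/\log(1+\sqrt{nc})$, with the additive $1$ and the numerical constant absorbed into $\gtrsim$.

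Two edge cases need a note. First, if $c\kappa_0\le 1$ the logarithm is nonpositive and the inequality holds for every $T\ge 1$. This is in fact automatic up to constants: $\kappa_0\ge 1$ always holds, and in general $c\le \pi_0(y_0^*(x)\mid x)$ for every $x$, so $c\kappa_0 \ge c\,\mathbb{E}_{x\sim\mu}[1/\pi_0(y_0^*(x)\mid x)]$ sits near $1$ only when the policy is uniformly confident. Second, one should check that $\log(1+\sqrt{nc})>0$, which holds since $n\ge1$ and $c>0$.

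The only real subtlety is interpreting "$\gtrsim$" consistently. If the hidden constants in Theorem~\ref{thm:main results} also absorb logarithmic factors, I would fix the constant $C$ in $T\ge C\log(c\kappa_0)/\log(1+\sqrt{nc})$ so that $(1+\sqrt{nc})^{(T-1)/2}\ge \sqrt{c\kappa_0}$ holds. A choice such as $C=2$ plus one extra iteration suffices. After that, the rest is direct substitution.
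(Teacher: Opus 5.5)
Your argument is correct and matches the paper's own (implicit) derivation: substitute into the bound of Theorem~\ref{thm:main results} and note that the transient term $\sqrt{\kappa_0}\,(1+\sqrt{nc})^{-(T-1)/2}$ is at most $1/\sqrt{c}$ exactly when $(1+\sqrt{nc})^{T-1}\ge c\kappa_0$, on the same probability-$(1-\rho)$ event. Your edge-case remark needs fixing: $c\le \pi_0(y_0^*(x)\mid x)$ for every $x$ (by the definition of $c$) gives $\kappa_0\le 1/c$, i.e.\ $c\kappa_0\le 1$ always (your ``$\ge$'' is actually an equality). So the hypothesis on $T$ is vacuous, and the stated bound holds for every $T\ge 1$.
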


\begin{remark}
Corollary \ref{corollay 1} distills and clarifies the core mechanism that underpins the success of iterative self-rewarding alignment. The self-correcting property provides a theoretical safeguard against model collapse, where a flawed initial model might otherwise amplify its own biases through iteration. The theory demonstrates that the iterative process effectively mitigates the influence of the initial unaligned model, allowing the policy to bootstrap its way to improvement based on the signal generated during training.  

The corollary also establishes that this self-correction process is both efficient and practical. The number of iterations required to suppress the initial model's influence to a negligible level, given by $T \gtrsim \log(c\kappa_0) /\log(1+\sqrt{nc})$, depends only logarithmically on the initial condition number $\kappa_0$. This suggests that even an initially ill-conditioned model can be reliably aligned without requiring an excessive number of iterations, which is consistent with experimental findings \citep{yuanself,zhou2025self,xiong2025self}.

Furthermore, the denominator $\log\left(1+\sqrt{nc}\right)$ confirms the theoretical intuition that increasing the per-iteration sample size $n$ accelerates the decay of the initial model's influence. Consequently, this reduces the total number of iterations $T$ required for alignment. This relationship is consistent with the principle that a larger sample size enables the model to learn more effectively within each round, thereby diminishing the number of iterative steps needed. This finding provides clear guidance for allocating computational resources during the alignment process.

Ultimately, after a sufficient number of iterations, the process fundamentally overcomes the bottleneck inherent in single-step alignment. The resulting error bound, $\widetilde{\mathcal{O}}\left(1/\sqrt{n}\right)$, becomes entirely independent of the initial condition number $\kappa_0$. This demonstrates that the iterative framework transforms the learning problem from one constrained by initial model quality, as captured by the $\widetilde{\Omega}\left((\kappa_0/n)^{1/2}\right)$ lower bound in Theorem~\ref{thm:lower bound}, into a standard statistical learning problem where performance is governed by sample size. The iterative process is therefore not merely a refinement but a powerful mechanism. It enables the model to overcome the constraining influence of its initialization, breaking the $\kappa_0$ barrier to achieve a level of performance unattainable through single-step methods.

\end{remark}

\section{Application to Parametric Models: The Linear Softmax Case}
In this section, we instantiate our general framework for SRLMs within the parametric setting of linear softmax models. 


\begin{definition}[Linear Softmax Model]
Let \(d \in \mathbb{N}\) be the feature dimension, and let \(\varphi: \mathcal{X} \times \mathcal{Y} \to \mathbb{R}^d\) be a feature map such that \(\|\varphi(x, y)\|_2 \leq 1\). Given a radius \(B \geq 1\), the linear softmax model class \(\Pi_{\varphi, B}\) is defined as:
\[
\Pi_{\varphi, B} := \left\{ \pi_\theta : \theta \in \mathbb{R}^d, \|\theta\|_2 \leq B \right\},
\]
where the policy \(\pi_\theta\) is formally defined as: $\pi_\theta(y \mid x) \propto \exp\left( \langle \varphi(x, y), \theta \rangle \right)$. This class is parameterized by a vector \(\theta\) within a \(d\)-dimensional ball of radius \(B\). 
\end{definition}

By adapting our proof framework with tools from statistical learning theory, we establish a performance guarantee for the linear softmax model class, analogous to Theorem~\ref{thm:main results}, thereby demonstrating that the self-rewarding process remains both stable and effective.

\begin{theorem}[Performance Guarantee for Linear Softmax Models]\label{thm:main_results_softmax}
Under the same conditions as Theorem~\ref{thm:main results}, but applied to the linear softmax class \(\Pi_{\varphi, B}\), after \(T\) rounds of self-rewarding updates, the final policy \(\pi_{\theta_T}\) ensures that with probability at least \(1 - \rho\):
\begin{align}
    &\mathbb{P}_{x \sim \mu}\left[\pi_{\theta_T}(y_T^*(x) \mid x) \le 1 - \delta\right]\notag  \\
    &\lesssim \frac{1}{\gamma \delta \sqrt{n}} \left( d \log \frac{nB}{d\rho} \right)^{3/2}  \left( \frac{1}{\sqrt{c}} + \frac{\sqrt{\kappa_0}}{(1 + \sqrt{nc})^{(T-1)/2}} \right).\notag
\end{align}
   
Furthermore, if the number of iterations satisfies  $T \gtrsim \log(c\kappa_0)/\log(1+\sqrt{nc})$, then we obtain:
\begin{align}
\mathbb{P}_{x\sim \mu}\!\left[\pi_{\theta_T}(y_T^*(x)\mid x) \leq 1-\delta\right]
\lesssim
\frac{1}{\gamma \delta \sqrt{n}} \left( d \log \frac{nB}{d\rho} \right)^{3/2}.\notag
\end{align}
\end{theorem}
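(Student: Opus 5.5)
The plan is to reuse the three-step architecture behind Theorem~\ref{thm:main results} essentially unchanged, replacing only the finite-class uniform convergence step (the source of the $\log|\Pi|$ factor) by a covering-number argument for the parametric class $\Pi_{\varphi,B}$. Concretely, I would first discretize the ball $\{\|\theta\|_2\le B\}$ with an $\epsilon$-net $\Theta_\epsilon$ of size $(3B/\epsilon)^d$. Then I would check that the DPO regression loss $\mathcal{L}^{\mathrm{DPO}}_t$ is Lipschitz in $\theta$. For linear softmax, $\log\pi_\theta(y\mid x)=\langle\varphi(x,y),\theta\rangle-\log Z_\theta(x)$, and both terms are $1$-Lipschitz in $\theta$ because $\|\varphi\|_2\le1$ (the gradient of $\log Z_\theta$ is a convex combination of features). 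Hence $\Delta r_{\pi_\theta}$ is $2\beta$-Lipschitz. The reward differences $\Delta r_t$ are bounded by $4B$, so the squared loss is Lipschitz with constant $O(B)$ on the relevant range.

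Choosing $\epsilon\asymp d/(nB)$ makes the discretization error lower order. The effective complexity $\log|\Pi|$ is then replaced by $\log|\Theta_\epsilon|\asymp d\log(nB/d)$. Realizability (Assumption~\ref{assumption}) transfers to the net up to the $\epsilon$ approximation error.

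Next I would redo Step~1 of the proof of Theorem~\ref{thm:main results}: the single-step optimality-gap bound
\[
\mathbb{P}_{x\sim\mu}\bigl[\pi_{t}(y^*_{t}(x)\mid x)\le1-\delta\bigr]\lesssim\frac{1}{\gamma\delta}\Bigl(\sqrt{\kappa_{t-1}}\,\mathrm{Comp}\,n^{-1/2}+\beta\log\kappa_{t-1}\Bigr).
\]
In the finite-class proof, $\mathrm{Comp}=\log(n|\Pi|/\rho)$. Here it becomes a polynomial in $d\log(nB/(d\rho))$. The exponent $3/2$ should arise as follows.
\begin{itemize}
\item One factor $(d\log)^{1/2}$ comes from the square-root generalization rate of the least-squares excess risk over the cover.
\item A further factor $d\log$ comes from the range and Bernstein-type variance terms. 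Since $\log\pi_\theta$ is unbounded only through $B$, I will truncate at level $O(\log(n/\rho))$ on a high-probability event, and this truncation contributes the extra logarithm.
\end{itemize}
I will simply track these factors and bound their product by $(d\log\frac{nB}{d\rho})^{3/2}$.

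Steps~2 and~3 are purely about the scalar recursion for $\kappa_t$ and do not touch the model class. I would plug the new complexity term $\mathrm{Comp}$ in place of $\log(n|\Pi|\rho^{-1})$ in the tail-integral inequality for $\kappa_{t-1}$. That gives the same sub-linear recurrence in $\sqrt{\kappa_{t-1}}$, hence the same contraction $\kappa_{T-1}\le U+q^{T-1}(\kappa_0-U)$ with $q\asymp(1+\sqrt{nc})^{-1}$, using $\beta\lesssim1/\sqrt n$. I expect the complexity factor to affect only $U$ and multiplicative constants; if it enters $q$, I absorb it by requiring $n$ large relative to $d\log$. Substituting into the final-round bound, and taking a union bound over the $T$ rounds with $\rho/T$ (only logarithmic cost), yields the first display. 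The second display follows exactly as Corollary~\ref{corollay 1}: once $T\gtrsim\log(c\kappa_0)/\log(1+\sqrt{nc})$, the transient term is dominated by $1/\sqrt c$, which is absorbed as a problem constant.

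The main obstacle is the uniform concentration step. The data distribution at round $t$ depends on $\pi_t$, which itself was chosen from data, so the cover must be made uniform over both the reference policy and the candidate policy. That means a product net of size $|\Theta_\epsilon|^2$, which only doubles $d$. On top of this, the right power of $d\log$ has to be extracted. I would first attempt a direct Bernstein-plus-union bound over the product net, and fall back on a chaining or localized-Rademacher argument only if the exponent comes out worse than $3/2$.
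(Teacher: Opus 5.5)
Your plan is essentially the paper's proof. The shared steps are:
\begin{itemize}
\item a volumetric $\epsilon$-net of the parameter ball;
\item a Lipschitz step that transfers the clipped squared-loss bound from the net to the whole class;
\item Bernstein plus a union bound, so that $d\log(nB/d)$ replaces $\log|\Pi|$;
\item an extra factor from the clipping level, which is where the paper's full power $3/2$ comes from: it takes $B_{n,\rho}\asymp d\log(nB/d)$ because it union-bounds over $n|\Psi_\varepsilon|$, whereas your truncation at $O(\log(n/\rho))$ omits the $d$;
\item the $\kappa_t$ recursion reused verbatim.
\end{itemize}
Your product net over reference policies is unnecessary, since conditionally on $\pi_t$ the round-$t$ data are i.i.d., but it is harmless; the union bound over rounds is a welcome addition the paper omits.

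One caution, which applies equally to the paper: the factor $(d\log\frac{nB}{d\rho})^{3/2}/\gamma$ does enter $q\approx K/(2\sqrt{M_0})$. It does not shrink as $n$ grows, so your fallback of ``requiring $n$ large'' cannot absorb it into $(1+\sqrt{nc})^{-1}$. Doing so costs either a factor exponential in $T$ or a weaker exponent. The paper simply asserts $q\approx(1+\sqrt{nc})^{-1}$.
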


\begin{remark}
Theorem \ref{thm:main_results_softmax} above instantiates our general framework in Theorem \ref{thm:main results} for a continuous parametric model class. Notably, the complexity term $\log|\Pi|$ from the finite class setting is replaced by a \emph{geometric} entropy term arising from covering the parameter space. In particular, the bound’s complexity factor scales as $(d\,\log(nB/(d\rho)))^{3/2}$, reflecting the metric entropy of a $d$-dimensional parameter ball instead of the simple count $\log|\Pi|$. This signifies a shift from combinatorial complexity to the complexity of a continuous function class, introducing an explicit polynomial dependence on the ambient feature dimension $d$. Moreover, while the iterative procedure eventually cancels out the effect of initialization, we note that the iteration threshold  $T \;\gtrsim\; \log(c\kappa_0)/\log(1+\sqrt{nc})$ still carries an implicit dependence on $d$ through the initial condition $\kappa_0$. In worst-case high-dimensional scenarios (e.g. an uninformative initial policy in a large feature space), $\kappa_0$ can grow with $d$, meaning that more iterations may be required for the benefits of self-alignment to fully manifest.
\end{remark}

Despite the above guarantee, the statistical rate in Theorem \ref{thm:main_results_softmax} still exhibits a polynomial dependence on the ambient dimension $d$, which can be overly conservative in modern high-dimensional settings. To mitigate this apparent curse of dimensionality, we refine the analysis by introducing a data-dependent complexity measure grounded in the \textit{spectral structure} of the feature space. In particular, the ambient dimension $d$ is replaced with an \emph{effective dimension} $d_{\mathrm{eff}}(\lambda)$ that reflects the eigenvalue decay of the feature covariance $\Sigma_{\varphi}$. By aligning the complexity term with the concentration of variance within a lower-dimensional subspace, we obtain stronger bounds that adapt to the true intrinsic dimensionality of the data, as formalized in Corollary \ref{corollary spectral}.


\begin{corollary}[Linear-Softmax under Exponential Spectral Decay]\label{corollary spectral} Assume the feature covariance $\Sigma_\varphi:=\mathbb{E}[\varphi(x,y)\varphi(x,y)^\top]$ has eigenvalues $\{\lambda_i\}_{i=1}^d$ satisfying exponential decay $\lambda_i\le C\,e^{-\alpha i}$ for some $C,\alpha>0$. 
Using the effective-dimension argument with ridge parameter $\lambda\asymp c_0/n$, the iteration threshold that suppresses initialization is $T \;\gtrsim\; \log(c\kappa_0)/\log(1+\sqrt{nc})$. Moreover, with probability at least $1-\rho$,
\begin{align}
    \mathbb{P}_{x\sim\mu}\big[\pi_{\theta_T}\big(y_T^\ast(x)&\mid x\big)\le 1-\delta\big] \notag \\
   & \lesssim\; \frac{1}{\gamma\,\delta\sqrt{n}} \, \Big(\log n\cdot \log\frac{nB}{\rho}\Big)^{3/2} .\notag
\end{align}
In particular, the ambient-dimensional factor $d\cdot\log(nB/d\rho)$ is replaced by a doubly logarithmic dependence via $\log n\cdot\log(nB/\rho)$.
\end{corollary}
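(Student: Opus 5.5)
The plan is to re-run the proof of Theorem~\ref{thm:main_results_softmax} verbatim, changing only the complexity term that enters the single-step generalization bound (Step~1 of the proof sketch of Theorem~\ref{thm:main results}). In that theorem the factor $(d\log\frac{nB}{d\rho})^{3/2}$ arises because the finite-class quantity $\log|\Pi|$ is replaced by the metric entropy of the $d$-dimensional ball $\{\|\theta\|_2\le B\}$. I would replace this global covering argument by a localized, data-dependent one in the geometry induced by $\Sigma_\varphi$. The key observation is that the DPO regression loss depends on $\theta$ only through the linear functionals $\langle\varphi(x,y)-\varphi(x,y'),\theta-\theta_t\rangle$, up to the normalizer, which cancels in the difference $\Delta r_\pi$. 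The relevant empirical process is therefore indexed by a linear class. Its complexity at scale $\epsilon$ is governed by the effective dimension
\[
d_{\mathrm{eff}}(\lambda)=\operatorname{tr}\bigl(\Sigma_\varphi(\Sigma_\varphi+\lambda I)^{-1}\bigr),
\]
rather than by $d$.

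\textbf{Step 1 (effective dimension under exponential decay).} Using $\lambda_i\le Ce^{-\alpha i}$, split the trace at the index $k=\lceil\alpha^{-1}\log(C/\lambda)\rceil$. Each of the first $k$ terms is at most $1$. The tail terms are bounded by $\lambda_i/\lambda$, and the tail sum is a geometric series of size $O(1/(1-e^{-\alpha}))$. Hence $d_{\mathrm{eff}}(\lambda)\lesssim \alpha^{-1}\log(C/\lambda)+O(1)$. With $\lambda\asymp c_0/n$ this gives $d_{\mathrm{eff}}\lesssim\log n$, treating $\alpha,C,c_0$ as constants.

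\textbf{Step 2 (covering / concentration in the ridge geometry).} This is the main obstacle. I would cover the parameter ball in the norm $\|\cdot\|_{\Sigma_\varphi+\lambda I}$. The standard ellipsoid-covering estimate gives log-covering number $\lesssim d_{\mathrm{eff}}(\lambda)\log(nB/\rho)$ at resolution $1/n$. It accounts for the directions with $\lambda_i\gtrsim\lambda$. The remaining directions contribute an approximation error at most $\sqrt{\lambda}B\lesssim B/\sqrt n$, which is absorbed into the statistical term. The delicate part is relating the empirical covariance of the paired feature differences sampled from $\pi_t$ to $\Sigma_\varphi$. I would handle it with a matrix Bernstein bound on $\widehat\Sigma+\lambda I$ versus $\Sigma+\lambda I$, valid once $n\gtrsim d_{\mathrm{eff}}\log(d_{\mathrm{eff}}/\rho)$. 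I would also either assume $\Sigma_\varphi$ dominates the covariance under each $\pi_t$ up to constants, or define $\Sigma_\varphi$ with respect to the sampling distribution. A union bound over the $T$ rounds only adds a $\log T$ factor, which is hidden in $\lesssim$.

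\textbf{Step 3 (plug into the recursion).} Substituting $\log N\lesssim \log n\cdot\log(nB/\rho)$ for $d\log(nB/(d\rho))$ in the single-step bound of Step~1 of the proof of Theorem~\ref{thm:main results} leaves the recursion for $\kappa_t$ in Step~2 of that proof structurally unchanged. Only the constant $C_2$ is modified. The contraction rate $q\asymp(1+\sqrt{nc})^{-1}$ and the fixed point $U$ therefore keep their form. The threshold $T\gtrsim\log(c\kappa_0)/\log(1+\sqrt{nc})$ then follows exactly as in Corollary~\ref{corollay 1}. At this threshold the transient term is dominated by the $1/\sqrt c$ term, and absorbing $c$ into constants as in Theorem~\ref{thm:main_results_softmax} yields the displayed bound with factor $(\log n\cdot\log\frac{nB}{\rho})^{3/2}$. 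The exponent $3/2$ is inherited from the same place it arises in Theorem~\ref{thm:main_results_softmax}, namely the combination of the covering term with the logarithmic range bound on the log-ratio losses.
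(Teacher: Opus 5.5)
Your proposal follows the paper's own route: the same split of the trace at $\lceil\alpha^{-1}\log(C/\lambda)\rceil$ to get $d_{\mathrm{eff}}(c_0/n)\lesssim\log n$ (Lemma~\ref{lem:deff-regimes}(A)), a covariance-adaptive entropy bound in place of $d\log(\cdot)$ in the linear-softmax argument (Lemma~\ref{lem:entropy-effdim}), and an unchanged $\kappa_t$ recursion and iteration threshold. Your Step~2 is more careful than the paper's one-line substitution, but it has two small slips, both harmless. First, covering the $B$-ball in $\|\cdot\|_{\Sigma_\varphi+\lambda I}$ at resolution $1/n$ would cost about $d\log(B\sqrt{n})$, since every direction has extent at least $B\sqrt{\lambda}$ in that norm; what you actually use (a cover of the directions with $\lambda_i\gtrsim\lambda$ plus the $\sqrt{\lambda}B$ tail error) is the paper's $L_2(\mu)$ cover at scale $\varepsilon\asymp B\sqrt{\lambda}$. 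Second, the matrix-Bernstein comparison of $\widehat\Sigma+\lambda I$ with $\Sigma+\lambda I$ requires $\lambda\gtrsim\log(d_{\mathrm{eff}}/\rho)/n$, because the whitened summands are only $O(1/\lambda)$-bounded, and not $n\gtrsim d_{\mathrm{eff}}\log(d_{\mathrm{eff}}/\rho)$; so $c_0$ should be of logarithmic order, which still leaves $d_{\mathrm{eff}}\lesssim\log n$.
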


\begin{remark}[Effective Dimension and Spectral Decay]
Corollary \ref{corollary spectral} introduces a refined assumption that the eigenvalues of the feature covariance $\Sigma_{\varphi}$ decay exponentially, which is consistent with the feature geometry observed in modern large-scale language models \cite{dong2021attention}. In practice, pretrained model representations often exhibit a rapidly decaying spectrum, with most of the variance concentrated in the top principal components \cite{ethayarajh2019contextual,saglam2025large}. This structural property implies that the feature space is effectively low-dimensional. Under an exponential decay profile $\lambda_i \lesssim C e^{-\alpha i}$, the ridge effective dimension $d_{\mathrm{eff}}(\lambda)$ remains substantially smaller than the ambient dimension $d$ for moderate regularization $\lambda$. For instance, choosing $\lambda \asymp 1/n$ yields $d_{\mathrm{eff}}(\lambda) =\mathcal{O}(\log n)$ rather than $\mathcal{O}(d)$. This shows that although the model may have $d$ parameters, the complexity of its learned representation is governed by a much smaller effective number of directions. This observation provides a natural explanation for the apparent paradox that overparameterized models can still generalize effectively. Intuitively, most parameters correspond to directions in the feature space with negligible variance and thus do not significantly affect generalization error, allowing an overparameterized model to behave as if it were much smaller. A more detailed discussion is provided in Section \ref{SEC; dimen} of the Appendix. The key insight is that generalization is governed by the effective dimension $d_{\mathrm{eff}}(\lambda)$, not the ambient dimension $d$. Scaling laws emerge because larger models have the capacity to learn feature representations with faster spectral decay. This aligns with benign overfitting, where models generalize well despite over-parameterization by using a low-dimensional signal structure.  While our guarantees hold for linear softmax models, extending this analysis to full Transformers remains a significant challenge and valuable future direction.

\end{remark}

\section{Conclusion}


In this paper, we establish the first rigorous theoretical framework for Self-Rewarding Language Models (SRLMs), effectively bridging the gap between their striking empirical success and the previous lack of formal guarantees. By analyzing the underlying self-rewarding mechanisms, our work provides a mathematical explanation for how language models can iteratively improve their alignment capabilities without relying on external human supervision.

We first characterize the fundamental limits of single-step updates by establishing a sharp lower bound on the failure probability. This reveals that non-iterative approaches critically depend on the quality of the initial model and are prone to failure under poor initialization. In contrast, for the full iterative paradigm, we derive finite-sample error bounds showing that performance improves at a rate of $\widetilde{\mathcal{O}}(1/\sqrt{n})$, while dependence on the initial model decays exponentially. This result uncovers the core mechanism behind the success of SRLMs: iterative dynamics act as a stabilization process that robustly overcomes weak initialization by steering the model toward internal consistency. We further substantiate this framework within the linear softmax model class, connecting our high-level theoretical insights to parametric architectures. As language models scale, such theoretical foundations are essential for designing robust, continuously self-evolving systems.


\nocite{langley00}

\bibliography{example_paper}
\bibliographystyle{icml2026}

\newpage
\appendix
\onecolumn
\section*{Appendix}

\section*{Overview}

In this supplementary material, we provide additional details and complete proofs to support the theoretical developments presented in the main paper. The appendix is organized as follows:

\begin{itemize}[leftmargin=10pt]
\item \textbf{Appendix \ref{sec additin rela}. Additional Related Work.}
We discuss connections to classical pseudo-labelling in semi-supervised learning and off-policy batch learning from logged data. We clarify the conceptual distinctions of our framework, emphasizing the non-stationary, endogenous nature of the self-rewarding data generation process.
\item \textbf{Appendix \ref{sec:limi}. Limitation.}
We acknowledge the technical challenges in extending our rigorous guarantees from linear models to full Transformer architectures. We discuss the complexities introduced by non-linear dynamics and non-convex optimization landscapes, highlighting this extension as a critical direction for future research.
\item \textbf{Appendix \ref{sec_proof_main}. Proof of Theorem \ref{thm:main results}: Finite-Sample Guarantees for Iterative Alignment.}
We provide the complete mathematical proof for the finite-sample guarantee of the iterative self-rewarding process. This section details how the influence of the initial model decays exponentially by recursively controlling the policy condition number.
\item \textbf{Appendix \ref{sec proof of lowerbound}. Proof of Theorem \ref{thm:lower bound}: Lower Bound on Single-Step Failure Rate.}
We establish the proof for the failure rate lower bound of a single-step update. This is achieved by constructing a hard problem instance to reveal the inherent limitations of a single update step, especially when the initial model is of poor quality.
\item \textbf{Appendix \ref{sec proofpropo}. Proof of Proposition \ref{proposition}: From Low-Confidence Models to Greedy Decoding Failure.}
We prove that a greedy decoding strategy is guaranteed to fail for models that have low confidence in their own optimal output, thereby formally connecting learning failure with inference failure.
\item \textbf{Appendix \ref{sec proof linear}. Proof of Theorem \ref{thm:main_results_softmax}: Performance Guarantees for Linear Softmax Models.}
We extend the general theoretical framework to the continuous, parametric case of linear softmax models. This section details how covering numbers are used to derive performance guarantees for this specific model class.
\item \textbf{Appendix \ref{SEC; dimen}. Proof of Corollary \ref{corollary spectral}: Data-Dependent Bounds via Effective Dimension.}
We refine the performance bounds for parametric models by introducing the concept of effective dimension, which is based on the spectral structure of the feature covariance. This allows the bound to depend on the intrinsic data complexity rather than the ambient dimension.
\end{itemize}

This appendix provides the full mathematical foundations of our results and demonstrates their applicability to modern high-capacity models used in practice.

\section{Additional Related Work}\label{sec additin rela}
In this section, we discuss additional lines of work related to classical pseudo-labelling and off-policy batch learning, with the goal of further clarifying the conceptual position of our framework within the broader literature.

\textbf{Classical Pseudo-Labelling in Semi-Supervised Learning.} Our work shares connections with classical pseudo-labelling methods in semi-supervised learning (SSL), where a model trained on labeled data predicts discrete pseudo-labels for unlabeled examples and is subsequently retrained on the combined dataset \citep{grandvalet2004semi,lee2013pseudo}. Modern approaches integrate pseudo-labelling with entropy minimization, consistency regularization, and strong augmentation strategies, yielding methods such as MixMatch, ReMixMatch, and FixMatch \citep{berthelot2019mixmatch,berthelot2019remixmatch,sohn2020fixmatch}. Beyond these algorithmic developments, recent work \cite{aminian2024robust} has proposed divergence-based and information-theoretic formulations of self-training. For instance, some studies design empirical risk functions and regularizers based on $f$-divergences and $\alpha$-Rényi divergences to make pseudo-labelling more robust to noisy pseudo-labels, thereby placing classical self-training on a divergence-regularization footing. Other work \cite{aminian2022information} provides an information-theoretic framework for self-training under covariate shift, recovering pseudo-labelling and entropy minimization as special cases. 

While conceptually related in leveraging model-generated supervision, our SRLM framework diverges significantly from classical SSL. SSL approaches typically assume a semi-supervised setting with a static pool of unlabeled inputs and a fixed pseudo-labelling mechanism that outputs (possibly softened) class labels. In contrast, SRLMs operate in an off-policy sequential decision-making setting. Here, the data distribution is inherently non-stationary due to the evolving policy $\pi_t$; the supervisory signals are continuous self-rewards derived from model log-likelihoods; and the policy and reward mechanism co-evolve, forming a coupled dynamical system rather than a unidirectional teacher–student update.

\textbf{Pseudo-Reward and Off-Policy Batch Learning from Logged Data.} Learning from logged interaction data is extensively studied in contextual bandits and offline reinforcement learning. Counterfactual Risk Minimization (CRM) formulates batch learning from logged feedback via propensity-weighted empirical risk minimization \citep{swaminathan2015batch,joachims2018deep}. In addition, \cite{aminian2022semi} consider semi-supervised batch learning from logged data, where only a subset of samples contains feedback, and provide upper bounds that motivate divergence-based regularization terms between the target and logging policies. More recently, other work has proposed log-sum-exponential (LSE) estimators for off-policy evaluation and learning from logged bandit feedback, aiming to improve robustness and reinforcing the regularization-centric perspective \citep{behnamnia2024batch,behnamnialog}.

Our SRLM framework shares conceptual alignment with this line of work, as we also optimize a policy under explicit regularization with respect to a reference (or logging) policy. However, a critical distinction arises from the data-generation process. The aforementioned logged-data methods assume a fixed logging policy and operate on a static historical dataset. In contrast, SRLMs iteratively generate new trajectories, meaning the effective logging policy co-evolves with $\pi_t$, producing a non-stationary and endogenous data distribution. Consequently, since our analysis focuses on bootstrapping from weak initialization and stabilizing the evolving interaction between policy and reward, it lies outside the static batch-learning assumptions underlying CRM, semi-supervised logged-data methods, and LSE-based estimators.

\section{Limitation}\label{sec:limi}
We acknowledge the significant challenge of extending the rigorous guarantees established in this work to full, modern Transformer architectures. The intricate components of these models, such as multi-head attention and deep residual connections, introduce complex non-linear dynamics and high-dimensional, non-convex optimization landscapes that are not captured by our current linear setting. Establishing a complete theoretical framework, such as a benign overfitting theorem or a proof of stable convergence, for such systems remains a major open problem in the theoretical deep learning community. At present, rigorous theories are largely confined to more tractable settings, primarily linear models (as adopted in our analysis) or two-layer neural networks. Therefore, bridging this substantial gap by extending the analytical framework to full-scale Transformers constitutes a highly valuable and essential direction for future research.
\color{black}
\section{Proof of Theorem~\ref{thm:main results}: Finite-Sample Guarantee for Iterative Self-Rewarding Alignment}
\label{sec_proof_main}

This section provides the complete and rigorous proof for Theorem \ref{thm:main results}, our main result establishing finite-sample guarantees for the iterative self-rewarding alignment process. The proof is structured to first analyze the performance bounds of a single update step. We then construct a recursive argument that tracks the evolution of the policy condition number, $\kappa_t$, across successive iterations. By showing that this parameter is controlled and converges, we formally demonstrate that the influence of the initial model's quality decays exponentially, thus ensuring the stability and statistical efficiency of the iterative alignment framework. Our analysis draws inspiration from the theoretical self-improvement framework introduced in \cite{huangself}, but extends it to the more complex iterative self-rewarding paradigm.


\begin{proof}
We begin by formalizing the notation and assumptions underlying the proof. We then establish a foundational lemma that connects the probability of a model being suboptimal to its performance gap relative to an optimal comparator.

\paragraph{Notation and Assumptions.}
For each $x \in \mathcal{X}$, assume there exists a maximizer
\[
y_0^*(x)\in \arg\max_{y\in\mathcal{Y}} \pi_0(y\mid x),
\]
and that $\pi_0(y\mid x)>0$ for all $y\in\mathcal{Y}$.  
We define the deterministic comparator model as
\[
\pi_0^*(y\mid x) := \mathbb{I}\{y=y_0^*(x)\}.
\]
The performance of any model $\pi$ is measured by the functional
\[
J_0(\pi) := \mathbb{E}_{x\sim\mu}\, \mathbb{E}_{y\sim\pi(\cdot\mid x)}\big[\log \pi_0(y\mid x)\big].
\]

In addition, we recall the margin parameter $\gamma>0$, which quantifies the minimum log-probability gap between the optimal action and any suboptimal action:
\[
\gamma := \inf_{x\in\mathcal{X}, t\in[0,T]} \ \log\!\left(\frac{\pi_t(y_t^*(x)\mid x)}{\max_{y'\neq y_t^*(x)}\pi_t(y'\mid x)}\right).
\]
The assumption of uniqueness ensures $\gamma>0$.

\subsection{From Performance Gap to Suboptimality Probability.}
The central step is to upper bound the probability that the model $\pi_1$ assigns insufficient probability mass to the unique maximizer $y_0^*(x)$. The following lemma formalizes this connection.

\begin{lemma}\label{lemma:prob_bound_initial}
For any model $\pi_1$ and any $\delta\in(0,1)$, the probability of assigning insufficient mass to the optimal action satisfies
\begin{align}
\mathbb{P}_{x\sim\mu}\!\big[\pi_1(y_0^*(x)\mid x)\le 1-\delta\big]
\ \le\ \frac{J_0(\pi_1^*) - J_0(\pi_1)}{\gamma\,\delta}.
\label{proof_key_1}
\end{align}
\end{lemma}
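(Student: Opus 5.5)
The plan is to reduce the probability bound to an expectation bound via Markov's inequality, and then control that expectation pointwise in $x$ using the margin $\gamma$. Throughout I read the comparator $\pi_1^*$ in \eqref{proof_key_1} as the deterministic comparator $\pi_0^*(y\mid x)=\mathbb{I}\{y=y_0^*(x)\}$ defined just above, which puts all its mass on the maximizer $y_0^*(x)$ of $\pi_0$. With this reading,
\[
J_0(\pi_0^*) = \mathbb{E}_{x\sim\mu}\big[\log \pi_0(y_0^*(x)\mid x)\big].
\]

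\textbf{Step 1 (pointwise decomposition of the gap).} For fixed $x$, use $\sum_y \pi_1(y\mid x)=1$ to write
\[
\log \pi_0(y_0^*(x)\mid x)-\mathbb{E}_{y\sim\pi_1(\cdot\mid x)}\log\pi_0(y\mid x)
=\sum_{y\neq y_0^*(x)} \pi_1(y\mid x)\,\log\frac{\pi_0(y_0^*(x)\mid x)}{\pi_0(y\mid x)} .
\]
The $y=y_0^*(x)$ term vanishes, and every remaining term is well defined because $\pi_0(y\mid x)>0$ for all $y$.

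\textbf{Step 2 (margin lower bound).} By the definition of $\gamma$, taken at $t=0$, each log-ratio in Step 1 is at least $\gamma>0$. The pointwise gap is therefore at least $\gamma\,\big(1-\pi_1(y_0^*(x)\mid x)\big)$. Taking expectation over $x\sim\mu$ gives
\[
J_0(\pi_0^*)-J_0(\pi_1)\ \ge\ \gamma\,\mathbb{E}_{x\sim\mu}\big[1-\pi_1(y_0^*(x)\mid x)\big].
\]

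\textbf{Step 3 (Markov).} The event $\{\pi_1(y_0^*(x)\mid x)\le 1-\delta\}$ is exactly the event $\{1-\pi_1(y_0^*(x)\mid x)\ge\delta\}$. The random variable $1-\pi_1(y_0^*(x)\mid x)$ is nonnegative, so Markov's inequality bounds the probability of this event by $\mathbb{E}[1-\pi_1(y_0^*(x)\mid x)]/\delta$. Combining with Step 2 yields the claimed inequality with denominator $\gamma\delta$.

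\textbf{Main obstacle.} The argument is elementary, so the only real care point is bookkeeping of which comparator and which margin are used. The comparator must be the point mass on $y_0^*(x)$, since that is what makes the gap collapse to a sum over suboptimal $y$ with log-ratios measured under $\pi_0$. The margin must be the round-$0$ instance of $\gamma$. If instead the comparator were the point mass on $y_1^*(x)$, then $J_0$ of it would be no larger than $J_0(\pi_0^*)$, and the inequality would not follow from this route. I would therefore state explicitly that the comparator concentrates on $y_0^*(x)$, and that expectations are finite because $\pi_0>0$.
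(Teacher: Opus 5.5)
Your proposal is correct and takes essentially the same route as the paper. The paper bounds the indicator by $(1-q)/\delta$, which is Markov's inequality, and lower-bounds the gap by $\gamma\,\mathbb{E}_{x\sim\mu}[1-\pi_1(y_0^*(x)\mid x)]$ via the round-$0$ margin, treating $\pi_1^*$ as the point mass on $y_0^*(x)$ exactly as you do (a notational identification the paper leaves implicit and you make explicit).
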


\begin{proof}[Proof of Lemma~\ref{lemma:prob_bound_initial}]
The proof proceeds in two steps.

\emph{Step 1: Bounding the probability by an expectation.}  
For any $q\in[0,1]$ and $\delta\in(0,1)$, the indicator inequality
\[
\mathbb{I}\{q\le 1-\delta\}\ \le\ \frac{1-q}{\delta}
\]
holds. Applying this with $q=\pi_1(y_0^*(x)\mid x)$ and averaging over $x\sim\mu$ gives
\begin{align}
\mathbb{P}_{x\sim\mu}\!\big[\pi_1(y_0^*(x)\mid x)\le 1-\delta\big]
&= \mathbb{E}_{x\sim\mu}\!\Big[\mathbb{I}\{\pi_1(y_0^*(x)\mid x)\le 1-\delta\}\Big]\notag\\
&\le \frac{1}{\delta}\,\mathbb{E}_{x\sim\mu}\!\Big[1-\pi_1(y_0^*(x)\mid x)\Big].
\label{eq:step1-unique}
\end{align}

\emph{Step 2: Relating the expectation to the performance gap.}  
By the definition of $\gamma$, for any $x\in\mathcal{X}$ and $y'\in\mathcal{Y}$,
\[
\log\!\frac{\pi_0(y_0^*(x)\mid x)}{\pi_0(y'\mid x)} \ \ge\ \gamma\,\mathbb{I}\{y'\neq y_0^*(x)\}.
\]
Taking expectations with $y'\sim \pi_1(\cdot\mid x)$ and $x\sim\mu$, and noting that $y=y_0^*(x)$ almost surely under $\pi_1^*$, we obtain
\begin{align*}
J_0(\pi_1^*) - J_0(\pi_1)
&= \mathbb{E}_{x\sim\mu}\,\mathbb{E}_{y'\sim\pi_1(\cdot\mid x)}
\Bigg[\log\!\frac{\pi_0(y_0^*(x)\mid x)}{\pi_0(y'\mid x)}\Bigg]\\
&\ge \gamma\,\mathbb E_{x\sim\mu}\,\mathbb P_{y'\sim\pi_1(\cdot\mid x)}\big[y'\neq y^*_0(x)\mid x\big]\\
&= \gamma\,\mathbb{E}_{x\sim\mu}\Big[1-\pi_1(y_0^*(x)\mid x)\Big].
\end{align*}
Thus,
\begin{equation}\label{eq:mean-def-unique}
\mathbb{E}_{x\sim\mu}\!\Big[1-\pi_1(y_0^*(x)\mid x)\Big]
\ \le\ \frac{J_0(\pi_1^*)-J_0(\pi_1)}{\gamma}.
\end{equation}

\emph{Final step.}  
Substituting Eq.~\ref{eq:mean-def-unique} into Eq.~\ref{eq:step1-unique} completes the proof of the lemma, establishing inequality:
\[
\mathbb{P}_{x\sim\mu}\!\big[\pi_1(y_0^*(x)\mid x)\le 1-\delta\big]
\ \le\ \frac{J_0(\pi_1^*) - J_0(\pi_1)}{\gamma\,\delta}.
\]
\end{proof}

\subsection{A Reward Identity and the Performance Gap Decomposition}

To derive a tractable bound for the performance gap 
$J_0(\pi_1^*) - J_0(\pi_1)$, we begin by introducing key notation and
establishing a fundamental reward identity. This identity enables us to
decompose the performance gap into components that can be bounded using
statistical learning arguments.

\paragraph{Shorthand Notation and a Baseline-Cancellation Identity.}
For any policy $\pi:\mathcal{X}\to\Delta(\mathcal{Y})$, we introduce
the following notation for expectations:
\[
\mathbb{E}_{\pi}[\cdot] := 
\mathbb{E}_{x\sim\mu}\,\mathbb{E}_{y\sim\pi(\cdot\mid x)}[\cdot], 
\qquad
\mathbb{E}_{\pi,\pi'}[\cdot] := 
\mathbb{E}_{x\sim\mu}\,\mathbb{E}_{y\sim\pi(\cdot\mid x),\,y'\sim\pi'(\cdot\mid x)}[\cdot].
\]
Conditioned on $x$, the random draws $y\sim\pi(\cdot\mid x)$ and
$y'\sim\pi'(\cdot\mid x)$ are independent.

For any measurable function $g:\mathcal{X}\times\mathcal{Y}\to\mathbb{R}$,
we define the \emph{pairwise difference operator}:
\[
\Delta^g(x,y,y') := g(x,y) - g(x,y').
\]
This leads to the following fundamental
\emph{baseline-cancellation identity}:
\begin{equation}\label{eq:baseline-cancel}
\mathbb{E}_{\pi}[g] - \mathbb{E}_{\pi'}[g]
\;=\; \mathbb{E}_{\pi,\pi'}\!\big[\Delta^g\big].
\end{equation}
This identity is repeatedly used in our analysis to simplify expressions
and eliminate baseline terms.

\paragraph{A Reward Identity via Entropy Regularization.}
We now derive a pointwise representation for the reward function
$r_0^*(x,y):=\log\pi_0(y\mid x)$. Consider the entropy-regularized
optimization problem for a fixed $x$:
\[
\pi_{1,\beta}^*(\cdot\mid x) \;\in\; 
\arg\max_{\pi(\cdot\mid x)\in\Delta(\mathcal{Y})}
\Big\{ \mathbb{E}_{y\sim\pi(\cdot\mid x)}[\log\pi_0(y\mid x)]
- \beta\,D_{\mathrm{KL}}(\pi(\cdot\mid x)\,\|\,\pi_0(\cdot\mid x)) \Big\}.
\]
The unique optimizer is known to have the Gibbs form:
\begin{equation}\label{eq:sharp-form}
\pi_{1,\beta}^*(y\mid x) = 
\frac{\pi_0(y\mid x)^{\,1+1/\beta}}
{Z_{\mathrm{norm}}(x)},
\qquad
Z_{\mathrm{norm}}(x) := 
\sum_{y'\in\mathcal{Y}} \pi_0(y'\mid x)^{\,1+1/\beta}.
\end{equation}
Dividing Eq.~\ref{eq:sharp-form} by $\pi_0(y\mid x)$, taking logarithms,
and multiplying by $\beta$ yields
\[
\beta\log\frac{\pi_{1,\beta}^*(y\mid x)}{\pi_0(y\mid x)}
= \log \pi_0(y\mid x) - \beta \log Z_{\mathrm{norm}}(x).
\]
Rearranging provides the identity
\begin{equation}\label{eq:r0-sharp-identity}
r_0^*(x,y) \;=\; \log \pi_0(y\mid x) 
= \beta\log\frac{\pi_{1,\beta}^*(y\mid x)}{\pi_0(y\mid x)}
+ Z_0(x),
\qquad
Z_0(x) := \beta\log Z_{\mathrm{norm}}(x).
\end{equation}
The term $Z_0(x)$ depends only on $x$, and therefore cancels out in any
pairwise difference. Specifically,
\begin{align}
\Delta^{r_0^*}(x,y,y') 
&:= r_0^*(x,y)-r_0^*(x,y') \notag\\
&= \beta\log\frac{\pi_{1,\beta}^*(y\mid x)}{\pi_0(y\mid x)}
 - \beta\log\frac{\pi_{1,\beta}^*(y'\mid x)}{\pi_0(y'\mid x)} \notag\\
&=: \Delta^{\tilde r_0}(x,y,y'),
\qquad \tilde r_0(x,y) := 
\beta\log\frac{\pi_{1,\beta}^*(y\mid x)}{\pi_0(y\mid x)}.
\label{eq:delta-r0-from-sharp}
\end{align}
Thus, the pairwise differences of the original reward $r_0^*$ are
equivalent to those of the transformed reward $\tilde r_0$, which
resembles the DPO reward
$\hat r_0(x,y) := \beta\log\frac{\pi_1(y\mid x)}{\pi_0(y\mid x)}$
but with $\pi_1$ replaced by $\pi_{1,\beta}^*$.

\paragraph{Decomposition of the Performance Gap.}
With the reward identity Eq.~\ref{eq:r0-sharp-identity} and the
baseline-cancellation identity Eq.~\ref{eq:baseline-cancel}, we now
decompose the performance gap.

Recall that $\pi_1$ is the optimizer of the DPO objective:
\[
\pi_1 \in \arg\max_{\pi:\mathcal{X}\to\Delta(\mathcal{Y})}
\Big\{ \mathbb{E}_{\pi}[\hat r_0] 
- \beta D_{\mathrm{KL}}(\pi\Vert \pi_0) \Big\}.
\]
This optimality condition implies that for any comparator $\pi_1^*$,
\[
\mathbb{E}_{\pi_1^*}[\hat r_0] 
- \beta D_{\mathrm{KL}}(\pi_1^*\Vert\pi_0)
\;\le\;
\mathbb{E}_{\pi_1}[\hat r_0] 
- \beta D_{\mathrm{KL}}(\pi_1\Vert\pi_0).
\]

We begin by inserting and subtracting $\hat r_0$ terms into the
definition of the performance gap:
\begin{align}
J_0(\pi_1^*) - J_0(\pi_1)
&= \mathbb{E}_{\pi_1^*}[r_0^*] - \mathbb{E}_{\pi_1}[r_0^*] \notag\\
&= \big(\mathbb{E}_{\pi_1^*}[\hat r_0]
- \mathbb{E}_{\pi_1}[\hat r_0]\big)
+ \mathbb{E}_{\pi_1^*}[r_0^*-\hat r_0]
+ \mathbb{E}_{\pi_1}[\hat r_0-r_0^*]. \notag
\end{align}
Applying the DPO optimality condition to the first term yields
\begin{align}
J_0(\pi_1^*) - J_0(\pi_1) 
&\le \mathbb{E}_{\pi_1^*}[r_0^*-\hat r_0]
+ \mathbb{E}_{\pi_1}[\hat r_0-r_0^*]
+ \beta D_{\mathrm{KL}}(\pi_1^*\Vert\pi_0)
- \beta D_{\mathrm{KL}}(\pi_1\Vert\pi_0).
\label{eq:gap-pre}
\end{align}

We next apply the baseline-cancellation identity
Eq.~\ref{eq:baseline-cancel} to refine the two expectation terms.  
For the first term:
\begin{align}
\mathbb E_{\pi_1^*}[r^*_0-\hat r_0]
&= \big(\mathbb E_{\pi_1^*}[r^*_0]-\mathbb E_{\pi_0}[r^*_0]\big)
  -\big(\mathbb E_{\pi_1^*}[\hat r_0]-\mathbb E_{\pi_0}[\hat r_0]\big)
  +\mathbb E_{\pi_0}[r^*_0-\hat r_0]\notag\\
&= \mathbb E_{\pi_1^*,\pi_0}\!\big[\Delta^{r^*_0}\big]
  -\mathbb E_{\pi_1^*,\pi_0}\!\big[\Delta^{\hat r_0}\big]
  +\mathbb E_{\pi_0}[r^*_0-\hat r_0]\notag\\
&= \mathbb E_{\pi_1^*,\pi_0}\!\big[\Delta^{r^*_0}-\Delta^{\hat r_0}\big]
  +\mathbb E_{\pi_0}[r^*_0-\hat r_0], \label{eq:B}
\end{align}
Similarly, for the second term:
\begin{align}
\mathbb E_{\pi_1}[\hat r_0-r^*_0]
&= \big(\mathbb E_{\pi_1}[\hat r_0]-\mathbb E_{\pi_0}[\hat r_0]\big)
  -\big(\mathbb E_{\pi_1}[r^*_0]-\mathbb E_{\pi_0}[r^*_0]\big)
  +\mathbb E_{\pi_0}[\hat r_0-r^*_0]\notag\\
&= \mathbb E_{\pi_1,\pi_0}\!\big[\Delta^{\hat r_0}\big]
  -\mathbb E_{\pi_1,\pi_0}\!\big[\Delta^{r^*_0}\big]
  +\mathbb E_{\pi_0}[\hat r_0-r^*_0]\notag \\
&= \mathbb E_{\pi_1,\pi_0}\!\big[\Delta^{\hat r_0}-\Delta^{r^*_0}\big]
  +\mathbb E_{\pi_0}[\hat r_0-r^*_0]. \label{eq:C}
\end{align}

Substituting Eq.~\ref{eq:B} and Eq.~\ref{eq:C} into
Eq.~\ref{eq:gap-pre} and observing that the baseline terms cancel exactly,
we obtain the final decomposition:
\begin{equation}\label{eq:display-21}
J_0(\pi_1^*) - J_0(\pi_1)
\;\le\; \mathbb{E}_{\pi_1^*,\pi_0}\!\big[\Delta^{r_0^*}-\Delta^{\hat r_0}\big]
+ \mathbb{E}_{\pi_1,\pi_0}\!\big[\Delta^{\hat r_0}-\Delta^{r_0^*}\big]
+ \beta D_{\mathrm{KL}}(\pi_1^*\Vert\pi_0)
- \beta D_{\mathrm{KL}}(\pi_1\Vert\pi_0).
\end{equation}

This decomposition forms the foundation for subsequent statistical
analysis of the performance gap.

\subsection{Bounding the Difference of Reward Deltas via Region Decomposition}

In this section we control the two principal terms in the performance
bound Eq.~\ref{eq:display-21}, namely
\(\mathbb{E}_{\pi_1^*,\pi_0}\!\big[\Delta^{r_0^*}-\Delta^{\hat r_0}\big]\)
and \(\mathbb{E}_{\pi_1,\pi_0}\!\big[\Delta^{\hat r_0}-\Delta^{r_0^*}\big]\).
Our strategy is to decompose the expectation of the absolute difference
\(|\Delta^{r_0^*}-\Delta^{\hat r_0}|\) into contributions from a
\emph{good region}, where the pairwise reward differences are uniformly
bounded, and a complementary \emph{bad region}, which captures rare tail
events.

\paragraph{Preliminaries.}
We recall the coupled–expectation shorthand
\[
\mathbb E_{\pi,\pi'}[\cdot]\ :=\
\mathbb E_{x\sim\mu}\,\mathbb E_{y\sim\pi(\cdot\mid x),\;y'\sim\pi'(\cdot\mid x)}[\cdot],
\]
and the pairwise differences of interest:
\begin{align}
\Delta^{r_0^*}(x,y,y') 
&:= r_0^*(x,y)-r_0^*(x,y')
\;=\;
\beta\log\frac{\pi_{1,\beta}^*(y\mid x)}{\pi_0(y\mid x)}
-\beta\log\frac{\pi_{1,\beta}^*(y'\mid x)}{\pi_0(y'\mid x)},
\label{eq:delta-r0-star}\\
\Delta^{\hat r_0}(x,y,y') 
&:= \hat r_0(x,y)-\hat r_0(x,y')
\;=\;
\beta\log\frac{\pi_1(y\mid x)}{\pi_0(y\mid x)}
-\beta\log\frac{\pi_1(y'\mid x)}{\pi_0(y'\mid x)}.
\label{eq:delta-r0-hat}
\end{align}
We also use the standard concentrability coefficient of a policy
\(\pi\) with respect to the baseline \(\pi_0\):
\[
C_\pi := \mathbb E_{x}\,\mathbb E_{y\sim\pi(\cdot\mid x)}
\!\Big[\frac{\pi(y\mid x)}{\pi_{0}(y\mid x)}\Big]
= \mathbb E_{x}\,\mathbb E_{y\sim\pi_{0}(\cdot\mid x)}
\!\Big[\Big(\frac{\pi(y\mid x)}{\pi_{0}(y\mid x)}\Big)^2\Big].
\]

\paragraph{Good/Bad region decomposition.}
Fix a threshold \(\eta>0\), and define
\[
G \;:=\;\{\,|\Delta^{r_0^*}|\le \eta,\;|\Delta^{\hat r_0}|\le \eta\,\},
\qquad
B \;:=\; G^{\complement}
\;=\;
\{\,|\Delta^{r_0^*}|>\eta\;\text{ or }\;|\Delta^{\hat r_0}|>\eta\,\}.
\]
Let \(D:=\Delta^{r_0^*}-\Delta^{\hat r_0}\).
Then
\begin{equation}\label{eq:good-bad-split}
\mathbb{E}_{\pi_1^*,\pi_0}\!\big[|D|\big]
\;=\;
\mathbb{E}_{\pi_1^*,\pi_0}\!\big[|D|\,\mathbb{I}_G\big]
\;+\;
\mathbb{E}_{\pi_1^*,\pi_0}\!\big[|D|\,\mathbb{I}_B\big].
\end{equation}
We now bound the two contributions on the right-hand side.

\begin{lemma}[Change of measure with concentrability]\label{lem:change-measure}
For any nonnegative measurable function $f$, the coupled expectation under $(\pi_1^*,\pi_0)$ satisfies
\begin{equation}\label{eq:change-measure}
\mathbb{E}_{\pi_1^*,\pi_0}[f]
\;\le\;
C_{\pi_1^*}^{1/2}\,
\Big(\mathbb{E}_{\pi_0,\pi_0}[f^2]\Big)^{1/2},
\qquad
C_{\pi}\;:=\;\mathbb{E}_{x\sim\mu}\,
\mathbb{E}_{y\sim\pi_0(\cdot\mid x)}
\!\left[\left(\frac{\pi(y\mid x)}{\pi_0(y\mid x)}\right)^2\right].
\end{equation}
\end{lemma}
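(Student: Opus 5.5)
The plan is a single change of measure followed by Cauchy--Schwarz, carried out pointwise in $x$ and then averaged over $\mu$.

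\emph{Step 1 (reweighting).} For a fixed prompt $x$, write the coupled expectation as a double sum over $(y,y')$. Because $\pi_0(y\mid x)>0$ for all $y$ (standing assumption), the weight $\pi_1^*(y\mid x)$ on the first coordinate can be replaced by $\pi_0(y\mid x)$ times the ratio $w(x,y):=\pi_1^*(y\mid x)/\pi_0(y\mid x)$. This gives
\[
\mathbb{E}_{\pi_1^*,\pi_0}[f]=\mathbb{E}_{\pi_0,\pi_0}\big[w(x,y)\,f(x,y,y')\big].
\]
Since the second coordinate $y'$ is already drawn from $\pi_0$, only the first coordinate needs reweighting. Nonnegativity of $f$ and $w$ ensures this identity is legitimate even if the expectations are infinite, by monotone convergence or Tonelli.

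\emph{Step 2 (Cauchy--Schwarz).} Apply Cauchy--Schwarz under the joint law of $(x,y,y')$ with $x\sim\mu$ and $y,y'\sim\pi_0(\cdot\mid x)$ independent:
\[
\mathbb{E}_{\pi_0,\pi_0}[w f]\le \big(\mathbb{E}_{\pi_0,\pi_0}[w^2]\big)^{1/2}\big(\mathbb{E}_{\pi_0,\pi_0}[f^2]\big)^{1/2}.
\]

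\emph{Step 3 (identify $C_{\pi_1^*}$).} The function $w$ does not depend on $y'$, so integrating out $y'$ gives
\[
\mathbb{E}_{\pi_0,\pi_0}[w^2]=\mathbb{E}_{x\sim\mu}\mathbb{E}_{y\sim\pi_0}\big[(\pi_1^*/\pi_0)^2\big]=C_{\pi_1^*},
\]
which is exactly the definition in the lemma. Combining Steps 1--3 yields the claimed inequality.

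\emph{Main obstacle.} The only delicate point is measure-theoretic. We need $\pi_1^*\ll\pi_0$ so that the ratio in Step 1 is well defined; this holds because $\pi_0$ has full support. If $\pi_0$ lacked full support, I would instead restrict to its support and set $C_{\pi_1^*}=\infty$ whenever mass of $\pi_1^*$ falls outside it, in which case the bound holds trivially. When $\mathcal{Y}$ is countable, all interchanges of sums and integrals are justified by nonnegativity.
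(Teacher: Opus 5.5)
Your proposal is correct and follows the paper's proof essentially step for step: reweight the first coordinate by $w(x,y)=\pi_1^*(y\mid x)/\pi_0(y\mid x)$ to obtain $\mathbb{E}_{\pi_0,\pi_0}[wf]$, apply Cauchy--Schwarz under $\mu\otimes\pi_0\otimes\pi_0$, and identify $\mathbb{E}_{\pi_0,\pi_0}[w^2]=C_{\pi_1^*}$. Your remarks on full support and Tonelli are harmless extras that the paper leaves implicit through its standing assumption $\pi_0(y\mid x)>0$.
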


\begin{proof}
We expand the coupled expectation explicitly:
\[
\mathbb{E}_{\pi_1^*,\pi_0}[f]
=
\mathbb{E}_{x\sim\mu}\,
\mathbb{E}_{\substack{y\sim\pi_1^*(\cdot\mid x)\\ y'\sim\pi_0(\cdot\mid x)}}\!
\big[f(x,y,y')\big].
\]
Reweighting the distribution of $y$ from $\pi_1^*$ to $\pi_0$ introduces the ratio
\[
w(x,y)\;:=\;\frac{\pi_1^*(y\mid x)}{\pi_0(y\mid x)}.
\]
Hence
\[
\mathbb{E}_{\pi_1^*,\pi_0}[f]
=
\mathbb{E}_{x\sim\mu}\,
\mathbb{E}_{\substack{y\sim\pi_0(\cdot\mid x)\\ y'\sim\pi_0(\cdot\mid x)}}\!
\big[w(x,y)\,f(x,y,y')\big]
=
\mathbb{E}_{\pi_0,\pi_0}[w\,f].
\]

Applying the Cauchy--Schwarz inequality on the joint law $(x,y,y')\sim \mu\otimes\pi_0\otimes\pi_0$ gives
\[
\mathbb{E}_{\pi_0,\pi_0}[w\,f]
\;\le\;
\Big(\mathbb{E}_{\pi_0,\pi_0}[w^2]\Big)^{1/2}
\Big(\mathbb{E}_{\pi_0,\pi_0}[f^2]\Big)^{1/2}.
\]

Finally, observe that
\[
\mathbb{E}_{\pi_0,\pi_0}[w^2]
=
\mathbb{E}_{x\sim\mu}\,
\mathbb{E}_{y\sim\pi_0(\cdot\mid x)}\!\left[
\left(\frac{\pi_1^*(y\mid x)}{\pi_0(y\mid x)}\right)^2
\right]
=
C_{\pi_1^*}.
\]
Substituting this expression completes the proof of Eq.~\ref{eq:change-measure}.
\end{proof}

\paragraph{Bounding the bad–region term.}
We now derive a usable bound on the bad–region contribution.  
Starting from Lemma~\ref{lem:change-measure} and choosing the test function
\(f=|D|\mathbb{I}_B\), we immediately obtain
\begin{equation}\label{eq:bad-1}
\mathbb{E}_{\pi_1^*,\pi_0}\!\big[|D|\,\mathbb{I}_B\big]
\;\le\;
C_{\pi_1^*}^{1/2}\,
\Big(\mathbb{E}_{\pi_0,\pi_0}[D^2\,\mathbb{I}_B]\Big)^{1/2}.
\end{equation}
Thus the task reduces to bounding the second moment of \(D\) restricted to the bad region.  

To proceed, we invoke Hölder’s inequality (equivalently, a second application of Cauchy–Schwarz), which separates the event probability from the higher–order moment:
\begin{equation}\label{eq:bad-2}
\mathbb{E}_{\pi_0,\pi_0}[D^2\,\mathbb{I}_B]
\;\le\;
\big(\mathbb{P}_{\pi_0,\pi_0}(B)\big)^{1/2}\,
\big(\mathbb{E}_{\pi_0,\pi_0}[D^4]\big)^{1/2}.
\end{equation}
Here the first factor controls the likelihood of landing in the bad region,
while the second factor controls the size of the reward differences when such
events occur.

The probability of the bad region itself can be bounded via a simple union bound.
Since \(B\) is the event that either \(|\Delta^{r_0^*}|\) or
\(|\Delta^{\hat r_0}|\) exceeds the threshold \(\eta\), we have
\begin{equation}\label{eq:bad-3}
\mathbb{P}_{\pi_0,\pi_0}(B)
\;\le\;
\mathbb{P}_{\pi_0,\pi_0}\!\big(|\Delta^{r_0^*}|>\eta\big)
\;+\;
\mathbb{P}_{\pi_0,\pi_0}\!\big(|\Delta^{\hat r_0}|>\eta\big).
\end{equation}

It remains to bound the fourth moment of \(D=\Delta^{r_0^*}-\Delta^{\hat r_0}\).
By the elementary inequality \((a-b)^4\le 8(a^4+b^4)\), we obtain
\begin{equation}\label{eq:bad-4}
\mathbb{E}_{\pi_0,\pi_0}[D^4]
\;\le\;
8\left(
\mathbb{E}_{\pi_0,\pi_0}[|\Delta^{r_0^*}|^4]
+
\mathbb{E}_{\pi_0,\pi_0}[|\Delta^{\hat r_0}|^4]
\right).
\end{equation}
Finally, by combining Eq.~\ref{eq:bad-1}–Eq.~\ref{eq:bad-4} and invoking the standard fourth-moment bound provided in Lemma J.2 of \cite{huangself}, we obtain the following compact estimate.
\begin{equation}\label{eq:bad-final}
\mathbb{E}_{\pi_1^*,\pi_0}\!\big[|D|\,\mathbb{I}_B\big]
\;\lesssim\;
C_{\pi_1^*}^{1/2}\,
\Gamma\,
\Upsilon^{1/4},
\end{equation}
where
\[
\Gamma := \log\!\big(C_{\pi_0/\pi_1;\beta}\big)+\log\!\big(C_{\pi_0/\pi_{1,\beta}^*;\beta}\big),
\qquad
\Upsilon := \mathbb{P}(|\Delta^{r_0^*}|>\eta)+\mathbb{P}(|\Delta^{\hat r_0}|>\eta).
\]
In words, the bad–region contribution is controlled by three factors:
the concentrability of the comparator policy \(\pi_1^*\),
a logarithmic moment term \(\Gamma\) stemming from higher–order tail bounds,
and the tail probability \(\Upsilon\), raised to the quarter power.

\paragraph{Bounding the good–region term.}
Applying Lemma~\ref{lem:change-measure} with
\(f=|D|\mathbb{I}_G\) gives
\begin{equation}\label{eq:good-final}
\mathbb{E}_{\pi_1^*,\pi_0}\!\big[|D|\,\mathbb{I}_G\big]
\;\le\;
C_{\pi_1^*}^{1/2}\,
\Big(\mathbb{E}_{\pi_0,\pi_0}\big[D^2\,\mathbb{I}_G\big]\Big)^{1/2}.
\end{equation}
Note that on \(G\) both \(|\Delta^{r_0^*}|\) and
\(|\Delta^{\hat r_0}|\) are clipped by \(\eta\), so the remaining
quantity is a bounded squared error that will be controlled by an
empirical process argument.

\paragraph{Putting the pieces together.}
From Eq.~\ref{eq:good-bad-split}, Eq.~\ref{eq:bad-final}, and
Eq.~\ref{eq:good-final}, we obtain
\begin{equation}\label{eq:key-abs-bound-star}
\mathbb{E}_{\pi_1^*,\pi_0}\!\big[\,|\Delta^{r_0^*}-|\Delta^{\hat r_0}|\,\big]
\;\lesssim\;
C_{\pi_1^*}^{1/2}\,
\Big(\mathbb{E}_{\pi_0,\pi_0}\big[|\Delta^{r_0^*}-\Delta^{\hat r_0}|^2\,\mathbb{I}_G\big]\Big)^{1/2}
\;+\;
C_{\pi_1^*}^{1/2}\,\Gamma\,\Upsilon^{1/4}.
\end{equation}
By the same argument with \(\pi_1\) in place of \(\pi_1^*\),
\begin{equation}\label{eq:key-abs-bound-emp}
\mathbb{E}_{\pi_1,\pi_0}\!\big[\,|\Delta^{r_0^*}-|\Delta^{\hat r_0}|\,\big]
\;\lesssim\;
C_{\pi_1}^{1/2}\,
\Big(\mathbb{E}_{\pi_0,\pi_0}\big[|\Delta^{r_0^*}-\Delta^{\hat r_0}|^2\,\mathbb{I}_G\big]\Big)^{1/2}
\;+\;
C_{\pi_1}^{1/2}\,\Gamma\,\Upsilon^{1/4}.
\end{equation}
Finally, using \(\mathbb{E}[X]\le \mathbb{E}[|X|]\) and substituting
Eq.~\ref{eq:key-abs-bound-star}–Eq.~\ref{eq:key-abs-bound-emp} into
Eq.~\ref{eq:display-21}, we obtain the intermediate performance–gap bound
\begin{align}\label{eq:gap-intermediate}
J_0(\pi_1^*)-J_0(\pi_1)
\;\lesssim\;
&(C_{\pi_1^*}^{1/2}+C_{\pi_1}^{1/2})\,
\Big(\mathbb{E}_{\pi_0,\pi_0}\big[|\Delta^{r_0^*}-\Delta^{\hat r_0}|^2\,\mathbb{I}_G\big]\Big)^{1/2}
\;+\;
(C_{\pi_1^*}^{1/2}+C_{\pi_1}^{1/2})\,\Gamma\,\Upsilon^{1/4}
\notag\\
&\qquad
+\;\beta D_{\mathrm{KL}}(\pi_1^*\Vert\pi_0)
\;-\;\beta D_{\mathrm{KL}}(\pi_1\Vert\pi_0).
\end{align}
The first (``good–region'') term will be bounded by a uniform
convergence argument, while the tail term \(\Gamma\,\Upsilon^{1/4}\) is
controlled by choosing \(\eta\) and the clipping level in the empirical
loss appropriately. This completes the good/bad region analysis used to
bound the differences of reward deltas.

\subsection{From Empirical Risk Minimization to a Population-Level Bound}

The bound derived in Eq.~\ref{eq:gap-intermediate} depends on the population-level squared difference 
\(\mathbb{E}_{\pi_0,\pi_0}[|\Delta^{r_0^*}-\Delta^{\hat r_0}|^2 \mathbb{I}_G]\).
In this section, we establish a connection between this population quantity and the empirical 
performance of the learned policy \(\pi_1\), which is obtained through Empirical Risk Minimization (ERM). 
To achieve this, we leverage classical tools from uniform convergence theory, which allow us to 
translate the ERM guarantee into a high-probability control of the population-level error term. 

\paragraph{The ERM Solution and Its Empirical Loss.}
To simplify notation, we index the pairwise difference operator by a policy \(\pi\):
\[
\Delta^{\pi}(x,y,y') 
:= \beta\log\frac{\pi(y\mid x)}{\pi_0(y\mid x)} 
- \beta\log\frac{\pi(y'\mid x)}{\pi_0(y'\mid x)}.
\]
From the reward identity in Eq.~\ref{eq:r0-sharp-identity}, we note that 
\(\Delta^{r_0^*} = \Delta^{\pi_{1,\beta}^*}\).
The policy \(\pi_1\) is defined as the ERM solution over the class \(\Pi\), minimizing 
the squared loss with respect to the oracle \(\pi_{1,\beta}^*\). 
Since the oracle belongs to the hypothesis class, \(\pi_{1,\beta}^*\in\Pi\), 
the empirical risk of \(\pi_1\) must be no larger than that of \(\pi_{1,\beta}^*\), which is zero. 
Formally,
\begin{equation}\label{eq:erm-zero}
\sum_{(x,y,y')\in\mathcal{D}_{0}}
\big(\Delta^{\pi_1}(x,y,y')-\Delta^{\pi_{1,\beta}^*}(x,y,y')\big)^2
\;\le\; 
\min_{\pi\in\Pi}
\sum_{(x,y,y')\in\mathcal{D}_{0}}
\big(\Delta^{\pi}-\Delta^{\pi_{1,\beta}^*}\big)^2
\;\le\; 0.
\end{equation}

\paragraph{A Uniform Convergence Bound via Bernstein’s Inequality.}
To generalize the zero empirical loss to the population setting, we appeal to 
Bernstein’s inequality together with a union bound over the policy class. 
Let \(\mathcal{D}_0=\{(x_i,y_i,y_i')\}_{i=1}^n\) be i.i.d.\ samples with 
\(x_i\sim\mu\) and \(y_i,y_i'\sim\pi_0(\cdot\mid x_i)\).  

To apply concentration inequalities, we require bounded variables. 
For any policy \(\pi\in\Pi\), we define the clipped squared loss
\[
Z_i(\pi) :=
\big(\Delta^{\pi}(x_i,y_i,y_i')-\Delta^{\pi_{1,\beta}^*}(x_i,y_i,y_i')\big)^2
\cdot \mathbb{I}\big\{|\Delta^{\pi}(x_i,y_i,y_i')|\le B,\; 
|\Delta^{\pi_{1,\beta}^*}(x_i,y_i,y_i')|\le B\big\},
\]
where the clipping threshold \(B=B_{n,\rho}\) will be chosen later. 
The empirical and population losses are then
\[
\hat L_n(\pi) := \frac{1}{n}\sum_{i=1}^n Z_i(\pi),
\qquad 
L(\pi) := \mathbb{E}[Z_i(\pi)].
\]
On the clipping event, both differences are bounded by \(B\), hence 
\(|\Delta^{\pi}-\Delta^{\pi_{1,\beta}^*}|\le 2B\).
Thus the clipped loss is uniformly bounded as
\begin{equation}\label{eq:Mbound}
0 \;\le\; Z_i(\pi) \;\le\; M := (2B)^2.
\end{equation}

Because \(\pi_{1,\beta}^*\in\Pi\) and by construction 
\(Z_i(\pi_{1,\beta}^*)\equiv 0\),
the ERM property Eq.~\ref{eq:erm-zero} implies
\[
\hat L_n(\pi_1)\;\le\;\hat L_n(\pi_{1,\beta}^*)=0.
\]
Since the clipped loss is always nonnegative, it follows that
\begin{equation}\label{eq:emp-zero}
\hat L_n(\pi_1)=0.
\end{equation}

For any fixed \(\pi\), the variables \(\{Z_i(\pi)\}_{i=1}^n\) are i.i.d.\ and 
bounded in \([0,M]\) by Eq.~\ref{eq:Mbound}. 
Bernstein’s inequality then yields, for all \(t>0\),
\[
\mathbb{P} \ \big(L(\pi)-\hat L_n(\pi)\ge t\big)
\;\le\;
\exp\!\left(
-\frac{n t^2}{2\,\mathrm{Var}(Z_i(\pi))+ \frac{2}{3} M t}
\right).
\]
Using the variance bound \(\mathrm{Var}(Z_i(\pi))\le M L(\pi)\), 
a standard fixed–point calculation leads to the empirical Bernstein form:
with probability at least \(1-\delta\),
\begin{equation}\label{eq:bern-one}
L(\pi)\ \le\ 2\,\hat L_n(\pi)\ +\ c\,\frac{M\log(1/\delta)}{n},
\end{equation}
for a universal constant \(c>0\).

Applying Eq.~\ref{eq:bern-one} with \(\delta=\rho/|\Pi|\) and union–bounding over 
all \(\pi\in\Pi\), we obtain that with probability at least \(1-\rho\),
\begin{equation}\label{eq:bern-unif}
\forall\,\pi\in\Pi:\qquad
L(\pi)\ \le\ 2\,\hat L_n(\pi)\ +\ c\,\frac{M\log(|\Pi|/\rho)}{n}.
\end{equation}

Since Eq.~\ref{eq:bern-unif} holds uniformly over \(\Pi\), 
it applies in particular to the ERM policy \(\pi_1\). 
Combining with Eq.~\ref{eq:emp-zero} yields
\[
L(\pi_1)\ \le\ c\,\frac{M\log(|\Pi|/\rho)}{n}.
\]
Substituting \(M=(2B)^2\) gives the population-level bound
\begin{equation}\label{eq:pop-erm-bound}
L(\pi_1)\ \lesssim\ \frac{B^2\log(|\Pi|/\rho)}{n}.
\end{equation}

Finally, we specify the clipping level
\[
B_{n,\rho} := \log\!\Big(\frac{2n\,C_{\mathrm{loss}}\,|\Pi|}{\rho}\Big).
\]
This selection is motivated by the tail-probability bound provided in Lemma J.1 of \cite{huangself}.
which ensures that the probability of clipping violations is sufficiently small, 
of order at most \(\rho/(n|\Pi|)\). 
With this selection, we conclude that
\begin{equation}\label{eq:pop-square}
\mathbb E_{\pi_0,\pi_0}\!\Big[\big(\Delta^{\pi_1}-\Delta^{\pi_{1,\beta}^*}\big)^2
\mathbb{I}\{|\Delta^{\pi_1}|\le B_{n,\rho},\;|\Delta^{\pi_{1,\beta}^*}|\le B_{n,\rho}\}\Big]
\;\lesssim\; \frac{B_{n,\rho}^{\,2}\log(|\Pi|/\rho)}{n}.
\end{equation}
This completes the transition from the empirical ERM guarantee to 
a high-probability population-level error bound.

\subsection{Finalizing the Performance Gap and Deriving the Suboptimality Bound}

In this concluding part of the single-step analysis, we integrate the statistical error bound derived in the previous section with the main performance gap inequality. This synthesis yields a concrete, high-probability guarantee on the policy’s performance, which we then translate into a bound on the probability mass that the policy assigns to the optimal actions.

\paragraph{Simplifying the Performance Gap Bound.}
We begin by substituting the statistical error term $\varepsilon_{\mathrm{stat}}$ from Eq.~\ref{eq:pop-square} into the performance gap inequality Eq.~\ref{eq:gap-intermediate}. This yields
\begin{equation}
J_0(\pi_1^*)-J_0(\pi_1)
\;\lesssim\;
\big(C_{\pi_1^*}^{1/2}+C_{\pi_1}^{1/2}\big)\,\varepsilon_{\mathrm{stat}}
\;+\;\big(C_{\pi_1^*}^{1/2}+C_{\pi_1}^{1/2}\big)\log(C_{\mathrm{loss}})\,\rho^{1/4}
\;+\;\beta D_{\mathrm{KL}}(\pi_1^*\Vert\pi_0).
\end{equation}

To simplify this expression, we upper bound the policy-dependent coefficients with problem-level constants. Specifically, by Lemma 4.1 in \cite{huangself}, we have
\[
C_{\pi_1^*}\le\kappa_0, \qquad C_{\pi_1}\le \alpha\kappa_0.
\]
Moreover, we control the KL divergence by the logarithm of the concentrability coefficient:
\[
D_{\mathrm{KL}}(\pi_1^*\Vert\pi_0)\;\le\;\log C_{\pi_1^*}\;\le\;\log\kappa_0.
\]
Substituting these bounds gives
\[
J_0(\pi_1^*)-J_0(\pi_1)
\;\lesssim\;
\big(\kappa_0^{1/2}+\alpha\kappa_0^{1/2}\big)\,\varepsilon_{\mathrm{stat}}
\;+\;\big(\kappa_0^{1/2}+\alpha\kappa_0^{1/2}\big)\log(C_{\mathrm{loss}})\,\rho^{1/4}
\;+\;\beta\log(\kappa_0).
\]

For clarity, and because $\alpha\kappa_0$ is typically of the same order as $\kappa_0$ in single-step analysis, we further simplify to
\[
J_0(\pi_1^*)-J_0(\pi_1)
\;\lesssim\;
\kappa_0^{1/2}\,\varepsilon_{\mathrm{stat}}
\;+\;\kappa_0^{1/2}\log(C_{\mathrm{loss}})\,\rho^{1/4}
\;+\;\beta\log(\kappa_0).
\]

To obtain a cleaner form, we balance the statistical error term and the tail-event term by setting
\[
\rho' := \rho \wedge \bigg(\frac{\varepsilon_{\mathrm{stat}}}{\log C_{\mathrm{loss}}}\bigg)^4.
\]
This ensures that the second term does not dominate the first. The inequality then reduces to
\begin{equation}
J_0(\pi_1^*)-J_0(\pi_1)
\;\lesssim\;
\kappa_0^{1/2}\,\varepsilon_{\mathrm{stat}}
\;+\;\beta\log(\kappa_0).
\end{equation}

\paragraph{Bounding the Suboptimality Probability.}
We now translate this bound into a high-probability guarantee on the probability mass assigned to optimal actions. Substituting the above inequality into Lemma~\ref{lemma:prob_bound_initial} (see also Eq.~\ref{proof_key_1}), we obtain
\begin{align}
\mathbb P_{x\sim\mu}\!\big[\pi_1(y_0^*(x)\mid x)\le 1-\delta\big]
&\le \frac{J_0(\pi_1^*)-J_0(\pi_1)}{\gamma\,\delta}\notag\\
&\lesssim \frac{1}{\gamma\delta}\Big(\kappa_0^{1/2}\,\varepsilon_{\mathrm{stat}}
\;+\;\beta\log(\kappa_0)\Big).
\end{align}

Recalling that the statistical error satisfies
\[
\varepsilon_{\mathrm{stat}}^2 \;\lesssim\; \frac{B_{n,\rho}^2 \log(|\Pi|/\rho)}{n},
\qquad
B_{n,\rho}\;\asymp \;\log\!\big(n|\Pi|\rho^{-1}\big),
\]
we conclude
\begin{equation}\label{proof_40}
\mathbb P_{x\sim\mu}\!\big[\pi_1(y_0^*(x)\mid x)\le 1-\delta\big]
\;\lesssim\; \frac{1}{\gamma \delta}\Big(\kappa_0^{1/2}\,\log(n|\Pi|\rho^{-1})\,n^{-1/2}
\;+\;\beta\log(\kappa_0)\Big).
\end{equation}
This provides a direct guarantee on the frequency with which the learned policy under-weights the oracle’s optimal action.

\paragraph{Extending the Bound to the Policy’s Own Optimal Action.}
For practical purposes, it is natural to bound the event in which the learned policy’s \emph{own} maximizer receives insufficient probability mass. Define
\[
y_1^*(x) := \arg\max_{y\in\mathcal Y}\pi_1(y\mid x).
\]
By construction,
\[
\pi_1(y_1^*(x)\mid x)\;\ge\;\pi_1(y_0^*(x)\mid x).
\]
Hence, whenever the event $\{\pi_1(y_1^*(x)\mid x)\le 1-\delta\}$ occurs, it must also be the case that $\pi_1(y_0^*(x)\mid x)\le 1-\delta$. In set-theoretic form,
\[
\{x:\pi_1(y_1^*(x)\mid x)\le 1-\delta\}\;\subseteq\;\{x:\pi_1(y_0^*(x)\mid x)\le 1-\delta\}.
\]
Taking probabilities and applying Eq.~\ref{proof_40} yields
\begin{align}
\mathbb P_{x\sim\mu}\!\big[\pi_1(y_1^*(x)\mid x)\le 1-\delta\big]
\;&\le\;
\mathbb P_{x\sim\mu}\!\big[\pi_1(y_0^*(x)\mid x)\le 1-\delta\big]\notag \\
\;&\lesssim\;
\frac{1}{\gamma \delta}\Big(\kappa_0^{1/2}\,\log(n|\Pi|\rho^{-1})\,n^{-1/2}
\;+\;\beta\log(\kappa_0)\Big).\label{proof_41}
\end{align}
This completes the one-step analysis. The derived inequality provides a high-probability guarantee that the learned policy $\pi_1$ assigns sufficient mass to its own optimal actions, with explicit dependence on the sample size $n$, the policy class size $|\Pi|$, and the policy condition number $\kappa_0$.

\subsection{ Recursive Bound for the Policy Condition Number}

The analysis so far has provided a crucial single-step guarantee on the policy's suboptimality. However, to understand the long-term behavior of the self-rewarding process, it is essential to move beyond a single iteration and demonstrate that the policy’s quality, as measured by the policy condition number, does not degrade over successive generations of training. An uncontrolled growth in the policy condition number would signify \emph{model collapse}, where the model becomes increasingly narrow and overconfident in its own outputs, thereby losing its ability to generalize. The central argument of our framework rests on proving that the policy condition number at one step is controlled by, and does not grow excessively relative to, that of the previous step. This stability property is formalized in the following lemma.

\begin{lemma}[One-Step Policy Condition Number Recurrence]\label{lemma:coverage_recurrence}
Let $\kappa_{t-1}$ denote the policy condition number at step $t-1$. Assume that for any policy $\pi_t$, the probabilities assigned to any action are uniformly lower-bounded by a constant $c>0$. Then, the policy condition number at step $t$, denoted $\kappa_t$, satisfies the recursive inequality
\begin{align}
\kappa_t
\;\lesssim\;
1+\frac{1}{c}
+\Big(1+\frac{1}{c}\Big)\frac{\log(n|\Pi|\rho^{-1})}{\gamma \delta \sqrt{n}}\sqrt{\kappa_{t-1}}
+\Big(1+\frac{1}{c}\Big)\frac{\beta}{\gamma \delta}\log\!\big(\kappa_{t-1}\big).
\label{proof_coverage_recursive_main}
\end{align}
\end{lemma}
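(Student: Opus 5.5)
The plan is to use the tail-integral (layer-cake) representation of the policy condition number, as in Step~2 of the proof sketch, and feed into it the single-step guarantee Eq.~\ref{proof_41}, applied to the round $t-1\to t$ with $\pi_{t-1}$ as reference.

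\emph{Step 1: the tail integral.} Write $Z_t(x):=1/\pi_t(y_t^*(x)\mid x)\ge 1$, so that
\[
\kappa_t=\int_0^\infty \mathbb{P}_{x\sim\mu}\big(Z_t(x)\ge u\big)\,du\le 1+\int_1^{\infty}\mathbb{P}_{x\sim\mu}\big(Z_t(x)\ge u\big)\,du .
\]
By the standing assumption, every probability under $\pi_t$ is at least $c$, so $Z_t(x)\le 1/c$ surely. The integrand therefore vanishes for $u>1/c$, and the integral ranges over a bounded interval of length at most $1/c$.

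\emph{Step 2: split the range of $u$.} Fix the threshold $u_0:=1/(1-\delta)$. For $u\in[1,u_0]$ I bound the integrand trivially by $1$, which gives a constant contribution. I absorb it into the ``$1$'' term, treating $\delta$ as fixed inside $\lesssim$.

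For $u\in(u_0,1/c]$, the event $\{Z_t\ge u\}$ is contained in $\{\pi_t(y_t^*(x)\mid x)\le 1-\delta\}$. Eq.~\ref{proof_41}, with indices shifted from $(0,1)$ to $(t-1,t)$, then bounds its probability by
\[
\varepsilon_{t-1}:=\frac{1}{\gamma\delta}\Big(\sqrt{\kappa_{t-1}}\,\log(n|\Pi|\rho^{-1})\,n^{-1/2}+\beta\log\kappa_{t-1}\Big),
\]
uniformly in $u$. Integrating over an interval of length at most $1/c$ contributes at most $\tfrac{1}{c}\varepsilon_{t-1}$.

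\emph{Step 3: collect terms.} Bounding the leftover contribution of order $\varepsilon_{t-1}$ (coming from the $[1,u_0]$ piece when one prefers not to absorb it into the constant) together with the previous piece gives
\[
\kappa_t\lesssim 1+\tfrac1c+\big(1+\tfrac1c\big)\varepsilon_{t-1}.
\]
The constant $1/c$ arises because the integrand is at most $1$ on the bounded range. This is exactly Eq.~\ref{proof_coverage_recursive_main}.

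\emph{Main obstacle.} The delicate point is justifying the use of Eq.~\ref{proof_41} at round $t$. Its derivation used $\pi_0$ as the reference, realizability of the sharpened target $\pi_{t,\beta}^*\in\Pi$ (Assumption~\ref{assumption}), and a fresh dataset drawn from $\pi_{t-1}$. One must check that conditioning on the past rounds keeps the concentration valid: the data at round $t$ are independent given $\pi_{t-1}$, so the union bound still costs $\log(|\Pi|/\rho)$ per round, which the logarithm can absorb with $\rho\to\rho/T$. One must also check that the margin $\gamma$ and the bound $\pi_t(y^*_{t-1}\mid x)\le\pi_t(y_t^*\mid x)$ carry over, and these hold by the definitions of $\gamma$ and $y_t^*$.
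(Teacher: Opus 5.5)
Your proposal is correct and takes essentially the paper's route: you write $\kappa_t$ as a layer-cake integral over the bounded range $[1,1/c]$ and control the tail probabilities with the re-indexed single-step bound Eq.~\ref{proof_41}. The only difference is that you split at one fixed threshold $u_0=1/(1-\delta)$ and apply that bound uniformly beyond it, whereas the paper substitutes $\delta=1-1/t$ pointwise and integrates $\min\{1,At/(t-1)\}$ exactly before relaxing to the same form.

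Two small points. The piece over $[1,u_0]$ is at most $1/c-1$ for every $\delta$, so you do not need to treat $\delta$ as fixed. Indeed $\kappa_t\le 1/c$ holds outright under the uniform lower bound $c$, so the stated inequality is immediate. Also, the remark in your Step~3 about a leftover contribution of order $\varepsilon_{t-1}$ from that piece is spurious, since that piece is bounded only by its length, but it is harmless.
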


\begin{proof}
We illustrate the argument for the inductive step from $t=0$ to $t=1$; the general case follows by re-indexing.

\subsubsection{Tail-integral representation.}
We recall that the policy condition number is defined as
\begin{equation}\label{eq:cov_def}
\kappa_1
= \mathbb{E}_{x \sim \mu} \left[ \frac{1}{\pi_1(y^*_1(x)\mid x)} \right],
\end{equation}
where
\[
y_1^*(x) := \arg\max_{y\in \mathcal Y}\pi_1(y\mid x).
\]
Introducing the random variable
\[
Z(x) := \frac{1}{\pi_1(y_1^*(x)\mid x)},
\]
we see that $\kappa_1=\mathbb{E}[Z(x)]$. A useful way to analyze such expectations is through the tail-integral (or layer-cake) representation:
\begin{equation}\label{eq:tail-integral}
\kappa_1= \mathbb{E}_{x\sim\mu}[Z(x)]
= \int_0^{\infty}\mathbb{P}_{x\sim\mu}\!\big(Z(x)\ge t\big)\,dt.
\end{equation}

By assumption, the policy assigns at least $c>0$ probability to every action, i.e.\ $\pi_1(y\mid x)\ge c$ for all $(x,y)$. Consequently, $Z(x)$ is bounded:
\[
1 \;\le\; Z(x) \;\le\; \frac{1}{c}.
\]
This bounded support implies the following trivial bounds on the tail probability:
\[
 \mathbb{P}\bigl(Z(x)\ge t\bigr)
=
\begin{cases}
1, & \quad 0 \;\le\; t \;<\; 1,\\[6pt]
0, & \quad t \;>\;\frac{1}{c}.
\end{cases}
\]

Substituting these observations into Eq.~\ref{eq:tail-integral}, we split the expectation into three parts:
\begin{align}
\mathbb{E}[Z(x)]
&=\int_0^{1}\! \underbrace{\mathbb{P}(Z(x)\ge t)}_{=1}\,dt
+\int_1^{1/c}\mathbb{P}(Z(x)\ge t)\,dt
+\int_{1/c}^{\infty}\underbrace{\mathbb{P}(Z(x)\ge t)}_{=0}\,dt \notag \\
&= 1+\int_1^{1/c}\mathbb{P}(Z(x)\ge t)\,dt.
\label{eq:integral_simplified}
\end{align}
Thus, the analysis of $\kappa_1$ reduces to bounding the integral of the non-trivial tail probabilities over the interval $[1,1/c]$.

\subsubsection{Relating the tail to suboptimality.}
For $t\ge 1$, the event $\{Z(x)\ge t\}$ is equivalent to
\[
\pi_1(y_1^*(x)\mid x)\le \frac{1}{t}.
\]
Writing $\delta=1-\frac{1}{t}$, this becomes $\pi_1(y_1^*(x)\mid x)\le 1-\delta$. From Eq.~\ref{proof_41}, we know
\begin{equation}
\mathbb{P}_{x\sim\mu}\!\big(\pi_1(y_1^*(x)\mid x)\le 1-\delta\big)
\;\lesssim\;
\frac{1}{\gamma \delta}\Big(\kappa_0^{1/2}\frac{\log(n|\Pi|\rho^{-1})}{\sqrt{n}}
+ \beta \log(\kappa_0)\Big).
\end{equation}
Substituting $\delta=1-\frac{1}{t}$ gives
\begin{equation}
\mathbb{P}(Z(x)\ge t) \;\lesssim\; \frac{t}{t-1}\cdot
\frac{1}{\gamma}\Big(\kappa_0^{1/2}\frac{\log(n|\Pi|\rho^{-1})}{\sqrt{n}}
+ \beta \log(\kappa_0)\Big).
\end{equation}
Let
\[
A := \frac{1}{\gamma}\Big(\kappa_0^{1/2}\frac{\log(n|\Pi|\rho^{-1})}{\sqrt{n}}
+ \beta \log(\kappa_0)\Big).
\]
Then
\begin{equation}
\mathbb{P}(Z(x)\ge t)\;\lesssim\; \frac{A}{t-1}t.
\end{equation}
To respect probability upper bounds, we refine this as
\begin{equation}
\mathbb{P}(Z(x)\ge t)\;\le\;\min\Big\{1,\,\frac{A}{t-1}t\Big\}.
\end{equation}

\subsubsection{Derivation of an Integral Bound on the Policy Condition Number}

Then, We provide a complete and self–contained derivation of the integral that upper–bounds the one–step policy condition number $\kappa_1$. As shown earlier, defining $Z(x):=1/\pi_1\!\big(y_1^*(x)\mid x\big)$ and using the tail–integral identity yields
\begin{equation}\label{eq:cov1-integral-start}
\kappa_1
\;=\; \mathbb{E}[Z(x)]
\;\le\;
1+\int_{1}^{\frac{1}{c}} \min\bigl\{1,\;B(t)\bigr\}\,dt,
\qquad
B(t)\ :=\ A\,\frac{t}{t-1},
\end{equation}
where $c\in(0,1]$ is the uniform lower bound on action probabilities and $A\in(0,1)$ is the parameter that emerges from the suboptimality bound (its explicit form is given in the main text; here we only use that $A<1$). The remainder of the argument is a careful evaluation of the integral in Eq.~\ref{eq:cov1-integral-start}.

\paragraph{Step 1: Identifying the transition point.}
The integrand $\min\{1,B(t)\}$ switches behavior at the unique $t_0>1$ solving $B(t_0)=1$. Solving
\[
A\,\frac{t_0}{t_0-1}=1
\quad\Longleftrightarrow\quad
A t_0=t_0-1
\quad\Longleftrightarrow\quad
t_0(1-A)=1,
\]
we obtain the explicit transition point
\begin{equation}\label{eq:t0}
t_0=\frac{1}{1-A},
\end{equation}
which is well defined under the nontrivial regime $A<1$. For $t\in[1,t_0]$ we have $B(t)\ge 1$ and hence $\min\{1,B(t)\}=1$, while for $t\in[t_0,1/c]$ we have $B(t)\le 1$ and hence $\min\{1,B(t)\}=B(t)$.

\paragraph{Step 2: Decomposing the integral.}
Using the transition point Eq.~\ref{eq:t0}, we split
\begin{equation}\label{eq:split}
\int_{1}^{\frac{1}{c}} \min\{1,B(t)\}\,dt
=\int_{1}^{t_0} 1\,dt \;+\; \int_{t_0}^{\frac{1}{c}} B(t)\,dt
= (t_0-1)\;+\;\int_{t_0}^{\frac{1}{c}} A\,\frac{t}{t-1}\,dt.
\end{equation}
The first term evaluates immediately to $t_0-1$. It remains to compute the second term.

\paragraph{Step 3: Evaluating the second term via an antiderivative.}
We simplify the integrand by the identity $\frac{t}{t-1}=1+\frac{1}{t-1}$, and integrate term–wise:
\[
\int \frac{t}{t-1}\,dt
=\int\!\Big(1+\frac{1}{t-1}\Big)\,dt
= t + \ln|t-1|
\;=:\;F_0(t).
\]
Therefore,
\begin{align*}
\int_{t_0}^{\frac{1}{c}} A\,\frac{t}{t-1}\,dt
&= A\big[F_0(1/c)-F_0(t_0)\big] \\
&= A\Big[\Big(\frac{1}{c}-t_0\Big)+\ln\Big(\frac{1/c-1}{t_0-1}\Big)\Big],
\end{align*}
where we used $\ln a-\ln b=\ln(a/b)$ and the fact that $t_0>1$ ensures $t_0-1>0$.

\paragraph{Step 4: Assembling the bound and simplifying.}
Combining the pieces from Eq.~\ref{eq:split}, we obtain
\begin{align}
\kappa_1
&\le 1 + (t_0-1) + A\Big(\frac{1}{c}-t_0\Big) + A\ln\!\Big(\frac{1/c-1}{t_0-1}\Big).\label{eq:cov1-pre-simplify}
\end{align}
We now substitute $t_0=\frac{1}{1-A}$ from Eq.~\ref{eq:t0}. First, the linear term simplifies to
\[
t_0(1-A) \;=\; \frac{1}{1-A}\,(1-A)=1.
\]
Second, $t_0-1=\frac{1}{1-A}-1=\frac{A}{1-A}$. Using these identities in Eq.~\ref{eq:cov1-pre-simplify} yields
\begin{align*}
\kappa_1
&\le 1 + \frac{A}{c} + A\ln\!\Big(\frac{(1/c-1)}{A/(1-A)}\Big)
= 1 + \frac{A}{c} + A\ln\!\Big(\frac{(1-c)(1-A)}{A\,c}\Big).
\end{align*}
We have thus established the explicit integral evaluation:
\begin{equation}\label{eq:cov1-final-explicit}
\kappa_1
\;\le\;
1 + \frac{A}{c}
\;+\;
A\ln\!\Big(\frac{(1-c)(1-A)}{A\,c}\Big)
\end{equation}

\emph{Interpretation and a relaxed form.}
For small $A$ (the regime of interest), $\ln(1-A)\approx 0$ and the contribution of $\ln(1/Ac)$ dominates. A convenient relaxation is then
\[
\kappa_1
\;\lesssim\;
1 \;+\; \frac{A}{c} \;+\; A\ln\!\Big(\frac{1}{Ac}\Big)
\;\lesssim\;
1+\frac{1}{c} + \Big(1+\frac{1}{c}\Big)A,
\]
which is often more transparent in subsequent recursive arguments. (The exact definition of $A$—a function of $\gamma,\delta,n,|\Pi|,\rho,\beta,$ and $\kappa_0$—is given in the main text and can be substituted as needed.)

\paragraph{Step 5: Final Substitution and the Recursive Relation}

We now return to the simplified linearized bound obtained in the previous section and explicitly substitute the full definition of the constant $A$.  
Recall that $A$ was defined in terms of the statistical error and the KL regularization component, yielding
\[
A \;=\; \frac{1}{\gamma \delta}\left(
\kappa_0^{\frac{1}{2}} \cdot 
\frac{\log(n|\Pi|\rho^{-1})}{\sqrt{n}}
\;+\; \beta \log\kappa_0
\right).
\]
Substituting this expression back into the approximate linear coverage bound, we obtain
\begin{equation}\label{eq:cov1-recursive}
\kappa_1\;\lesssim\; 1 + \frac{1}{c}
+ \Bigg(1+\frac{1}{c}\Bigg)
\left[
\frac{1}{\gamma \delta}
\left(
\kappa_0^{\frac{1}{2}}\,
\frac{\log(n|\Pi|\rho^{-1})}{\sqrt{n}}
+\beta\log(\kappa_0)
\right)
\right].
\end{equation}

Expanding the terms inside the brackets distributes the dependence on the statistical complexity and the KL penalty, leading to a more transparent form of the recursion:
\begin{align}
\kappa_1
&\;\lesssim\; 1+\frac{1}{c}
+ \Bigg(1+\frac{1}{c}\Bigg)\frac{\log(n|\Pi|\rho^{-1})}{\gamma \delta \sqrt{n}} \sqrt{\kappa_0}
+ \Bigg(1+\frac{1}{c}\Bigg)\frac{\beta}{\gamma \delta}\log(\kappa_0).
\label{eq:cov1-final}
\end{align}

This recursive inequality explicitly highlights the threefold structure of the coverage update:
(i)~a constant offset $1+1/c$ due to the minimal probability assumption,
(ii)~a term scaling with $\sqrt{\kappa_0}$, which reflects the statistical complexity inherited from the previous step, and
(iii)~a logarithmic dependence on $\kappa_0$ introduced by the KL penalty.

\paragraph{Step 6: Extension to Arbitrary Generation Steps by Induction}

The derivation so far has been presented for the transition from the base step ($t=0$) to the first update ($t=1$).  
However, the underlying reasoning does not rely on any special property of these indices.  
At a conceptual level, the argument merely exploits two facts: (i) the tail–integral identity applies uniformly at each step, and (ii) the suboptimality bound remains valid for every policy $\pi_t$ with coverage coefficient $\kappa_t$.  

Consequently, the same sequence of inequalities can be applied recursively at each generation step.  
By induction, the bound that relates $\kappa_1$ to $\kappa_0$ extends to a general recurrence relation between $\kappa_t$ and $\kappa_{t-1}$ for all $t \geq 1$.  
Formally, we obtain
\begin{equation}\label{eq:covt-final}
\kappa_t \;\lesssim\; 1+\frac{1}{c}
+ \Bigg(1+\frac{1}{c}\Bigg)\frac{\log(n|\Pi|\rho^{-1})}{\gamma \delta \sqrt{n}} \sqrt{\kappa_{t-1}}
+ \Bigg(1+\frac{1}{c}\Bigg)\frac{\beta}{\gamma \delta}\log(\kappa_{t-1}).
\end{equation}

\paragraph{Interpretation.}
This recurrence constitutes a stability guarantee for the policy condition number.  
It demonstrates that each new generation of the policy inherits its complexity from the previous generation in a controlled manner: the square–root scaling prevents explosive growth, while the logarithmic correction encapsulates the effect of KL regularization.  
Therefore, the recursive inequality Eq.~\ref{eq:covt-final} forms the theoretical backbone ensuring that the policy condition number does not diverge across iterations, thereby precluding collapse and preserving generalization capacity throughout the self-rewarding process.

\end{proof}

\subsection{Asymptotic Stability of the Policy Condition Number}

\begin{lemma}[Asymptotic Stability of the Policy Condition Number]
\label{lem:coverage_stability}
Let the sequence of policy condition numbers $\{\kappa_t\}_{t=0}^T$ be governed by the recursive inequality derived in Eq.~\ref{eq:covt-final}. Then, for a sufficiently small learning rate parameter $\beta$ (specifically, $\beta \lesssim n^{-1/2}$), the policy condition number at generation $T$ remains bounded. Moreover, the sequence converges exponentially fast to a finite stable value, ensuring the stability of the self-rewarding process. In particular, the explicit asymptotic bound satisfies
\begin{equation}
\kappa_t \;\lesssim\; \frac{1}{c}
+ \frac{1}{n}\left(\frac{\log(|\Pi|\rho^{-1})}{c \gamma \delta}\right)^2
+ \Bigl(1+\sqrt{nc}\Bigr)^{-T}\kappa_0.
\end{equation}
\end{lemma}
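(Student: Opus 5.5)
We start from the one-step recurrence of Lemma~\ref{lemma:coverage_recurrence}, Eq.~\eqref{eq:covt-final}, and first turn it into a clean scalar recursion. Write
\[
a := 1+\frac{1}{c},\qquad b := \Big(1+\frac{1}{c}\Big)\frac{\log(n|\Pi|\rho^{-1})}{\gamma\delta\sqrt{n}},\qquad e := \Big(1+\frac{1}{c}\Big)\frac{\beta}{\gamma\delta},
\]
so that $\kappa_t \lesssim a + b\sqrt{\kappa_{t-1}} + e\log\kappa_{t-1}$. With $\beta\lesssim n^{-1/2}$, the coefficient $e$ is of the same order as $b$ up to logarithms. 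Since $\log u \le \sqrt{u}$ for $u\ge 1$ (and $\kappa_{t-1}\ge 1$ always), the KL term can be absorbed into the square-root term. This yields $\kappa_t \le a + b'\sqrt{\kappa_{t-1}}$ with $b'\asymp b$, hiding logarithmic factors as in the statement.

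Next we apply AM--GM to the square-root term with a free weight $\lambda>0$:
\[
b'\sqrt{\kappa_{t-1}} \;\le\; \frac{\lambda}{2}\,\kappa_{t-1} + \frac{b'^2}{2\lambda}.
\]
This gives the affine recursion $\kappa_t \le a + \frac{b'^2}{2\lambda} + q\,\kappa_{t-1}$ with $q=\lambda/2$. We choose $\lambda$ so that $q\asymp(1+\sqrt{nc})^{-1}$. Then the constant term is
\[
U \asymp a + b'^2(1+\sqrt{nc}) \;\lesssim\; \frac{1}{c} + \frac{1}{n}\Big(\frac{\log(|\Pi|\rho^{-1})}{c\gamma\delta}\Big)^2(1+\sqrt{nc}).
\]

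Unrolling the affine recursion gives
\[
\kappa_T \le \frac{U}{1-q} + q^T\kappa_0 \;\lesssim\; U + (1+\sqrt{nc})^{-T}\kappa_0,
\]
which has the claimed form $\kappa_T\le U+q^T(\kappa_0-U)$ up to constants. Fixed-point uniqueness and exponential convergence follow because $u\mapsto a+b'\sqrt{u}$ is a contraction on $[1,\infty)$ with Lipschitz constant at most $b'/2\le q$ whenever $b'\lesssim q$. This holds in the regime where $\log(n|\Pi|/\rho)/(\gamma\delta\sqrt{n c})\lesssim 1$, which we impose as the nontrivial regime (the same one in which $A<1$ in the proof of Lemma~\ref{lemma:coverage_recurrence}).

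The main obstacle is reconciling the contraction rate $(1+\sqrt{nc})^{-1}$ with the stated fixed point. A naive AM--GM choice yields $q$ independent of $n$, or inflates $U$ by a factor $\sqrt{nc}$, which is exactly the extra factor that appears in $U$ above. To handle this, we would first try a two-regime argument rather than a single AM--GM bound:
\begin{itemize}
\item When $\kappa_{t-1}\ge U^*$, where $U^*$ is the fixed point of $u=a+b'\sqrt{u}$ (so $U^*\lesssim a+b'^2$, matching the stated $1/c+\frac{1}{n}(\cdot)^2$), linearize the map at $U^*$. The derivative there is $b'/(2\sqrt{U^*})\le b'\sqrt{c}/2$. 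With $b'\asymp(1+1/c)\log(\cdot)/(\gamma\delta\sqrt n)$, this slope is of order $1/\sqrt{nc}$ after absorbing the logarithmic and problem constants, which is the stated $q$.
\item Concavity of the map then gives $\kappa_t-U^*\le q(\kappa_{t-1}-U^*)$ above the fixed point.
\item Once the iterate falls below $U^*$, it stays below $U^*$, since the map is increasing.
\end{itemize}
If the constants do not close cleanly, we would fall back on stating the result with $\lesssim$ absorbing $\log$ and $\gamma,\delta$ factors into $q$, consistent with the paper's conventions.
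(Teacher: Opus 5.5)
Your two-regime argument is essentially the paper's proof. In the paper's notation, $h(u)=M_0+K\sqrt{u}$ is your $a+b'\sqrt{u}$, and the steps are as follows:
\begin{itemize}
\item absorb $\log\kappa_{t-1}$ into $\sqrt{\kappa_{t-1}}$;
\item take the fixed point $U$ of $h$;
\item obtain the contraction factor $q=h'(U)=K/(K+\sqrt{K^2+4M_0})$ above $U$ (the paper uses the mean value theorem with $h'$ decreasing, you use concavity; these are equivalent);
\item use monotonicity of $h$ to keep the iterates at or below $U$ once they fall below it;
\item finally, exactly as the paper does, absorb the factor $\gamma\delta/\log(n|\Pi|\rho^{-1})$ in $q$ to write $q\asymp(1+\sqrt{nc})^{-1}$.
\end{itemize}
You can drop the AM--GM detour, since the two-regime argument does not need it. Drop with it the claim that $b'/2\le q$ holds in the regime $\log(n|\Pi|\rho^{-1})/(\gamma\delta\sqrt{nc})\lesssim 1$. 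That claim is false: $b'\sqrt{nc}$ is of order $\log(n|\Pi|\rho^{-1})/(\gamma\delta\sqrt{c})$, which does not shrink as $n$ grows.
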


\begin{proof}
We analyze the recurrence step by step, beginning with a reformulation and then studying its fixed point and rate of convergence.

\paragraph{Step 1: Compact Form of the Recurrence.}
Let us define the shorthand
\begin{align}
M_0 &:= 1+\frac{1}{c}, & \quad &\text{(baseline constant)} \\
M_1 &:= \Bigl(1+\frac{1}{c}\Bigr)\frac{\log(n|\Pi|\rho^{-1})}{\gamma \delta \sqrt{n}}, & \quad &\text{(square-root coefficient)} \\
M_2 &:= \Bigl(1+\frac{1}{c}\Bigr)\frac{\beta}{\gamma \delta}, & \quad &\text{(logarithmic coefficient)}.
\end{align}
Then the recursive inequality in Eq.~\ref{eq:covt-final} can be rewritten in the compact form
\begin{equation}\label{eq:recurrence_compact}
\kappa_t \;\le\; M_0 + M_1\sqrt{\kappa_{t-1}} + M_2 \log(\kappa_{t-1}).
\end{equation}

\paragraph{Step 2: Bounding the Logarithmic Term.}
Since $\kappa_{t-1}\ge 1$, we apply the inequality $\log x \le \frac{2}{e}\sqrt{x}$ for all $x\ge 1$. Substituting into Eq.~\ref{eq:recurrence_compact} yields
\begin{align}
\kappa_t &\;\le\; M_0 + M_1\sqrt{\kappa_{t-1}} + \frac{2M_2}{e}\sqrt{\kappa_{t-1}} \notag \\
&\;\le\; M_0 + \Bigl(M_1 + \frac{2M_2}{e}\Bigr)\sqrt{\kappa_{t-1}}.
\end{align}
Let $K := M_1 + \frac{2M_2}{e}$, so that the recurrence simplifies to
\begin{equation}
\kappa_t \;\le\; M_0 + K\sqrt{\kappa_{t-1}}.
\end{equation}
This gives a cleaner functional recurrence governed by
\[
h(\kappa) := M_0 + K\sqrt{\kappa}.
\]

\paragraph{Step 3: Fixed Point of the Recurrence.}
The asymptotic behavior is determined by the fixed point $U$ of $h$, which satisfies
\begin{equation}
U = M_0 + K\sqrt{U}.
\end{equation}
Setting $y = \sqrt{U}$ yields the quadratic equation
\[
y^2 - Ky - M_0 = 0.
\]
By the quadratic formula,
\[
y = \frac{K \pm \sqrt{K^2 + 4M_0}}{2}.
\]
Since $y = \sqrt{U}>0$, we choose the positive solution. Hence,
\begin{equation}
\sqrt{U} = \frac{K + \sqrt{K^2 + 4M_0}}{2},
\quad
U = \left(\frac{K + \sqrt{K^2 + 4M_0}}{2}\right)^2.
\label{eq:fixed_point_U}
\end{equation}

\paragraph{Step 4: Convergence Rate via Contraction.}
We now analyze the rate at which the recurrence converges to its fixed point $U$.  
Recall that the recurrence is governed by
\[
\kappa'_t = h(\kappa'_{t-1}), \quad \text{with } h(\kappa) = M_0 + K\sqrt{\kappa}.
\]
Since $h$ is monotone increasing and satisfies $h(\kappa) < \kappa$ for all $\kappa > U$, the sequence $\{\kappa'_t\}$, when initialized from $c > U$, is monotonically decreasing and bounded below by $U$. This ensures that $\{\kappa'_t\}$ converges to $U$.  

To quantify the convergence rate, we invoke the Mean Value Theorem. For any $\kappa > U$, there exists some $\xi \in (U,\kappa)$ such that
\begin{equation}
    |h(\kappa) - U| = |h(\kappa) - h(U)| = |h'(\xi)|\,|\kappa-U|.
\end{equation}
The derivative of $h$ is given by
\[
h'(\kappa) = \frac{K}{2\sqrt{\kappa}}.
\]
Since $h'(\kappa)$ is a decreasing function of $\kappa$, its supremum on the interval $[U,\infty)$ is attained at $\kappa=U$. Hence, the contraction factor is
\begin{equation}\label{eq:q_contraction}
    q := \sup_{\kappa \ge U} h'(\kappa) = h'(\kappa) = \frac{K}{2\sqrt{U}}.
\end{equation}
Substituting the explicit expression of $U$ from Eq.~\ref{eq:fixed_point_U}, we obtain
\begin{align}
    q = \frac{K}{K + \sqrt{K^2 + 4M_0}}.\label{proof_54}
\end{align}
It is immediate that $0 < q < 1$, which confirms that $h$ is a contraction mapping on $[U,\infty)$. Consequently, the sequence $\{\kappa'_t\}$ converges geometrically to its fixed point $U$.  

Unrolling the recurrence, we arrive at the inequality
\begin{equation}
    |\kappa'_{t} - U| \le q |\kappa'_{t-1} - U| \quad \implies \quad |\kappa'_{T} - U| \le q^T |c - U|,
\end{equation}
which provides an explicit bound on the deviation of $\kappa'_T$ from its fixed point $U$. This establishes not only convergence but also the geometric rate of convergence governed by the contraction factor $q$.

\paragraph{Step 5: Explicit Upper Bound for $\kappa_t$.}
Since the original sequence $\{\kappa_t\}$ is pointwise dominated by the auxiliary sequence $\{\kappa'_t\}$, we may use the contraction analysis of Step~4 to bound $\kappa_t$.  
Because $\{\kappa'_t\}$ is monotonically decreasing from its initialization $\kappa'_0 =\kappa_0$ towards the fixed point $U$, we have
\[
\kappa'_T - U = |\kappa'_T - U|.
\]
Therefore, an explicit upper bound for the original policy condition number after $T$ generations is
\begin{equation}\label{eq:CT_upper}
  \,\kappa_t \;\le\; U + q^T (c - U)\,,
\end{equation}
where $U$ is the asymptotic fixed point defined in Eq.~\ref{eq:fixed_point_U}, and $q$ is the geometric contraction factor derived in Eq.~\ref{proof_54}.  

This inequality shows that even if the initial coverage $\kappa_0$ is large, the sequence converges exponentially fast towards a finite stable value $U$. Thus the self-rewarding process is inherently stable and avoids uncontrolled growth of the policy condition number.

Expanding the bound more explicitly, we obtain
\begin{align}
    \kappa_T
    &\le U + q^T (\kappa_0 - U) \notag \\
    &\le (1-q^T)U + q^T \kappa_0 \notag \\
    &= \left(\frac{K + \sqrt{K^2 + 4M_0}}{2}\right)^2 
       + \left(\frac{K}{K + \sqrt{K^2 + 4M_0}}\right)^T \kappa_0, \label{eq:CT_expanded}
\end{align}
where we have substituted the closed-form expressions for $U$ and $q$.  

To further interpret this result, we approximate the asymptotic bound $U$ for large $n$.  
Recalling that $K \approx M_1+M_2$, and ignoring higher-order cross-terms, we obtain
\begin{equation}
    U \;\approx\; M_0 + M_1^2 + M_2^2.
\end{equation}
Hence the recursive bound simplifies to
\begin{equation}\label{eq:CT_final}
    \kappa_T \;\lesssim\; M_0 + M_1^2 + M_2^2 
    + \left(\frac{1}{1+\sqrt{M_0/(M_1^2+M_2^2)}}\right)^T \kappa_0.
\end{equation}

Equation~Eq.~\ref{eq:CT_final} makes explicit the decomposition of the bound:  
the terms $M_0$, $M_1^2$, and $M_2^2$ determine the asymptotic ceiling, while the multiplicative factor $q^T$ ensures exponential decay of the dependence on the initial coverage $\kappa_0$.  
This completes the derivation of the explicit upper bound for $\kappa_t$.

\paragraph{Step 6: Scaling Analysis.}
Choosing $\beta \lesssim n^{-1/2}$ balances the scaling of $M_1$ and $M_2$, giving
\[
U \;\lesssim\; \frac{1}{c}
+ \frac{1}{n}\Biggl(\frac{\log(n|\Pi|\rho^{-1})}{c\gamma \delta}\Biggr)^2,
\]
and the contraction factor simplifies to
\[
q = \frac{K}{K+\sqrt{K^2+4M_0}} \;\approx\; \frac{1}{1+\sqrt{nc}}.
\]
Hence,
\begin{equation}\label{proof_58}
\kappa_t \;\lesssim\; \frac{1}{c}
+ \frac{1}{n}\left(\frac{\log(|\Pi|\rho^{-1})}{c\gamma \delta}\right)^2
+ \Bigl(1+\sqrt{nc}\Bigr)^{-T}\kappa_0.
\end{equation}
This establishes both the boundedness and exponential convergence of the policy condition number sequence.
\end{proof}

\subsection{Conclusion of the Proof: Bounding the Final Policy's Tail Probability.}
Having established in Lemma \ref{lem:coverage_stability} that the policy condition number $\kappa_t$ is asymptotically stable and converges exponentially to a finite upper bound, we now translate this stability result into a final guarantee on the predictive confidence of the learned policy. Specifically, we seek to bound the probability that the policy $\pi_T$ assigns insufficient mass to its own optimal response $y_T^*(x)$ after $T$ self-rewarding iterations.

From the preceding derivation, we have for any $T \ge 1$,
\begin{equation}\label{eq:tail-prob}
 \mathbb P_{x\sim\mu}\!\big[\pi_T(y^*_T(x)\mid x)\le 1-\delta\big]
 \;\lesssim\; \frac{1}{\gamma \delta}\left(
 \kappa_{t-1}^{1/2}\,\frac{\log(n|\Pi|\rho^{-1})}{\sqrt{n}}
 + \beta\log(\kappa_{t-1})
 \right).
\end{equation}

The key step is to substitute the asymptotic bound for $\kappa_{t-1}$ obtained in Eq.~\ref{proof_58} into this inequality. Recall that the policy condition number decomposes into a \emph{stable part}, dominated by $1/c$ for large $n$, and a \emph{transient part} that decays exponentially with $T-1$. 

\medskip
\noindent\textbf{Step 1 (Square-root term).}  
Applying the inequality $(a+b)^{1/2} \le \sqrt{a}+\sqrt{b}$ to the decomposition of $\kappa_{t-1}$, we have
\begin{align}
\kappa_{t-1}^{1/2}
&\;\lesssim\;\left(\frac{1}{c}+\frac{1}{n}\Big(\frac{\log(|\Pi|\rho^{-1})}{\gamma\delta}\Big)^2\right)^{1/2}
+\left((1+\sqrt{nc})^{-(T-1)}\kappa_0\right)^{1/2}\notag\\
&\;\lesssim\;\frac{1}{\sqrt{c}} \;+\;(1+\sqrt{nc})^{-\frac{T-1}{2}}\sqrt{\kappa_0}.
\end{align}
For sufficiently large $n$, the second-order term $\frac{1}{n}(\cdot)^2$ is negligible compared to $1/c$.

\medskip
\noindent\textbf{Step 2 (Logarithmic term).}  
Since $\kappa_{t-1}$ is bounded above by a finite constant $U$, its logarithm is also bounded. That is,
\[
\log(\kappa_{t-1})\;\le\;\log(U).
\]
With our choice of $\beta \lesssim n^{-1/2}$, the term $\beta \log(\kappa_{t-1})$ is $\mathcal{O}(n^{-1/2})$ and therefore of the same asymptotic order as the square-root term.

\medskip
\noindent\textbf{Step 3 (Final substitution).}  
Plugging these bounds back into Eq.~\ref{eq:tail-prob}, we obtain
\begin{align}
 \mathbb P_{x\sim\mu}\!\big[\pi_T(y^*_T(x)\mid x)\le 1-\delta\big]
 &\;\lesssim\; \frac{1}{\gamma\delta}\left[
 \frac{\log(n|\Pi|\rho^{-1})}{\sqrt{n}}
 \left(\frac{1}{\sqrt{c}}+(1+\sqrt{nc})^{-\frac{T-1}{2}}\sqrt{\kappa_0}\right)
 + \mathcal{O}\!\left(\frac{1}{\sqrt{n}}\right)\right]\notag\\
 &\;\lesssim\;\frac{\log(n|\Pi|\rho^{-1})}{\gamma\delta\sqrt{nc}}
 \;+\;\frac{\log(n|\Pi|\rho^{-1})}{\gamma\delta\sqrt{n}}\,
 (1+\sqrt{nc})^{-\frac{T-1}{2}}\sqrt{\kappa_0}.
\end{align}

\medskip
\noindent\textbf{Interpretation.}  
The bound consists of two components:
\begin{itemize}
    \item A \emph{stable error term}, 
    \[
    \frac{\log(n|\Pi|\rho^{-1})}{\gamma\delta\sqrt{nc}},
    \]
    which vanishes at the parametric rate $O(n^{-1/2})$, demonstrating that the final policy becomes increasingly confident with more data.
    \item A \emph{transient error term}, 
    \[
    \frac{\log(n|\Pi|\rho^{-1})}{\gamma\delta\sqrt{n}}(1+\sqrt{nc})^{-\frac{T-1}{2}}\sqrt{\kappa_0},
    \]
    which reflects the dependence on the initial coverage $\kappa_0$ but decays exponentially fast in $T$.
\end{itemize}

Thus, even if the initial policy is weak (large $\kappa_0$), its influence diminishes exponentially across iterations. The long-run behavior of the self-rewarding process is governed entirely by the stable error term, ensuring that the final policy allocates sufficient probability mass to its own optimal predictions. 

\medskip
\noindent\textbf{The proof is complete.}

\end{proof}

\section{Proof of Theorem \ref{thm:lower bound}: Lower Bound on Single-Step Failure Rate.}\label{sec proof of lowerbound}

Before proceeding to analyze the behavior of a full, multi-step iterative self-rewarding process, it is essential to first understand the fundamental limitations inherent in the mechanism itself. This section is dedicated to exploring these limits by analyzing the performance of a self-rewarding algorithm within a fixed query budget, $n$. This scenario is equivalent to a single, non-iterative application of the process.

Our objective is to derive a lower bound on the failure rate that any algorithm must incur on some challenging problem instance. The proof proceeds via an adversarial approach: we will construct a class of problem instances designed to be maximally difficult, thereby probing for potential failure modes.

The central finding of this analysis will be that an algorithm's success is critically dependent on the quality of the initial base model, as quantified by its policy condition number, $\kappa_0$. We will demonstrate that a large initial $\kappa_0$ (representing a poor starting model) can lead to a high probability of failure, a scenario termed model collapse. This result is not a critique of a specific algorithm, but rather a revelation of an intrinsic risk. The vulnerability exposed in this section provides the primary motivation for the analysis in the subsequent sections, where we will investigate how an iterative framework can overcome this limitation and guide the process towards a stable and successful outcome.

\begin{proof}[Proof of Theorem \ref{thm:lower bound}]

Our strategy is to establish a necessary condition on the sample size $n$ required for any algorithm to achieve a given success rate. Let an arbitrary algorithm, after making at most $n$ oracle calls, produce a new policy. We fix a target failure level $\epsilon$. We will construct a specific hard problem instance and demonstrate that if the algorithm achieves an expected success rate of at least $1-\epsilon$ on this instance, then its sample budget $n$ must necessarily exceed a certain threshold.

By inverting this relationship, we establish the core result: for a fixed budget $n$, there exists a worst-case scenario where the failure rate $\epsilon$ must be at least as large as the value dictated by this threshold. This provides the lower bound on the algorithm's risk.

In this section we \emph{fix} the query budget $n$ (the total number of sample-and-evaluate calls the algorithm may use), 
and we derive a \emph{lower bound on the failure rate} $\varepsilon$ that any algorithm must incur on some instance from $\Pi$.
The proof proceeds in six explicit steps.


\subsection*{Step 1: Construction of a Hard Problem Instance.}
To establish the lower bound, we construct a problem instance designed to be maximally challenging for any sharpening algorithm. The construction begins by defining the spaces for prompts and responses, followed by the strategic design of the data distribution and the model class itself.

Let the prompt space be a discrete set $\mathcal{X}=\{x_0,x_1,\ldots,x_d\}$ and the response space be $\mathcal{Y}=\{y_0,y_1,\ldots,y_M\}$. We fix a margin parameter $\gamma\in(0,1)$ (which we will later set to $\gamma=\tfrac12$) and an information-sparsity parameter $\Delta\in(0,1)$.

First, we define a prompt distribution $\mu$ that concentrates most of its mass on a single, uninformative point:
\begin{align}
\mu \;:=\; (1-\Delta)\,\delta_{x_0} \;+\; \frac{\Delta}{d}\sum_{i=1}^{d}\delta_{x_i},
\end{align}
where $\delta_x$ denotes the Dirac delta distribution on element $x$. This construction serves a dual purpose. The atom at $x_0$, carrying the majority of the probability mass ($1-\Delta$), acts as an uninformative sentinel; an algorithm will frequently sample prompts from this region but will learn nothing to distinguish between the candidate models. Conversely, all distinguishing information resides in the small $\Delta$-fraction of the distribution mass spread across the ``informative'' points $\{x_1,\ldots,x_d\}$. As will be shown later, this sparsity, controlled by $\Delta$, is instrumental in amplifying the overall failure rate from a more fundamental classification error.

Next, we construct a family of response distributions $(P_0,P_1,\ldots,P_M)$ on $\mathcal{Y}$, designed to be difficult to distinguish from one another. The distribution for the uninformative prompt is deterministic:
\begin{align}
P_0 \;=\; \delta_{y_0}.
\end{align}
For the informative prompts, we define a set of nearly uniform distributions, each with a subtle bias towards a specific response:
\begin{align}
\forall\, i\ge 1:\quad P_i \;=\; \frac{1}{(1-\gamma)M}\,\delta_{y_i} \;+\; \sum_{j\in[M]\setminus\{i\}}\frac{1}{M}\!\left(1-\frac{\gamma}{(M-1)(1-\gamma)}\right)\delta_{y_j}.
\end{align}
For each $i \ge 1$, the distribution $P_i$ is constructed by taking the uniform distribution over $n$ elements and slightly increasing the probability mass of a single element $y_i$, while uniformly decreasing the mass of all other $M-1$ elements. The parameter $\gamma$ controls the margin by which $y_i$ is favored, making it the unique but only marginally most probable outcome. For instance, when $\gamma=\tfrac12$, the probabilities are $P_i(y_i) = 2/M$ and $P_i(y_j) = \frac{M-2}{M(M-1)}$ for $j\neq i$, highlighting that the distinguishing signal is subtle. This construction ensures that a large number of samples are required to reliably detect this statistical bias.

Finally, we synthesize these components to define the full model class. For any index vector $\mathcal{I}=(j_1,\ldots,j_d)\in[M]^d$, which serves as a hidden "secret key," we define the corresponding base model $\pi^\mathcal{I}$ as:
\begin{align}
\pi^\mathcal{I}(\cdot\mid x_i) \;=\; 
\begin{cases}
P_0, & i=0,\\[1mm]
P_{j_i}, & i>0.
\end{cases}
\end{align}
The model class $\Pi$ is then the set of all such models generated by every possible secret key:
\begin{align}
\Pi \;:=\; \{\pi^\mathcal{I}:\ \mathcal{I}\in[M]^d\},
\qquad\text{which implies}\qquad 
\log|\Pi| \;=\; d\log M.
\end{align}
Through this construction, the task is effectively reduced to a set of $d$ independent $n$-way classification problems. Specifically, for each informative prompt $x_i$, the algorithm's goal is to identify the single hidden ``correct'' label $j_i$, as $y_{j_i}$ is the unique maximizer under the response distribution $P_{j_i}$. The overall difficulty of this task, and thus the reason this construction is suitable for proving a lower bound, is compounded by two strategically designed factors.

First, the statistical similarity of the candidate distributions makes the classification inherently challenging. The distribution $P_{j_i}$ is only \emph{slightly} biased toward its maximizer $y_{j_i}$, meaning a large number of samples at prompt $x_i$ are required to reliably detect this subtle statistical bump. With a limited sample budget, many of the hidden labels $j_i$ will therefore remain ambiguous to the algorithm. Second, an informational scarcity is enforced by the prompt distribution $\mu$. The heavy probability mass concentrated at the uninformative prompt $x_0$ (a proportion of $1-\Delta$) further dilutes the evidence available to the learner, as most samples drawn will provide no information about any of the hidden labels $j_i$.

Together, these two effects—the need for many samples to resolve ambiguity at each informative prompt and the rarity of encountering such prompts—create the challenging family of models necessary to prove a tight lower bound on sample complexity.

\subsection*{Step 2: Tune $\Delta$ via an Inlined Reduction-To-Classification Argument.}
Fix an arbitrary algorithm and let $\widehat\pi$ be the model it outputs using at most $n$ samples.
For a target failure level $\varepsilon\in(0,1)$ we set
\[
\Delta \;:=\; 16\,\varepsilon .
\]
(Here $\Delta$ is part of the adversarially chosen prompt distribution $\mu$ from Step~1.)
We now show that
if the algorithm achieved overall success at least $1-\varepsilon$, then the
average misclassification rate on the $d$ informative prompts must be at most $\varepsilon/\Delta = 1/16$.

\medskip
\noindent\textbf{Success event and its consequence at a fixed informative prompt.}
Recall from Step~1 that for each $i\in[d]$ the maximizer set is a singleton,
$y^{\mathcal{I}}(x_i)=\{y_{j_i^\star}\}$, where $j_i^\star$ is the unique index of the favored label at $x_i$.
Define the success event at $x$ by
\[
\mathcal{S}(x)\;:=\;\Bigl\{\widehat\pi\bigl(y^{\mathcal{I}}(x)\mid x\bigr)>\tfrac12\Bigr\}.
\]
For $i\in[d]$ let $\widehat j_i := \arg\max_{j\in[M]}\widehat\pi(y_j\mid x_i)$ be the algorithm's predicted label at $x_i$.
Because $y^{\mathcal{I}}(x_i)=\{y_{j_i^\star}\}$ and $\widehat\pi(y_{j_i^\star}\mid x_i)>\tfrac12$ on $\mathcal{S}(x_i)$,
the maximizer must coincide with the true index:
\[
\mathcal{S}(x_i)\ \Longrightarrow\ \widehat j_i=j_i^\star.
\]
Equivalently, using indicator notation,
\[
\mathbb{I}\{\widehat j_i\neq j_i^\star\}\ \le\ \mathbb{I}\bigl\{\neg\mathcal{S}(x_i)\bigr\}
\quad\Rightarrow\quad
\mathbb{P}\bigl[\widehat j_i\neq j_i^\star\bigr]\ \le\ 1-\mathbb{P}\bigl[\mathcal{S}(x_i)\bigr].
\]

\medskip
\noindent\textbf{From overall success under $\mu$ to per-coordinate success.}
All probabilities and expectations below are taken over the internal randomness of the algorithm, the
data it observes on instance $\pi^{\mathcal{I}}$, and over a uniformly random $\mathcal{I}\in[M]^d$; we abbreviate this by
$\E$ and $\mathbb{P}$.
By definition of the prompt distribution $\mu=(1-\Delta)\delta_{x_0}+\frac{\Delta}{d}\sum_{i=1}^d\delta_{x_i}$,
\begin{align*}
\E\,\mathbb{P}_{x\sim\mu}\bigl[\mathcal{S}(x)\bigr]
&= (1-\Delta)\,\E\,\mathbb{P}\!\bigl[\mathcal{S}(x_0)\bigr]
   \;+\; \frac{\Delta}{d}\sum_{i=1}^d \E\,\mathbb{P}\!\bigl[\mathcal{S}(x_i)\bigr].
\end{align*}
Assume the algorithm attained overall success at least $1-\varepsilon$, i.e.
$\E\,\mathbb{P}_{x\sim\mu}[\mathcal{S}(x)]\ge 1-\varepsilon$.
Since $\mathbb{P}[\mathcal{S}(x_0)]\le 1$, we get
\begin{align*}
\frac{\Delta}{d}\sum_{i=1}^d \E\,\mathbb{P}\!\bigl[\mathcal{S}(x_i)\bigr]
&\ge 1-\varepsilon-(1-\Delta)
\;=\; \Delta-\varepsilon,\\[1mm]
\text{hence}\qquad
\frac{1}{d}\sum_{i=1}^d \E\,\mathbb{P}\!\bigl[\mathcal{S}(x_i)\bigr]
&\ge 1-\frac{\varepsilon}{\Delta}.
\end{align*}

\medskip
\noindent\textbf{Average misclassification bound at the informative prompts.}
Using the implication above at each $x_i$,
\[
\frac{1}{d}\sum_{i=1}^d \E\,\mathbb{P}\bigl[\widehat j_i\neq j_i^\star\bigr]
\ \le\ \frac{1}{d}\sum_{i=1}^d \Bigl(1-\E\,\mathbb{P}[\mathcal{S}(x_i)]\Bigr)
\ =\ 1-\frac{1}{d}\sum_{i=1}^d \E\,\mathbb{P}[\mathcal{S}(x_i)]
\ \le\ \frac{\varepsilon}{\Delta}.
\]
With our choice $\Delta=16\varepsilon$, the right-hand side equals $1/16$.
Therefore, if an algorithm with budget $n$ claimed overall success at least $1-\varepsilon$,
then necessarily the average per-coordinate misclassification rate would not exceed $1/16$.
In Step~3 we will show, via a complementary counting argument on how many samples actually land on each informative prompt,
that any budget $n$ smaller than a certain threshold forces a constant lower bound (\(\ge 1/8\)) on the left-hand side,
whence a contradiction. This yields a necessary relation between $n$ and $\varepsilon$ that we later invert to obtain the risk-form lower bound.

\subsection*{Step 3: Selecting $n$ to Ensure Compliance with the Policy Condition Number Constraint}
We need to choose $n$ as a function of $\varepsilon$ such that every model $\pi \in \Pi$ satisfies the policy condition number constraint $\kappa^{(p)}(\pi)\lesssim C$. This constraint ensures that the model’s distribution covers the correct label with sufficient probability.

\medskip
\noindent
We begin by recalling the definition of the policy condition measure $\kappa^{(p)}(\pi)$:
\[
\kappa^{(p)}(\pi) =\left( \mathbb{E}_{x\sim\mu} \left[ \frac{1}{\pi\bigl(y^\pi(x) \mid x\bigr)^p} \right]\right)^{\frac{1}{p}},
\]
where $\pi(y^\pi(x) \mid x)$ is the probability that model $\pi$ assigns to the predicted label $y^\pi(x)$ at input $x$. The integral is taken with respect to the distribution $\mu$ over the inputs.

\medskip
\noindent
Next, we split this expectation into contributions from two parts: For $x_0$, where $\pi$ always outputs $y_0$, we have:
    \[
    \mathbb{P}_{x_0}[\pi(y_0 \mid x_0) = 1] = 1 \quad \text{and} \quad \mathbb{P}_{x_0}[\pi(y_j \mid x_0) = 0, \, j \neq 0] = 0.
    \]
Thus, the contribution to the coverage from $x_0$ is simply $(1-\Delta)$, where $(1-\Delta)$ is the mass allocated to $x_0$ in the prompt distribution. For each $x_i$ with $i > 0$, we assign a distribution $P_i$ that is a biased version of a uniform distribution. Specifically, we give the label $y_i$ a slightly higher probability compared to the other labels. The distribution $P_i$ is given by:
    \[
    P_i = \frac{1}{(1-\gamma)M} \delta_{y_i} + \sum_{j\neq i} \frac{1}{M} \left(1 - \frac{\gamma}{(M-1)(1-\gamma)} \right) \delta_{y_j},
    \]
The mass assigned to label $y_i$ is higher, but the mass assigned to other labels is correspondingly lower.

\medskip
\noindent
Thus, the total contribution to the coverage constraint from all points $x_1, \dots, x_d$ is:
\[
\kappa^{(p)}(\pi)^p = (1-\Delta) + \Delta \cdot \bigl(M(1-\gamma)\bigr)^p.
\]

\medskip
\noindent
To satisfy the coverage constraint $\kappa^{(p)}(\pi) \lesssim C$, we need to control the magnitude of the second term $\Delta \cdot \bigl(M(1-\gamma)\bigr)^p$. To ensure this is bounded by a constant $C$, we choose $n$ so that:
\[
\Delta \cdot \bigl(M(1-\gamma)\bigr)^p \leq C^p.
\]
Solving for $n$:
\[
M \asymp \left(\frac{C^p}{\Delta}\right)^{1/p} \cdot (1-\gamma)^{-1}.
\]
Substituting $\gamma = \frac{1}{2}$, we get
\[
M \asymp \left(\frac{C^p}{\Delta}\right)^{1/p} \cdot 2.
\]
Finally, using the relationship $\Delta = 16\varepsilon$, we obtain the desired form:
\[
M \;\asymp\; 1 + C\,\Delta^{-1/p} = 1 + C\,(16\varepsilon)^{-1/p}.
\]
This choice of $n$ ensures that the coverage constraint is respected for all models $\pi \in \Pi$.


\subsection*{Step 4: Derivation of the sample complexity bound.}

To ensure that the algorithm has success at least $1-\varepsilon$, we need to check the sample complexity under which the per-coordinate misclassification probability remains sufficiently small. To do this, we proceed with the following steps:

\medskip
\noindent
\textbf{(i) Markov's Inequality for per-coordinate error.}
Let $n$ denote the total number of samples collected by the algorithm, where each sample is a triplet $(x, y, \log\pi_{0}(y|x))$. 

Let $n_i$ be the random variable representing the number of samples in the collected dataset of size $n$ for which the prompt is $x_i$. According to the prompt distribution $\mu$, the probability of any single sample being drawn for the prompt $x_i$ is $\Delta/d$. By the linearity of expectation, the expected value of $n_i$ is:
\[
\mathbb{E}[n_i] = n \cdot \frac{\Delta}{d}.
\]
For the non-negative random variable $n_i$, Markov's inequality states that for any $a > 0$, $\mathbb{P}[n_i \ge a] \le \mathbb{E}[n_i]/a$. By setting $a = 2\mathbb{E}[n_i] = 2n\Delta/d$, we can bound the probability that $n_i$ is large:
\[
\mathbb{P}[n_i \ge 2n\Delta/d] \le \frac{n\Delta/d}{2n\Delta/d} = \frac{1}{2}.
\]
By considering the complementary event, we arrive at:
\[
\mathbb{P}[n_i \le 2n\Delta/d] \ge \frac{1}{2}.
\]

\textbf{(ii) Lower bounding the per-coordinate classification error.}
We now derive a lower bound on the per-coordinate classification error, $\mathbb{E}_{\mathcal{I} \sim \text{Unif}}\mathbb{E}^{\mathcal{I}}\left[ \mathbb{I}\{\widehat{j}_i \ne j_i^\star\}\right]$, by showing that a limited sample size $n$ creates unavoidable informational ambiguity.

Let $\mathcal{D}$ be the dataset of $n$ samples. We define an event $\mathcal{E}_i$ where the data is ambiguous for coordinate $i$: this occurs if at least two responses for prompt $x_i$ were never sampled, and no sampled response for $x_i$ was the unique high-probability one. The total classification error is lower-bounded by the error conditioned on this event:
\begin{align}
    \mathbb{E}_{\mathcal{I} \sim \text{Unif}}\mathbb{E}^{\mathcal{I}}\left[ \mathbb{I}\{\widehat{j}_i \ne j_i^\star\}\right] \ge \mathbb{E}_{\mathcal{I} \sim \text{Unif}}\mathbb{E}^{\mathcal{I}}\left[ \mathbb{I}\{\widehat{j}_i \ne j_i^\star\} \cdot \mathbb{I}\{\mathcal{E}_i\}\right].
\end{align}
Conditioned on the data $\mathcal{D}$ for which $\mathcal{E}_i$ occurs, the posterior distribution over the unsampled labels for $j_i^\star$ is uniform over a set of size at least two. Thus, any guess $\widehat{j}_i$ has a conditional error probability of at least $1/2$. This allows us to simplify the bound:
\begin{align}
    \mathbb{E}_{\mathcal{I} \sim \text{Unif}}\mathbb{E}^{\mathcal{I}}\left[ \mathbb{I}\{\widehat{j}_i \ne j_i^\star\}\right] \ge \frac{1}{2} \mathbb{E}_{\mathcal{I} \sim \text{Unif}}\mathbb{P}^{\mathcal{I}}[\mathcal{E}_i].
\end{align}
To bound the probability of $\mathcal{E}_i$, we consider the likely scenario where the number of samples for prompt $x_i$, denoted $n_i$, is small. By Markov's inequality, $\mathbb{P}[n_i \le 2n\Delta/d] \ge 1/2$. We can therefore write:
\begin{align}
    \frac{1}{2} \mathbb{E}_{\mathcal{I} \sim \text{Unif}}\mathbb{P}^{\mathcal{I}}[\mathcal{E}_i] \ge \frac{1}{2} \mathbb{E}_{\mathcal{I} \sim \text{Unif}}\mathbb{P}^{\mathcal{I}}[\mathcal{E}_i \cap \{n_i \le 2n\Delta/d\}] \ge \frac{1}{4} \mathbb{E}_{\mathcal{I} \sim \text{Unif}}\mathbb{P}^{\mathcal{I}}[\mathcal{E}_i \mid n_i \le 2n\Delta/d].
\end{align}
The conditional probability of $\mathcal{E}_i$ is at least the probability that none of the $n_i$ samples correspond to two specific, fixed responses (e.g., $y_{j_i^\star}$ and another $y_{j'}$), which occurs with probability at least $(1-3/M)^{n_i}$ for our construction with $\gamma=1/2$. On the event $\{n_i \le 2n\Delta/d\}$, this is bounded below by $(1-3/M)^{2n\Delta/d}$.

For a sufficiently small constant $c>0$, this term is at least $1/4$ whenever the total sample size $n$ satisfies the condition $n \le c \cdot dM/\Delta$. Combining these steps, we conclude that under this condition on $n$:
\begin{align}
    \mathbb{E}_{\mathcal{I} \sim \text{Unif}}\mathbb{E}^{\mathcal{I}}\left[ \mathbb{I}\{\widehat{j}_i \ne j_i^\star\}\right] \ge  \frac{1}{8}.
\end{align}
This establishes a constant lower bound on the per-coordinate classification error when the sample size is insufficient.

\medskip
\noindent
\textbf{(iv) Sample complexity lower bound.}
Now, combining all the steps, we conclude that the total number of samples needed to guarantee that the per-coordinate misclassification error is at least \(1/8\) is related to \(n\) as follows:
\begin{align}
    n \;\ge\; c \cdot \frac{dM}{\Delta} \quad \Longleftrightarrow \quad n \;\gtrsim\; \frac{d}{\varepsilon}\Bigl(1 + C\varepsilon^{-1/p}\Bigr). \label{eq:necessary_budget_eps}
\end{align}
Thus, we obtain a \emph{necessary condition} for the algorithm to achieve success at least \(1-\varepsilon\): the sample complexity must be at least as large as \(n \gtrsim \frac{d}{\varepsilon}\Bigl(1 + C\varepsilon^{-1/p}\Bigr)\).

\subsection*{Step 5: Eliminate $d$ in favor of $\log|\Pi|$ and obtain a scalar inequality in $\varepsilon$.}
By construction, $\log|\Pi| = d\log M$. Hence $d=\log|\Pi|/\log M$.
Substituting $M(\varepsilon)\asymp 1+C\varepsilon^{-1/p}$ and $\Delta=16\varepsilon$ into Eq.~\ref{eq:necessary_budget_eps} gives
\begin{align*}
n 
&\;\gtrsim\; \frac{\log|\Pi|}{\log\!\bigl(1+C\varepsilon^{-1/p}\bigr)}\cdot \frac{1+C\varepsilon^{-1/p}}{\varepsilon}\\[2mm]
&\;\gtrsim\; \frac{\log|\Pi|}{\varepsilon^{1+1/p}}\cdot \frac{C}{\,1+\log\!\bigl(C\varepsilon^{-1/p}\bigr)}.
\end{align*}
For the important case $p=1$ this simplifies to the familiar form
\begin{equation}\label{eq:n_vs_eps_p1}
n \;\gtrsim\; \frac{C\,\log|\Pi|}{\varepsilon^{2}\,\bigl(1+\log(C/\varepsilon)\bigr)}
\ =\ \frac{C\,\log|\Pi|}{\varepsilon^{2}\,\log\!\bigl(eC/\varepsilon\bigr)}.
\end{equation}
Thus, for any fixed $n$, if an algorithm could attain failure $\le \varepsilon$, then $\varepsilon$ must satisfy Eq.~\ref{eq:n_vs_eps_p1}.

\subsection*{Step 6: Invert the monotone constraint to get the risk-form lower bound.}

Then, we derive the \emph{risk-form lower bound}: for a fixed sample budget of $n$ samples, what is the minimum achievable failure rate $\epsilon(n)$? This provides a more practical perspective on the intrinsic limitations of self-rewarding algorithms.

\textbf{(i) Reformulating the lower Bound as a monotonic constraint}

The established lower bound on the sample complexity $n$ is given by:
\begin{align}
    n \gtrsim \frac{C\,\log|\Pi|}{\epsilon^{2}\,\left(1+\log(C/\epsilon)\right)} = \frac{C\,\log|\Pi|}{\epsilon^{2}\,\log(eC/\epsilon)}.
\end{align}
To facilitate the analysis, we consolidate the problem-dependent constants into a single term $K$, and define a normalized quantity $B$ that represents the effective information per sample:
\begin{align}
    K := \Theta\!\left(C\,\log|\Pi|\right), \qquad B := \frac{K}{n}.
\end{align}
With these definitions, the lower bound inequality can be elegantly rewritten as a constraint on the failure rate $\epsilon$:
\begin{align}
    n \ge \frac{K}{\epsilon^{2}\log(eC/\epsilon)} \quad\Longleftrightarrow\quad \epsilon^{2}\,\log\!\left(\frac{eC}{\epsilon}\right) \ge B.
\end{align}
We now define a function $f(\epsilon)$ that captures the left-hand side of this inequality:
\begin{align}
    f(\epsilon) := \epsilon^{2}\,\log\!\left(\frac{eC}{\epsilon}\right), \quad \text{for } \epsilon \in (0, eC).
\end{align}
The function $f(\epsilon)$ is strictly increasing for small $\epsilon$ (specifically for $\epsilon < eC/\sqrt{e}$), which is the regime of interest for non-trivial algorithms. This monotonicity ensures that for any given value of $B$, the equation $f(\epsilon) = B$ has a unique solution.

\textbf{(ii) The Contrapositive as a Risk-Form Lower Bound}

The logical contrapositive of the original theorem statement provides the foundation for our risk-form bound. The original theorem states that if an algorithm achieves a failure rate of at most $\epsilon$, then its sample size $n$ must be large enough such that $f(\epsilon) \ge K/n$.

The contrapositive is: if an algorithm uses a sample size of at most $n$, then it cannot guarantee a failure rate $\epsilon$ that violates this condition. Therefore, the minimum achievable failure rate for any algorithm using $n$ samples, which we denote as $\epsilon(n)$, is precisely the unique positive solution to the equation:
\begin{align}
    f(\epsilon(n)) = B = \frac{K}{n}.
\end{align}
This leads to the risk-form lower bound: for any algorithm using at most $n$ samples, there exists a hard instance in $\Pi$ such that its expected failure rate is at least $\epsilon(n)$.
\begin{equation}
\mathbb{E}_{\mathcal{I} \sim \text{Unif}}\mathbb{E}^{\mathcal{I}}\,\mathbb{P}_{x\sim\mu}\!\left[\hat\pi\left(y^{\pi^{\mathcal{I}}}(x)\mid x\right)\le\tfrac12\right]
\ \ge\ \epsilon(n).
\end{equation}
The monotonicity of $f(\epsilon)$ ensures that as the sample size $n$ increases, $B$ decreases, and consequently, the minimum failure rate $\epsilon(n)$ also strictly decreases.

\textbf{(iii) Explicit Solution via the Lambert W Function}

To obtain a closed-form expression for $\epsilon(n)$, we solve the implicit equation $f(\epsilon) = B$:
\begin{align}
    \epsilon^{2}\,\log\!\left(\frac{eC}{\epsilon}\right)=B.
\end{align}
Let us introduce a substitution $s := \log(eC/\epsilon)$, which implies $\epsilon = eC\,e^{-s}$. Substituting this back into the equation yields:
\begin{align*}
    \left(eC \cdot e^{-s}\right)^2 \cdot s &= B \\
    (eC)^2 \cdot e^{-2s} \cdot s &= B \\
    s \cdot e^{-2s} &= \frac{B}{e^2 C^2}.
\end{align*}
To isolate $s$, we multiply both sides by $-2$ to match the form $z \cdot e^z$:
\begin{align*}
    (-2s) \cdot e^{-2s} = -\frac{2B}{e^2 C^2}.
\end{align*}
By definition of the Lambert W function, where $W(z)e^{W(z)}=z$, we can solve for $-2s$:
\begin{align*}
    -2s = W\!\left(-\frac{2B}{e^{2}C^{2}}\right).
\end{align*}
For the solution to be physically meaningful ($s \ge 1/2$, corresponding to $\epsilon \le eC/\sqrt{e}$), we must choose the $-1$ branch of the Lambert W function, denoted $W_{-1}$. Thus,
\begin{align*}
    s = -\frac{1}{2} W_{-1}\!\left(-\frac{2B}{e^{2}C^{2}}\right).
\end{align*}
Finally, substituting back $\epsilon = eC\,e^{-s}$ and $B=K/n$, we obtain the exact, closed-form solution for the minimum failure rate:
\begin{equation}
\epsilon(n) = eC\cdot \exp\!\left(\frac{1}{2}\,W_{-1}\!\left(-\frac{2K}{n\,e^{2}C^{2}}\right)\right), \quad \text{where } K = \Theta(C\log|\Pi|).
\end{equation}

\textbf{(iv) Asymptotic Approximation for Large Sample Sizes}

While exact, the Lambert W function form is not immediately intuitive. For the practical regime of large sample sizes ($n \to \infty$), the argument of $W_{-1}$, which is $-2K/(n e^2 C^2)$, approaches $0^-$. We can use the well-known asymptotic expansion $W_{-1}(-x) = \ln(x) - \ln(-\ln(x)) + o(1)$ as $x \to 0^+$. This yields a more interpretable approximation:
\begin{align*}
    \epsilon(n) &= eC \cdot \exp\!\left(\frac{1}{2}\left[ \ln\left(\frac{2K}{n e^2 C^2}\right) - \ln\left(-\ln\left(\frac{2K}{n e^2 C^2}\right)\right) + o(1) \right]\right) \\
    &= \sqrt{\frac{K}{n \cdot \ln\left(\frac{n C^2}{K}\right)}} \cdot (1+o(1)).
\end{align*}
Substituting $K \approx C\log|\Pi|$ and simplifying the logarithmic term, we arrive at the convenient and explicit asymptotic lower bound:
\begin{equation}
\epsilon(n)\ \asymp\ 
\sqrt{\frac{C\,\log|\Pi|}{\,n\,\log\!\left(\frac{n\,C}{\log|\Pi|}\right)}}.
\end{equation}
Thus, we get:

\begin{equation}
\mathbb{P}_{x \sim \mu} \left[ \pi_1(y^*_1(x) \mid x) \le \frac{1}{2} \right]
\gtrsim
\left( \frac{C \log|\Pi|}{n} \left[ \log\left( \frac{nC}{\log|\Pi|} \right) \right]^{-1} \right)^{1/2}.
\end{equation}
This expression clearly shows that the failure rate decreases primarily as $1/\sqrt{n}$, modulated by a slower-growing logarithmic factor. It also explicitly shows how the problem difficulty, determined by coverage $C$ which we define as $\kappa_0$ and model complexity $\log|\Pi|$, scales the error. Furthermore, we obtain
\begin{equation}
\mathbb{P}_{x \sim \mu} \left[ \pi_1(y^*_1(x) \mid x) \le \frac{1}{2} \right]
\gtrsim
\left( \frac{\kappa_0 \log|\Pi|}{n} \right)^{1/2},
\end{equation}
where $\gtrsim$ hides logarithmic factors with respect to $n$ and $\kappa_0$.

\end{proof}

\color{black}
\section{Proof of Proposition \ref{proposition}: From Low-Confidence Models to Greedy Decoding Failure.}\label{sec proofpropo}

We prove the proposition by construction. We explicitly design an autoregressive policy $\pi$ and a prompt $x$ such that the globally optimal sequence $y^*(x)$ has a probability $\pi(y^*(x) \mid x) \le 1/2$, while guaranteeing that the greedy decoding algorithm is diverted to a suboptimal sequence. The failure is engineered by creating a \emph{local-optima trap} at the first decoding step, from which recovery of the true optimal sequence is impossible.

\subsection*{Step 1: Setup of the Adversarial Environment}

Let $p^* \in (0, \frac{1}{2}]$ be the target probability for the optimal sequence. Let the vocabulary be $\mathcal{V}$ and the sequence length be $H \ge 2$. We define the unique optimal sequence as $y^*(x) = (a, b, y_3^*, \dots, y_H^*)$, where $a, b, y_h^* \in \mathcal{V}$. We also introduce a distinct \emph{trap token} $z \in \mathcal{V}$ where $z \ne a$.

\subsection*{Step 2: Construction of the Adversarial Policy $\pi$}

We define the policy $\pi$ by specifying its conditional probabilities $\pi_h(\cdot | y_{<h}, x)$ for each step $h=1, \dots, H$. The construction ensures two properties: (i) $y^*(x)$ is the unique optimal sequence with probability $p^*$, and (ii) the greedy decoder deterministically selects the trap token $z$ at the first step.

\subsubsection*{Step 2.1: Inducing a Locally Optimal Trap at the First Step}

We design the initial probability distribution $\pi_1(\cdot|x)$ to make the trap token $z$ appear more probable than the correct first token $a$. Let $\varepsilon$ be a small positive constant such that $0 < \varepsilon \le \min\left(\frac{p^*}{2}, 1 - 2p^*\right)$. The second condition, $\varepsilon \le 1 - 2p^*$, ensures the total probability assigned to $a$ and $z$ does not exceed 1. We define:
\[
\pi_1(y_1|x) = 
\begin{cases}
p^* + \varepsilon & \text{if } y_1 = z \\
p^* & \text{if } y_1 = a \\
\frac{1 - 2p^* - \varepsilon}{|\mathcal{V}| - 2} & \text{if } y_1 \in \mathcal{V} \setminus \{a, z\}
\end{cases}
\]
The greedy decoder's choice at step 1 is $\hat{y}_1 = \arg\max_{v \in \mathcal{V}} \pi_1(v|x)$. Since $\pi_1(z|x) = p^* + \varepsilon > p^* = \pi_1(a|x)$, the decoder will select $\hat{y}_1 = z$. Because the first token is incorrect ($\hat{y}_1 \ne y_1^*$), the final decoded sequence $\hat{y}$ is guaranteed to be different from $y^*(x)$, thus ensuring decoding failure.

\subsubsection*{Step 2.2: Ensuring Global Optimality of $y^*(x)$ for Steps $h \ge 2$}

To establish that $y^*(x)$ is indeed the unique global optimum, we define the subsequent conditional probabilities to elevate the probability of the true optimal path while suppressing the probability of all competing paths, including the one initiated by the greedy choice $z$.

\textbf{Optimal Path:} For any prefix of the optimal sequence, $y^*_{<h} = (y_1^*, \dots, y_{h-1}^*)$, we set the conditional probability of the next correct token to 1:
\[
\pi_h(y_h^* | y_{<h}^*, x) = 1 \quad \text{for all } h = 2, \dots, H.
\]
This deterministic transition ensures that the total probability of the optimal sequence is precisely:
\[
\pi(y^*(x) \mid x) = \pi_1(y_1^* \mid x) \cdot \prod_{h=2}^H \pi_h(y_h^* \mid y_{<h}^*, x) = p^* \cdot 1^{H-1} = p^*.
\]
By our initial choice, $p^* \le 1/2$, satisfying the proposition's condition.

\textbf{Greedy Path:} For any path beginning with the trap token $z$, we must ensure its total probability is strictly less than $p^*$. It is sufficient to control the probability at step $h=2$. We set the maximum conditional probability following $z$ to be less than what is needed to compete with $y^*(x)$:
\[
\max_{v \in \mathcal{V}} \pi_2(v | z, x) \le \frac{p^* - 2\varepsilon}{p^* + \varepsilon}.
\]
This condition is well-defined because our constraint $\varepsilon \le p^*/2$ ensures the numerator is non-negative. The total probability of the most likely sequence starting with $z$, which we denote $\hat{y}$, is therefore bounded:
\[
\pi(\hat{y}|x) = \pi_1(z|x) \cdot \prod_{h=2}^H \pi_h(\hat{y}_h | \hat{y}_{<h}, x) \le \pi_1(z|x) \cdot \max_{v \in \mathcal{V}} \pi_2(v|z, x)
\]
\[
\le (p^* + \varepsilon) \cdot \frac{p^* - 2\varepsilon}{p^* + \varepsilon} = p^* - 2\varepsilon < p^*.
\]
All other paths: For all other possible sequence prefixes, the remaining probability mass can be distributed (e.g., uniformly) in a way that ensures their total probabilities are also less than $p^*$. This construction confirms that $y^*(x)$ is the unique sequence with the highest probability, $p^*$.

\subsection*{Step 3: Verifying the Decoding Error}

The greedy decoder produces a sequence $\hat{y}$ starting with $\hat{y}_1 = z$, while the optimal sequence is $y^*(x)$ starting with $y_1^* = a$. Since $z \ne a$, the Hamming distance $d_H(\hat{y}, y^*(x))$ is deterministically at least 1.
\[
d_H(\hat{y}, y^*(x)) \ge 1.
\]
This confirms a non-zero error. In fact, by setting the continuation of the greedy path to be maximally different from $y^*(x)$, the Hamming distance can be made arbitrarily large, up to $H$.

\subsection*{Step 4: Implications and Connection to the Paper's Framework}

This proposition provides a concrete illustration of the risks associated with unaligned policies, which are central to the motivation of our paper. \textbf{Theorem \ref{thm:lower bound}} establishes a lower bound on the probability of learning a policy where $\pi(y^*(x) \mid x) \le 1/2$, particularly when the initial model quality is poor (large $\kappa_0$). Our Proposition \ref{proposition} demonstrates the direct consequence of such a policy at inference time: it can lead to guaranteed and significant errors for standard decoding algorithms like greedy search.

This result bridges the gap between the \emph{learning failure} (producing an unaligned model) and the \emph{inference failure} (incorrect output generation). It underscores the practical necessity of the iterative alignment framework, which is designed to concentrate probability mass on optimal sequences to refine the policy, thereby pushing $\pi(y^*(x) \mid x)$ above the critical $1/2$ threshold and ensuring reliable decoding as per Proposition \ref{proposition}.

\color{black}
\section{Proof of Theorem \ref{thm:main_results_softmax}: Performance Guarantees for Linear Softmax Models}\label{sec proof linear}

\subsection{Linear Softmax Parameterization and Self-Rewarding Training}

\paragraph{Parameterized Policies.}
In practice, a language model policy $\pi$ is realized by a large neural network (e.g., a Transformer) whose behavior is governed by a set of parameters $\theta \in \Theta \subseteq \mathbb{R}^d$. The vector $\theta$ encompasses all weights and biases of the model. Consequently, we denote the policy as $\pi_\theta$. Optimizing over the policy space $\Pi$ is thus equivalent to finding the optimal parameters $\theta^*$ in the parameter space $\Theta$.

Language models are typically autoregressive, factorizing the probability of a response $y = (y_1, y_2, \ldots, y_H)$ into a product of conditional probabilities:
\begin{equation}
\pi_\theta(y \mid x) = \prod_{i=1}^{H} \pi_\theta(y_i \mid x, y_{<i}).
\end{equation}
At each generation step $i$, the model produces a vector of logits, $l_\theta(x, y_{<i}) \in \mathbb{R}^{|\mathcal{V}|}$, which is then transformed into a probability distribution over the vocabulary $\mathcal{V}$ via the \textbf{softmax} function:
\begin{equation}
\pi_\theta(y_i \mid x, y_{<i}) = \frac{\exp(l_\theta(x, y_{<i})_{y_i})}{\sum_{v \in \mathcal{V}} \exp(l_\theta(x, y_{<i})_{v})},
\end{equation}
where $l_\theta(\cdot)_{y_i}$ denotes the logit value corresponding to token $y_i$. This mechanism, applying the softmax function to the outputs of the model's final linear layer, is what we refer to as the \textbf{linear softmax} parameterization.

\subsubsection*{Iterative Self-Rewarding Training}

The training procedure is an iterative loop that begins with an initial model $\pi_{\theta_0}$ and progressively refines it. At round $t$, we have the model parameters $\theta_t$, corresponding to the policy $\pi_{\theta_t}$.

\paragraph{LLM-as-a-Judge Self-Reward.}
We use a self-reward that is computed by the model itself, where the reward for a given response is its log-probability under the current policy $\pi_{\theta_t}$:
\begin{equation}
\label{eq:self_reward}
J_{\mathrm{self}}(y \mid x, \pi_{\theta_t}) = \log \pi_{\theta_t}(y \mid x), \qquad
r_t(y \mid x) := J_{\mathrm{self}}(y \mid x, \pi_{\theta_t}).
\end{equation}
This choice of reward function systematically increases probability mass on higher-score sequences and decreases it on lower-score ones, as judged by the model's own likelihood assignment.

\paragraph{Data for a Round.}
Let $\mathcal{D}_t=\{(x, y, y')\}_{i=1}^{n}$ be a dataset of $n$ triples sampled by drawing prompts $x\sim\mu$ and then sampling two responses $y, y' \sim \pi_{\theta_t}(\cdot\mid x)$. For each pair $(y, y')$, their relative quality is measured by the difference in self-reward $J_{\mathrm{self}}(y\mid x, \pi_{\theta_t})-J_{\mathrm{self}}(y'\mid x, \pi_{\theta_t})$. This framing avoids the need for binary preference labels $(y^+, y^-)$ and instead models the continuous reward difference directly.

\paragraph{DPO-Style Realization (Square-Loss Form).}
With the reference policy set to the current policy, $\pi_{\mathrm{ref}}=\pi_{\theta_t}$, the DPO-style regression objective for round $t$ is to find the optimal parameters $\theta_{t+1}$ by minimizing the following loss with respect to $\theta$:
\begin{equation}
\label{eq:dpo_loss_regression}
\mathcal{L}^{\mathrm{DPO}}_t(\theta)
=
\frac{1}{n}\sum_{(x,y,y')\in\mathcal{D}_t}
\left(
\beta\left[
\log \frac{\pi_\theta(y\mid x)}{\pi_{\theta_t}(y\mid x)}
-
\log \frac{\pi_\theta(y'\mid x)}{\pi_{\theta_t}(y'\mid x)}
\right]
-
\Delta J_t(x,y,y')
\right)^2,
\end{equation}
where $\Delta J_t(x,y,y')$ is the difference in self-reward between the two responses, calculated using the fixed reference model $\pi_{\theta_t}$:
\begin{equation}
\label{eq:delta_j_regression}
\Delta J_t(x,y,y') = J_{\mathrm{self}}(y\mid x, \pi_{\theta_t})-J_{\mathrm{self}}(y'\mid x, \pi_{\theta_t})
= \log \pi_{\theta_t}(y\mid x)-\log \pi_{\theta_t}(y'\mid x).
\end{equation}
Minimizing $\mathcal{L}^{\mathrm{DPO}}_t(\theta)$ drives the log-ratio difference of the policy $\pi_\theta$ to match the log-probability difference of the reference policy $\pi_{\theta_t}$, effectively performing a regression on the reward difference. This shifts probability mass toward sequences with higher self-reward scores in a fine-grained manner.

\subsubsection*{Operator View in Parameter Space}

This iterative process can be viewed as the sequential application of operators in the parameter space $\Theta$. For round $t$, the optimization process defines an operator $\mathcal{T}_{\beta_t, r_t}: \Theta \rightarrow \Theta$ that maps the current parameters $\theta_t$ to the updated parameters $\theta_{t+1}$:
\begin{equation}
\theta_{t+1} = \mathcal{T}_{\beta_t, r_t}(\theta_t) \in \arg\min_{\theta \in \Theta} \mathcal{L}^{\mathrm{DPO}}_t(\theta) \quad \text{with } \pi_{\mathrm{ref}} = \pi_{\theta_t}.
\end{equation}
After $T$ rounds of training, the final model parameters are the result of composing these operators:
\begin{equation}
\theta_T = (\mathcal{T}_{\beta_{T-1}, r_{T-1}} \circ \cdots \circ \mathcal{T}_{\beta_0, r_0})(\theta_0).
\end{equation}
Because the reward function $r_t$ is dynamically determined by the policy $\pi_{\theta_t}$ at each round, the entire training loop is self-contained. The model improves its capabilities through iterative generation, self-evaluation, and optimization, without requiring external human feedback.

\subsection{Introduction and Problem Formulation}

\subsubsection{Objective}
The objective of this section is to adapt a general theoretical proof in Section \ref{sec_proof_main}, originally designed for a finite policy class $\Pi$, to the specific case of the linear softmax model class, denoted $\Pi_{\varphi, B}$. The core of this adaptation lies in replacing the uniform convergence argument, which relies on the cardinality of the policy class $\left|\Pi\right|$, with a more sophisticated technique based on covering numbers. This modification is necessary because the linear softmax model represents a continuous, and therefore infinite, class of policies, rendering the original proof's dependence on $\log(\left|\Pi\right|)$ inapplicable. The new derivation will provide a statistical error bound that correctly characterizes the complexity of the linear softmax model in terms of its parametric dimension, $d$.

\subsubsection{Recap of the General Analytical Framework}

Our ultimate objective is to upper bound the probability that the learned policy after $T$ iterations, $\pi_T$, assigns insufficient probability mass to its own optimal response, $y_T^*(x)$. To build towards this goal, we first analyze the initial case ($T=1$). The established proof provides a general framework for this first step, which we will adapt.

The core strategy of the framework is to connect the probability of policy failure to a performance gap. Specifically, for the initial policy $\pi_1$, the probability of failing to assign sufficient mass to the initial high quality response, $y_0^*(x)$, is bounded by a performance gap term, $J_0(\pi_1^*) - J_0(\pi_1)$:
\[
\mathbb{P}_{x \sim \mu} \left[ \pi_1(y_0^*(x) \mid x) \leq 1 - \delta \right] 
\leq \frac{J_0(\pi_1^*) - J_0(\pi_1)}{\gamma \delta}.
\]
The bulk of the proof is then dedicated to bounding this performance gap. A key component in this bound is the \textbf{statistical error term}, $\varepsilon_{\text{stat}}$, which captures the uncertainty inherent in learning from a finite dataset of size $n$. The final bound on the performance gap takes the form:
\[
J_0(\pi_1^*) - J_0(\pi_1) \lesssim \kappa_0^{1/2} \varepsilon_{\text{stat}} + \beta \log(\kappa_0).
\]
In the original general proof, the policy class $\Pi$ was assumed to be finite. The statistical error was consequently derived using Bernstein's inequality and a union bound, leading to a term dependent on the size of the policy class:
\[
\varepsilon_{\text{stat}}^2 \lesssim \frac{B_{n,\rho}^2 \log(|\Pi|/\rho)}{n}.
\]
Our task is to adapt this critical step. Since the \textbf{linear softmax model} constitutes a continuous, infinite class of policies, we must re-derive the statistical error term $\varepsilon_{\text{stat}}$ using tools appropriate for such spaces, namely \textbf{covering numbers}. This will replace the dependency on $\log(|\Pi|)$ with a term that reflects the complexity of the linear softmax parameter space.

\subsubsection{The Linear Softmax Model}
We consider the linear softmax model class $\Pi_{\varphi, B}$, defined as:

\begin{definition}[Linear Softmax Model]
Let \(d \in \mathbb{N}\) be the feature dimension, and let \(\varphi: \mathcal{X} \times \mathcal{Y} \to \mathbb{R}^d\) be a feature map such that \(\|\varphi(x, y)\|_2 \leq 1\). Given a radius \(B \geq 1\), the linear softmax model class \(\Pi_{\varphi, B}\) is defined as:
\[
\Pi_{\varphi, B} := \left\{ \pi_\theta : \theta \in \mathbb{R}^d, \|\theta\|_2 \leq B \right\},
\]
where the policy \(\pi_\theta\) is given by: $\pi_\theta(y \mid x) \propto \exp\left( \langle \varphi(x, y), \theta \rangle \right)$. 
\end{definition}
This class is parameterized by a vector $\theta$ residing in a $d$-dimensional ball of radius $B$. Since the parameter space is a compact but uncountably infinite subset of $\mathbb{R}^d$, the cardinality $\left|\Pi_{\varphi, B}\right|$ is infinite. Consequently, any generalization bound that scales with $\log(\left|\Pi\right|)$ is vacuous and must be replaced by a more appropriate measure of model complexity.

\subsubsection{The Tool: Covering Numbers and $\varepsilon$-Nets}

To handle infinite function classes, statistical learning theory employs the concept of covering numbers, which measure the "effective size" or richness of the class at a specific resolution $\varepsilon$. This is formalized through the notion of an $\varepsilon$-net adapted to the linear softmax family $\Pi_{\varphi,B}$.

\begin{definition} ($\varepsilon$-net for $\Pi_{\varphi,B}$): 
Let $\varepsilon > 0$. A finite subset $\Psi \subseteq \Pi_{\varphi,B}$ is called an $\varepsilon$-net for the linear softmax class $\Pi_{\varphi,B}$ if for every $\pi_\theta \in \Pi_{\varphi,B}$, there exists $\pi_{\theta'} \in \Psi$ such that
\[
\max_{x \in \mathcal{X}} \max_{y \in \mathcal{Y}} \log \frac{\pi_{\theta'}(y \mid x)}{\pi_{\theta}(y \mid x)} \leq \varepsilon.
\]
We denote by $\mathcal{N}(\Pi_{\varphi,B}, \varepsilon)$ the size of the smallest such $\varepsilon$-net.
\end{definition}

This definition provides a metric on the space of linear softmax policies. The key utility of an $\varepsilon$-net is that it provides a finite discretization of the infinite model class, which allows for the application of union bounds.

\begin{definition} (Covering numbers) 
Given a model class $\Pi$ and tolerance $\varepsilon > 0$, the covering number $\mathcal{N}(\Pi, \varepsilon)$ is defined as the cardinality of the smallest $\varepsilon$-net for $\Pi$. That is,
\[
\mathcal{N}(\Pi, \varepsilon) := \min \left\{ |\Psi| : \Psi \subseteq \Pi \ \text{is an $\varepsilon$-net for $\Pi$} \right\}.
\]
\end{definition}
The covering number quantifies the complexity of the function class. Establishing that this value is finite (and finding bounds for it, such as for the linear softmax class) is a cornerstone of proving generalization guarantees for learning algorithms.

\subsubsection{Loss Function Definition}
Recalling Equation \ref{eq:gap-intermediate} from Section \ref{sec_proof_main}, we have:
\begin{align}
&J_0(\pi_{\theta_1^*})-J_0(\pi_{\theta_1})\notag\\ 
&\lesssim
\mathbb E_{\pi_{\theta_1}^*,\pi_{\theta_0}}\!\big[\Delta^{r^*_0}-\Delta^{\hat r_0}\big]
+\mathbb E_{\pi_{\theta_1},\pi_{\theta_0}}\!\big[\Delta^{\hat r_0}-\Delta^{r^*_0}\big]
+\beta D_{\mathrm{KL}}(\pi_{\theta_1^*}\Vert\pi_{\theta_0})\notag \\
&\lesssim (C_{\pi_{\theta_1}}^{1/2}+C_{\pi_{\theta_1^*}}^{1/2})\!\cdot
\Big(\mathbb E_{\pi_{\theta_0},\pi_{\theta_0}}\!\big[|\Delta^{r_0^*}-\Delta^{\hat r_0}|^2\mathbb{I}_G\big]\Big)^{1/2}
+ (C_{\pi_1}^{1/2}+C_{\pi_{\theta_1^*}}^{1/2})\ \Gamma\ \Upsilon^{1/4}+\beta D_{\mathrm{KL}}(\pi_{\theta_1^*}\Vert\pi_{\theta_0})
\end{align}
where $\Gamma:=\log(C_{\pi_{\theta_0}/\pi_{\theta_1};\beta})+\log(C_{\pi_{\theta_0}/\pi_{\theta_{1,\beta}^*};\beta})$, \ 
$\Upsilon:=\mathbb P(|\Delta^{r_0^*}|>\eta)+\mathbb P(|\Delta^{\hat r_0}|>\eta)$, and the coupled-expectation shorthand denotes
\[
\mathbb E_{\pi,\pi'}[\cdot]\ :=\
\mathbb E_{x\sim\mu}\,\mathbb E_{y\sim\pi(\cdot\mid x),\;y'\sim\pi'(\cdot\mid x)}[\cdot],
\]
and the pairwise differences defined as 
\begin{align}
    \Delta^{r^*_0}(x,y,y')&:=\beta\log\frac{\pi_{\theta_{1,\beta}^*}(y\mid x)}{\pi_{\theta_0}(y\mid x)}
 -\beta\log\frac{\pi_{\theta_{1,\beta}^*}(y'\mid x)}{\pi_{\theta_0}(y'\mid x)},\\
\Delta^{\hat r_0}(x,y,y')
&:=\beta\log\frac{\pi_{\theta_1}(y\mid x)}{\pi_{0}(y\mid x)}
     -\beta\log\frac{\pi_{\theta_1}(y'\mid x)}{\pi_{\theta_0}(y'\mid x)}.
\end{align}

Then, we also define a preference scoring function, $\Delta^{\pi}$, as the difference between log-likelihood ratios:
\[
\Delta^{\pi}(x,y,y')\ :=\ \beta\log\frac{\pi(y|x)}{\pi_{0}(y|x)}
\;-\;\beta\log\frac{\pi(y'|x)}{\pi_{0}(y'|x)}.
\]
We note that
$\Delta^{r_0^*}(x,y,y')=\Delta^{\pi_{\theta_{1,\beta}^*}}(x,y,y')$.
Now, since $\pi_{\theta_{1,\beta}^*}\in\Pi_{\varphi,B}$ and $\pi_{\theta_1}$ is the ERM solution, the empirical squared error of $\pi_{\theta_1}$ must be less than or equal to that of any other policy in $\Pi_{\varphi,B}$, including $\pi_{\theta_{1,\beta}^*}$ itself. This implies:
\begin{equation}
\sum_{(x,y,y')\in\mathcal D_{0}}\!
\big(\Delta^{\pi_{\theta_1}}(x,y,y')-\Delta^{\pi_{\theta_{1,\beta}^*}}(x,y,y')\big)^2
\ \le\ \min_{\pi\in\Pi}\sum_{(x,y,y')\in\mathcal D_{0}}
\big(\Delta^{\pi}-\Delta^{\pi_{\theta_{1,\beta}^*}}\big)^2
\ \le\ 0.
\end{equation}

For clarity and to ensure this report is self-contained, we restate the key definitions from the original proof framework. For any policy $\pi \in \Pi$, the clipped squared loss for a single data point $i$ is defined as:
\begin{align}
    &Z_i(\pi_\theta)\notag \\
    &:= \left( \Delta^{\pi_\theta}(x_i, y_i, y_i') - \Delta^{\pi_{\theta_{1, \beta}^*}}(x_i, y_i, y_i') \right)^2 \mathbb{I} \left\{ \left| \Delta^{\pi_\theta}(x_i, y_i, y_i') \right| \leq B_{n, \rho}, \left| \Delta^{\pi_{\theta_{1, \beta}^*}}(x_i, y_i, y_i') \right| \leq B_{n, \rho} \right\}\notag
\end{align}
where $B_{n, \rho}$ is a clipping threshold to be determined. The empirical and population losses are, respectively:
\[
\hat{L}^n(\pi_\theta) = \frac{1}{n} \sum_{i=1}^n Z_i(\pi_\theta)
\]
and
\[
L(\pi_\theta) = \mathbb{E}[Z_i(\pi_\theta)]
\]
The expectation for $L(\pi_\theta)$ is over $(x, y, y') \sim \mu \otimes \pi_{\theta_0} \otimes \pi_{\theta_0}$. On the clipping event, the loss is bounded:
\[
0 \leq Z_i(\pi_\theta) \leq M := (2B_{n, \rho})^2
\]

\subsection{Uniform Convergence Bound for the Linear Softmax Model}
This section presents the core technical contribution: a new derivation of the uniform population bound for the linear softmax model class $\Pi_{\varphi, B}$, replacing the original argument based on Bernstein's inequality and a simple union bound. Our approach leverages the covering number bound for $\Pi_{\varphi, B}$ to handle the infinite nature of the policy class.

\subsubsection{The Discretization Strategy}
The standard methodology for establishing uniform convergence for an infinite, continuous function class involves a three-step process based on $\varepsilon$-nets:
\begin{enumerate}
    \item Construct a finite $\varepsilon$-net: Use the covering number lemma to establish the existence of a finite set $\Psi_{\varepsilon} \subseteq \Pi_{\varphi, B}$ that serves as a discrete approximation of the entire class.
    \item Establish uniform convergence over the net: Apply a concentration inequality (like Bernstein's) and a union bound over the finite elements of the net $\Psi_{\varepsilon}$. This yields a high-probability bound that holds simultaneously for all policies in the net.
    \item Extend the bound to the entire class: Show that if two policies are close with respect to the metric used for the $\varepsilon$-net, their corresponding losses are also close. This allows the bound established for the discrete net to be extended to any policy in the full class $\Pi_{\varphi, B}$, at the cost of a small approximation error.
\end{enumerate}

We now execute this strategy step-by-step.

\subsubsection{Step 1: Bounding the Loss over an $\varepsilon$-Net}
We begin by constructing a finite approximation of the policy space $\Pi_{\varphi, B}$. Let $\varepsilon > 0$ be a resolution parameter to be chosen later. Our goal is to construct an $\varepsilon$-net for $\Pi_{\varphi, B}$, which we will denote by $\Psi_{\varepsilon}$, with respect to the metric
\[
d(\pi_\theta, \pi_{\theta'}) = \max_{x, y} \left|\log \frac{\pi_{\theta'}(y \mid x)}{\pi_{\theta}(y \mid x)}  \right|.
\]
An $\varepsilon$-net is a finite set $\Psi_{\varepsilon} \subset \Pi_{\varphi, B}$ such that for any policy $\pi_\theta \in \Pi_{\varphi, B}$, there exists a policy $\pi_{\theta'} \in \Psi_{\varepsilon}$ with $d(\pi_\theta, \pi_{\theta'}) \leq \varepsilon$. We will explicitly construct such a net and bound its cardinality.

The construction proceeds in two stages. First, we build a net over the $d$-dimensional parameter space $\Theta_B = \{ \theta \in \mathbb{R}^d : \|{\theta}\|_2 \leq B \}$. Second, we show that this parameter-space net induces the desired $\varepsilon$-net in the policy space.

\paragraph{1. Constructing a Net in the Parameter Space.}
We need to find a finite set of points $\{ \theta_1^1, \dots, \theta_1^N \}$ in $\Theta_B$ that forms a net with a specific radius, let's say $r > 0$. By a standard volumetric argument, we can bound the size $N$ of such a net. Consider a maximal set of points $\{ \theta_1^i \}_{i=1}^N$ in $\Theta_B$ that are separated by at least $r$, i.e., $\|\theta_1^i - \theta_1^j\|_2 > r$ for all $i \neq j$. The open balls of radius $r/2$ centered at these points, $B(\theta_i, r/2)$, are disjoint. Furthermore, all these balls are contained within a larger ball of radius $B + r/2$. By comparing the total volume of the small balls to the volume of the large ball, we have:
\[
N \cdot \text{Vol}(B(0, r/2)) \leq \text{Vol}(B(0, B + r/2)).
\]
Since the volume of a $d$-dimensional ball of radius $R$ is proportional to $R^d$, this implies:
\[
N \cdot (r/2)^d \leq (B + r/2)^d \implies N \leq \left( \frac{B + r/2}{r/2} \right)^d = \left( \frac{2B}{r} + 1 \right)^d.
\]
This maximal packing is also an $r$-net. We will see that the required radius for our parameter net is $r = \varepsilon / 2$. Substituting this into the bound gives a net size of at most $(4B/\varepsilon + 1)^d$. Standard results in high-dimensional geometry provide various bounds of this nature; for consistency with the analysis this proof is based on, we adopt the slightly looser but common bound of the form $(C \cdot B / \varepsilon)^d$. Let us choose a net for $\Theta_B$ with radius $r = \varepsilon/2$, which we denote $\mathcal{N}_{\varepsilon/2}$, satisfying the size bound:
\[
|\mathcal{N}_{\varepsilon/2}| \leq \left( \frac{8B}{\varepsilon} \right)^d.
\]

\paragraph{2. Mapping to a Net in the Policy Space.}
Let $\Psi_{\varepsilon} = \{ \pi_{\theta} : \theta \in \mathcal{N}_{\varepsilon/2} \}$ be the set of policies corresponding to our parameter net. Now, for any policy $\pi_{\theta} \in \Pi_{\varphi, B}$, by construction of $\mathcal{N}_{\varepsilon/2}$, there exists a parameter vector $\theta^i \in \mathcal{N}_{\varepsilon/2}$ such that $\|\theta - \theta^i\|_2 \leq \varepsilon/2$. We now show that $d(\pi_{\theta}, \pi_{\theta^i}) \leq \varepsilon$.

Recall that the policy is defined as $\pi_{\theta}(y \mid x) = \exp(\langle \varphi(x,y), \theta \rangle) / A_{\theta}(x)$, where $A_{\theta}(x) = \sum_{y'} \exp(\langle \varphi(x,y'), \theta \rangle)$ is the normalization constant. The log-ratio of two policies is:
\begin{align*}
    \log \left( \frac{\pi_{\theta}(y \mid x)}{\pi_{\theta^i}(y \mid x)} \right) &= \log\left(\frac{\exp(\langle \varphi(x,y), \theta \rangle)/A_{\theta}(x)}{\exp(\langle \varphi(x,y), \theta^i \rangle)/A_{\theta^i}(x)}\right) \\
    &= \langle \varphi(x,y), \theta - \theta^i \rangle - \log\left(\frac{A_{\theta}(x)}{A_{\theta^i}(x)}\right).
\end{align*}
We bound the magnitude of each term. For the first term, we assume a standard condition that the feature vectors are bounded, i.e., $\|\varphi(x,y)\|_2 \leq 1$ for all $x,y$. Using the Cauchy-Schwarz inequality:
\[
|\langle \varphi(x,y), \theta - \theta^i \rangle| \leq \|\varphi(x,y)\|_2 \|\theta - \theta^i\|_2 \leq 1 \cdot \frac{\varepsilon}{2} = \frac{\varepsilon}{2}.
\]
For the second term, we analyze the ratio of the normalization constants:
\[
\frac{A_{\theta}(x)}{A_{\theta^i}(x)} = \frac{\sum_{y'} \exp(\langle \varphi(x,y'), \theta \rangle)}{\sum_{y'} \exp(\langle \varphi(x,y'), \theta^i \rangle)} = \frac{\sum_{y'} \exp(\langle \varphi(x,y'), \theta^i \rangle) \exp(\langle \varphi(x,y'), \theta - \theta^i \rangle)}{\sum_{y'} \exp(\langle \varphi(x,y'), \theta^i \rangle)}.
\]
This is a weighted average of the term $\exp(\langle \varphi(x,y'), \theta - \theta^i \rangle)$. From our bound on the first term, we know $-\varepsilon/2 \leq \langle \varphi(x,y'), \theta - \theta^i \rangle \leq \varepsilon/2$. Therefore, the exponential term is bounded as $\exp(-\varepsilon/2) \leq \exp(\langle \varphi(x,y'), \theta - \theta^i \rangle) \leq \exp(\varepsilon/2)$. Since the ratio $Z_\theta(x)/Z_{\theta_i}(x)$ is a weighted average of values within this range, it must also lie within the same range:
\[
e^{-\varepsilon/2} \leq \frac{A_{\theta}(x)}{A_{\theta^i}(x)} \leq e^{\varepsilon/2}.
\]
Taking the logarithm gives $\left| \log\left(\frac{Z_\theta(x)}{Z_{\theta_i}(x)}\right) \right| \leq \frac{\varepsilon}{2}$.

Finally, by the triangle inequality, we combine the bounds on the two terms:
\[
\left| \log \left( \frac{\pi_{\theta}(y \mid x)}{\pi_{\theta^i}(y \mid x)} \right) \right| \leq |\langle \varphi(x,y), \theta - \theta^i \rangle| + \left| \log\left(\frac{A_{\theta}(x)}{A_{\theta^i}(x)}\right) \right| \leq \frac{\varepsilon}{2} + \frac{\varepsilon}{2} = \varepsilon.
\]
Since this holds for any state-action pair $(x,y)$, we have $d(\pi_{\theta}, \pi_{\theta^i}) = \max_{x,y} |\log ( \pi_{\theta}(y \mid x)/\pi_{\theta^i}(y \mid x) )| \leq \varepsilon$. We have thus shown that $\Psi_{\varepsilon}$ is a valid $\varepsilon$-net for $\Pi_{\varphi, B}$, and its size is bounded by:
\[
|\Psi_{\varepsilon}| \leq \left( \frac{8B}{\varepsilon} \right)^d.
\]

Our next objective is to derive a high-probability performance guarantee that holds uniformly over a finite set of policies $\Psi_{\varepsilon}$. The derivation begins with a concentration inequality for a single, fixed policy and then extends it to the entire set using a union bound.

\paragraph{3. Performance Bound for a Single Policy.}

For any \emph{fixed} policy $\pi_{\theta}$, the associated losses $\{Z_i(\pi_{\theta})\}_{i=1}^n$ form a sequence of independent and identically distributed (i.i.d.) random variables. By construction, these losses are bounded within the interval $[0, M]$. To relate the true expected loss $L(\pi_{\theta}) = \mathbb{E}[Z_i(\pi_{\theta})]$ to its empirical estimate $\hat{L}_n(\pi_{\theta}) = \frac{1}{n}\sum_{i=1}^n Z_i(\pi_{\theta})$, we can invoke Bernstein's inequality. For any $t > 0$, Bernstein's inequality gives:
\[
\Pr\left(L(\pi_{\theta}) - \hat{L}_n(\pi_{\theta}) \ge t\right) \le \exp\left(-\frac{n t^2}{2\,\mathrm{Var}(Z_i(\pi_{\theta})) + \frac{2}{3} M t}\right).
\]
A key challenge is that the variance term, $\mathrm{Var}(Z_i(\pi_{\theta}))$, is unknown. However, for non-negative random variables bounded by $M$, the variance can be bounded by the mean: $\mathrm{Var}(Z_i(\pi_{\theta})) \le \mathbb{E}[Z_i(\pi_{\theta})^2] \le M \cdot \mathbb{E}[Z_i(\pi_{\theta})] = M L(\pi_{\theta})$. Substituting this into the inequality introduces $L(\pi_{\theta})$ on both sides, creating a recursive relationship.

Solving this inequality for $L(\pi_{\theta})$ in terms of $\hat{L}_n(\pi_{\theta})$—a process often referred to as a fixed-point calculation—yields a more practical, empirical version of Bernstein's bound. Specifically, for a chosen failure probability $\delta \in (0, 1)$, we can state the following: with probability at least $1-\delta$,
\begin{equation} \label{eq:bern-one}
L(\pi_{\theta}) \le 2\hat{L}_n(\pi_{\theta}) + c\frac{M\log(1/\delta)}{n},
\end{equation}
for some universal constant $c > 0$. This powerful result bounds the true loss for a \emph{single, predetermined} policy.

\paragraph{4. Uniform Bound via the Union Bound}

The bound in Eq. Eq.~\ref{eq:bern-one} is insufficient for our needs, as we require a guarantee that holds \emph{simultaneously} for all policies $\pi_{\theta'} \in \Psi_{\varepsilon}$, not just one chosen in advance. To achieve this uniform convergence, we apply the union bound.

Let $\mathcal{E}_{\pi_{\theta'}}$ be the "bad" event where the bound in Eq. Eq.~\ref{eq:bern-one} fails for a specific policy $\pi_{\theta'}$. We want to bound the probability that at least one of these bad events occurs for any policy in the finite net $\Psi_{\varepsilon}$, i.e., $\Pr(\cup_{\pi_{\theta'} \in \Psi_{\varepsilon}} \mathcal{E}_{\pi_{\theta'}})$. The union bound states that this probability is at most the sum of the individual event probabilities:
\[
\Pr\left(\bigcup_{\pi_{\theta'} \in \Psi_{\varepsilon}} \mathcal{E}_{\pi_{\theta'}}\right) \le \sum_{\pi_{\theta'} \in \Psi_{\varepsilon}} \Pr(\mathcal{E}_{\pi_{\theta'}}).
\]
To ensure the total probability of failure is no more than a desired level $\rho$, we can enforce a much stricter failure probability for each individual policy. We set the individual failure probability $\delta$ for each $\pi_{\theta'}$ to be $\delta = \frac{\rho}{|\Psi_{\varepsilon}|}$.

By this construction, the total probability of at least one bound failing is at most:
\[
\sum_{\pi_{\theta'} \in \Psi_{\varepsilon}} \frac{\rho}{|\Psi_{\varepsilon}|} = |\Psi_{\varepsilon}| \cdot \frac{\rho}{|\Psi_{\varepsilon}|} = \rho.
\]
Consequently, with a total probability of at least $1 - \rho$, the bound holds for \emph{all} policies in $\Psi_{\varepsilon}$ simultaneously.

\paragraph{5. Final Uniform Bound}

We now substitute our new choice of $\delta = \frac{\rho}{|\Psi_{\varepsilon}|}$ into the single-policy bound from Eq. Eq.~\ref{eq:bern-one}. This yields the uniform inequality: with probability at least $1-\rho$, for all $\pi_{\theta'} \in \Psi_{\varepsilon}$:
\[
L(\pi_{\theta'}) \le 2\hat{L}_n(\pi_{\theta'}) + c\frac{M}{n}\log\left(\frac{1}{\rho/|\Psi_{\varepsilon}|}\right) = 2\hat{L}_n(\pi_{\theta'}) + c\frac{M}{n}\log\left(\frac{|\Psi_{\varepsilon}|}{\rho}\right).
\]
The term $\log(|\Psi_{\varepsilon}|)$ quantifies the complexity cost of generalizing from a single policy to the entire net. A larger net requires a stronger guarantee for each policy, which widens the overall bound.

Finally, by substituting the known bound on the cardinality of the $\varepsilon$-net, $|\Psi_{\varepsilon}|$, which depends on the dimensionality $d$ of the policy class and the desired resolution $\varepsilon$, we arrive at the final expression:
\[
L(\pi_{\theta'}) \le 2\hat{L}_n(\pi_{\theta'}) + c\frac{M}{n}\left(d\log\left(\frac{8B}{\varepsilon}\right) + \log\left(\frac{1}{\rho}\right)\right).
\]
This inequality provides a concrete, uniform performance guarantee over the discrete skeleton $\Psi_{\varepsilon}$ of our policy class. The subsequent analytical step is to extend this guarantee from the discrete net to the entire continuous policy class $\Pi_{\varphi, B}$, ensuring our conclusions apply to any policy we might select. The next step is to extend this guarantee to the entire continuous class $\Pi_{\varphi, B}$.

\subsubsection{Step 2: Bounding the Approximation Error}
To extend the bound from the net $\Psi_{\varepsilon}$ to the full class $\Pi_{\varphi, B}$, we must demonstrate that the loss function $Z_i(\pi_\theta)$ exhibits a form of local stability or continuity. Specifically, if two policies $\pi_\theta$ and $\pi_{\theta'}$ are close in the log-ratio metric, their corresponding losses must also be close. This property ensures that the bound on a net element $\pi_{\theta'}$ provides a good approximation for the bound on any policy $\pi_\theta$ it covers. We formalize this in the following lemma.

\begin{lemma}\label{lemma_Lip} (Lipschitz-like Property of the Loss Function): Let $\pi_\theta, \pi_{\theta'} \in \Pi_{\varphi, B}$ with $\pi_{\theta'} \in \Psi_{\varepsilon}$. Suppose that on the clipping event $\{ |\Delta^{\pi_\theta}| \leq B_{n, \rho}, |\Delta^{\pi_{\theta_{1, \beta}^*}}| \leq B_{n, \rho}, |\Delta^{\pi_{\theta'}}| \leq B_{n, \rho} \}$, we have that 

\[
\max_{x, y} \left| \log \left( \frac{\pi_\theta(y \mid x)}{\pi_{\theta'}(y \mid x)} \right) \right| \leq \varepsilon.
\]

Then, the difference in the clipped squared loss is bounded by:

\[
\left| Z_i(\pi_\theta) - Z_i(\pi_{\theta'}) \right| \leq 8 \beta B_{n, \rho} \varepsilon.
\]
\end{lemma}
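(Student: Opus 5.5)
The plan is to bound the difference of the two squared errors by factoring it, and to control each factor on the clipping event. Write $a := \Delta^{\pi_\theta}(x_i,y_i,y_i')$, $a' := \Delta^{\pi_{\theta'}}(x_i,y_i,y_i')$ and $b := \Delta^{\pi_{\theta_{1,\beta}^*}}(x_i,y_i,y_i')$. On the stated clipping event the indicators in both $Z_i(\pi_\theta)$ and $Z_i(\pi_{\theta'})$ equal one, so
\[
Z_i(\pi_\theta) - Z_i(\pi_{\theta'}) = (a-b)^2 - (a'-b)^2 = (a - a')\,(a + a' - 2b).
\]
The proof then reduces to bounding $|a-a'|$ and $|a+a'-2b|$ separately and multiplying the two bounds.

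\emph{First factor.} The reference terms $\beta\log\pi_0(\cdot\mid x)$ appear identically in $a$ and $a'$, so they cancel, and
\[
a - a' = \beta\log\frac{\pi_\theta(y_i\mid x_i)}{\pi_{\theta'}(y_i\mid x_i)} - \beta\log\frac{\pi_\theta(y_i'\mid x_i)}{\pi_{\theta'}(y_i'\mid x_i)}.
\]
Each log-ratio is at most $\varepsilon$ in absolute value by the hypothesis, which the net construction in Step~1 (item~2) supplies. Hence $|a-a'| \le 2\beta\varepsilon$.

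\emph{Second factor.} On the clipping event we have $|a|,|a'|,|b| \le B_{n,\rho}$, so the triangle inequality gives $|a + a' - 2b| \le 4B_{n,\rho}$.

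Multiplying the two bounds yields $|Z_i(\pi_\theta) - Z_i(\pi_{\theta'})| \le 8\beta B_{n,\rho}\varepsilon$, which is the claimed bound.

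The main subtlety is the indicator functions. The lemma is stated only on the event where all three differences are clipped, and on that event both indicators are active, so the computation above is valid. Off that event the indicators can jump, and no Lipschitz bound of this form holds there. I therefore restrict the claim exactly to the stated event, as the lemma does. When this lemma is used later to extend the net bound to the full class, the complementary event has to be absorbed into the tail probability that already governs the choice of $B_{n,\rho}$, rather than handled here.
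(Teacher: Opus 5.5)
Your proof is correct and follows the paper's argument essentially verbatim. You factor the difference of squares as $(a-a')(a+a'-2b)$, bound the first factor by $2\beta\varepsilon$ after the reference terms cancel, bound the second by $4B_{n,\rho}$ via the triangle inequality on the clipping event, and multiply. Your caveat is also well taken: the bound holds only where both indicators are active, yet the paper's later step $|L(\pi_\theta)-L(\pi_{\theta'})|\le 8\beta B_{n,\rho}\varepsilon$ implicitly applies it off that event too.
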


\begin{proof}[Proof of Lemma \ref{lemma_Lip}]
By definition, $Z_i(\pi_\theta) = \left( \Delta^{\pi_\theta}(x_i, y_i, y_i') - \Delta^{\pi_{\theta_{1, \beta}^*}}(x_i, y_i, y_i') \right)^2$ on the clipping event. Let $D(\pi_\theta) = \Delta^{\pi_\theta} - \Delta^{\pi_{\theta_{1, \beta}^*}}$. We want to bound $\left| D(\pi_\theta)^2 - D(\pi_{\theta'})^2 \right|$. Using the identity $a^2 - b^2 = (a - b)(a + b)$, we have:
\[
\left| D(\pi_\theta)^2 - D(\pi_{\theta'})^2 \right| = \left| (D(\pi_\theta) - D(\pi_{\theta'}))(D(\pi_\theta) + D(\pi_{\theta'})) \right|.
\]
The first term is:
\[
\left| D(\pi_\theta) - D(\pi_{\theta'}) \right| = \left| (\Delta^{\pi_\theta} - \Delta^{\pi_{\theta_{1, \beta}^*}}) - (\Delta^{\pi_{\theta'}} - \Delta^{\pi_{\theta_{1, \beta}^*}}) \right| = \left| \Delta^{\pi_\theta} - \Delta^{\pi_{\theta'}} \right|.
\]
Let us analyze this difference:
\begin{align}
   & \Delta^{\pi_\theta}(x, y, y') - \Delta^{\pi_{\theta'}}(x, y, y') \notag \\
    &= \left( \beta \log \frac{\pi_\theta(y \mid x)}{\pi_{\theta_0}(y \mid x)} - \beta \log \frac{\pi_\theta(y' \mid x)}{\pi_{\theta_0}(y' \mid x)} \right) - \left( \beta \log \frac{\pi_{\theta'}(y \mid x)}{\pi_{\theta_0}(y \mid x)} - \beta \log \frac{\pi_{\theta'}(y' \mid x)}{\pi_{\theta_0}(y' \mid x)} \right)\notag\\
    &= \beta \left( \log \frac{\pi_\theta(y \mid x)}{\pi_{\theta'}(y \mid x)} - \log \frac{\pi_\theta(y' \mid x)}{\pi_{\theta'}(y' \mid x)} \right).
\end{align}
By the triangle inequality and the $\varepsilon$-net property, the magnitude is bounded by:
\[
\left| \Delta^{\pi_\theta} - \Delta^{\pi_{\theta'}} \right| \leq \beta \left( \left|\log \frac{\pi_{\theta}(y \mid x)}{\pi_{\theta'}(y \mid x)}\right| +\left| \log \frac{\pi_{\theta}(y' \mid x)}{\pi_{\theta'}(y' \mid x)} \right|\right) \leq \beta (\varepsilon + \varepsilon) = 2 \beta \varepsilon.
\]
The second term is:
\[
\left| D(\pi_\theta) + D(\pi_{\theta'}) \right| = \left| \Delta^{\pi_\theta} - \Delta^{\pi_{\theta_{1, \beta}^*}} + \Delta^{\pi_{\theta'}} - \Delta^{\pi_{\theta_{1, \beta}^*}} \right|.
\]
On the clipping event, we have $|\Delta^{\pi_\theta}| \leq B_{n, \rho}$, $|\Delta^{\pi_{\theta'}}| \leq B_{n, \rho}$, and $|\Delta^{\pi_{\theta_{1, \beta}^*}}| \leq B_{n, \rho}$. By the triangle inequality:
\[
|D(\pi_\theta) + D(\pi_{\theta'})| \leq |\Delta^{\pi_\theta}| + |\Delta^{\pi_{\theta'}}| + 2 |\Delta^{\pi_{1, \beta}^*}| \leq B_{n, \rho} + B_{n, \rho} + 2 B_{n, \rho} = 4 B_{n, \rho}.
\]
Combining these two bounds, we get:
\[
|Z_i(\pi_\theta) - Z_i(\pi_{\theta'})| = |D(\pi_\theta)^2 - D(\pi_{\theta'})^2| \leq (2 \beta \varepsilon)(4 B_{n, \rho}) = 8 \beta B_{n, \rho} \varepsilon.
\]
This completes the proof of the lemma. $\square$
\end{proof}

\textbf{Implication of Lemma \ref{lemma_Lip}}
This lemma allows us to control the variation of both the population loss $L(\pi_\theta)$ and the empirical loss $\hat{L}^n(\pi_\theta)$ as we move away from the points in the net. Specifically, for any $\pi_\theta \in \Pi_{\varphi, B}$ and its closest neighbor $\pi_{\theta'} \in \Psi_{\varepsilon}$:

\[
\left| L(\pi_\theta) - L(\pi_{\theta'}) \right| = \left| \mathbb{E}[Z_i(\pi_\theta) - Z_i(\pi_{\theta'})] \right| \leq \mathbb{E} \left[ \left| Z_i(\pi_\theta) - Z_i(\pi_{\theta'}) \right| \right] \leq 8 \beta B_{n, \rho} \varepsilon.
\]

\[
\left| \hat{L}^n(\pi_\theta) - \hat{L}^n(\pi_{\theta'}) \right| = \frac{1}{n} \sum_{i=1}^n \left| Z_i(\pi_\theta) - Z_i(\pi_{\theta'}) \right| \leq \frac{1}{n} \sum_{i=1}^n 8 \beta B_{n, \rho} \varepsilon.
\]

\subsubsection{Step 3: Combining the Bounds and Optimizing}
We can now assemble the pieces to derive a uniform bound over the entire class $\Pi_{\varphi, B}$. For any $\pi_\theta \in \Pi_{\varphi, B}$, let $\pi_{\theta'} \in \Psi_{\varepsilon}$ be its covering element, i.e., the policy in the net such that $d(\pi_\theta, \pi_{\theta'}) \leq \varepsilon$. Using the triangle inequality and the results from the previous steps:
\begin{align}
    L(\pi_\theta) &\leq L(\pi_{\theta'}) + |L(\pi_\theta) - L(\pi_{\theta'})|\notag \\
    &\leq \left( 2 \hat{L}_n(\pi_{\theta'}) + c \frac{M}{n} \left( d \log \left( \frac{8B}{\varepsilon} \right) + \log \left( \frac{1}{\rho} \right) \right) \right) + 8 \beta B_{n, \rho} \varepsilon \notag \\
    &\leq 2 \hat{L}_n(\pi_\theta) + 8 \beta B_{n, \rho} \varepsilon  + c \frac{M}{n} \left( d \log \left( \frac{8B}{\varepsilon} \right) + \log \left( \frac{1}{\rho} \right) \right)\notag \\
    &\leq 2 \hat{L}_n(\pi_\theta) +8 \beta B_{n, \rho} \varepsilon + c \frac{M}{n} \left( d \log \left( \frac{8B}{\varepsilon} \right) + \log \left( \frac{1}{\rho} \right) \right).\notag
\end{align}

This inequality holds with probability at least $1 - \rho$ for all $\pi_\theta \in \Pi_{\varphi, B}$ simultaneously. The right-hand side contains an error term that depends on our choice of the resolution $\varepsilon$. This term consists of an approximation error $8 \beta B_{n, \rho} \varepsilon$, which decreases as $\varepsilon \to 0$, and a statistical error (the term with $M/n$), which increases as $\varepsilon \to 0$ due to the $\log(1/\varepsilon)$ factor. To obtain the tightest possible bound, we must choose $\varepsilon$ to balance this trade-off.

Let's simplify the error term by substituting $M = (2 B_{n, \rho})^2 = 4 B_{n, \rho}^2$ and ignoring constants:
\[
\text{Error}(\varepsilon) \asymp \beta B_{n, \rho} \varepsilon + \frac{B_{n, \rho}^2}{n} \left( d \log (\frac{B}{\varepsilon}) \right).
\]
A standard approach to minimize such an expression is to set the two terms to be of the same order of magnitude:
\[
\beta B_{n, \rho} \varepsilon \asymp  \frac{B_{n, \rho}^2}{n} \left( d \log (\frac{B}{\varepsilon}) \right) \quad \Rightarrow \quad  \varepsilon \asymp  \frac{B_{n, \rho}}{n\beta } \left( d \log (\frac{B}{\varepsilon}) \right).
\]
Ignoring the logarithmic dependency on $\varepsilon$ for a first-order approximation, a near-optimal choice in learning theory for balancing an error of the form $\varepsilon+\frac{d}{n}\log(1/\varepsilon)$ is achieved by 
\[
\varepsilon \asymp \frac{d \log n}{n}.
\]
When this is done, both terms become of the order $\mathcal{O}( d \log n/n)$. The total error is dominated by this rate. Therefore, after optimizing for $\varepsilon$, the uniform convergence bound takes the form:
\[
\forall \pi_\theta \in \Pi_{\varphi, B}: L(\pi_\theta) \lesssim 2 \hat{L}^n(\pi_\theta) +\frac{1}{n}\cdot B_{n, \rho}^2 \left( d \log \frac{nB}{d} + \log \left( \frac{1}{\rho} \right) \right).
\]

\subsection{The Refined Statistical Error and Integration into the Main Proof}

\subsubsection{Derivation of the New $\varepsilon_{\mathrm{stat}}^2$}
We now apply the uniform bound derived above to the empirical risk minimizer (ERM) policy, $\pi_{\theta_1}$. The ERM property from the original proof states that $\hat{L}^n(\pi_{\theta_1}) = 0$ (Eq. \ref{eq:emp-zero}). Substituting $\pi = \pi_{\theta_1}$ and $\hat{L}^n(\pi_{\theta_1}) = 0$ into our new uniform bound yields:
\[
L(\pi_{\theta_1}) \lesssim  \frac{1}{n}\cdot B_{n, \rho}^2 \left( d \log \left( \frac{ n B}{d} \right) + \log \left( \frac{1}{\rho} \right) \right),
\]
This bound on $L(\pi_{\theta_1})$ is precisely the population-level squared error on the clipped region. We can therefore define our new statistical error term, $\varepsilon_{\mathrm{stat}}^2$, as:
\[
\varepsilon_{\mathrm{stat}}^2 :=  \frac{1}{n}\cdot B_{n, \rho}^2 \left( d \log \left( \frac{n B}{d} \right) + \log \left( \frac{1}{\rho} \right) \right).
\]
This result replaces the original Eq. \ref{eq:pop-square}. The key difference is the replacement of the complexity term $\log(\left|\Pi\right|)$ with $d \log \left( \frac{n B}{d} \right)$, which correctly captures the dependence on the parametric dimension $d$ of the linear softmax model.

It is also necessary to update the choice of the clipping parameter $B_{n, \rho}$. The original choice was motivated by controlling tail probabilities over a union bound of size $n |\Pi|$. The analogous choice in our setting is to control tails over $n$ samples and the $|\Psi_{\varepsilon}|$ policies in the net. A suitable choice is:
\[
B_{n, \rho} := \log \left( 2 n C_{\text{loss}} |\Psi_{\varepsilon}| \rho^{-1} \right) = \log \left( 2 n C_{\text{loss}} \left( \frac{8B}{\varepsilon} \right)^d \rho^{-1} \right).
\]
With an optimal $\varepsilon \asymp \frac{d\log n}{n}$, this becomes

\[
B_{n, \rho} \asymp \log \left( n C_{\text{loss}} \left( \frac{nB}{d\log n} \right)^d \rho^{-1} \right) \asymp d \log \left( \frac{n B}{d} \right).
\]

This shows that $B_{n, \rho}$ scales polynomially with $d$ and logarithmically with $n$, which is consistent with the overall structure of the bound. For simplicity in the final expression, we can absorb these logarithmic factors into $B_{n, \rho}$ and write the rate as $\frac{1}{n} B_{n, \rho}^2 d$.

\subsubsection{Integration into the Global Proof Structure}
With the newly derived statistical error term, we can now return to the main flow of the proof. The inequality \ref{eq:gap-intermediate} provides a bound on the performance gap $J_0(\pi_{\theta_1^*}) - J_0(\pi_{\theta_1})$ in terms of the population squared error:
\begin{align}
&J_0(\pi_{\theta_1^*})-J_0(\pi_{\theta_1})\notag\\ 
&\lesssim
\mathbb E_{\pi_{\theta_1^*},\pi_{\theta_0}}\!\big[\Delta^{r^*_0}-\Delta^{\hat r_0}\big]
+\mathbb E_{\pi_{\theta_1},\pi_{\theta_0}}\!\big[\Delta^{\hat r_0}-\Delta^{r^*_0}\big]
+\beta D_{\mathrm{KL}}(\pi_{\theta_1^*}\Vert\pi_{\theta_0})\notag \\
&\lesssim (C_{\pi_{\theta_1}}^{1/2}+C_{\pi_{\theta_1^*}}^{1/2})\!\cdot
\Big(\mathbb E_{\pi_{\theta_0},\pi_{\theta_0}}\!\big[|\Delta^{r_0^*}-\Delta^{\hat r_0}|^2\mathbb{I}_G\big]\Big)^{1/2}
+ (C_{\pi_{\theta_1}}^{1/2}+C_{\pi_{\theta_1^*}}^{1/2})\ \Upsilon\ \Theta^{1/4}+\beta D_{\mathrm{KL}}(\pi_{\theta_1^*}\Vert\pi_{\theta_0})
\end{align}
where $\Gamma:=\log(C_{\pi_{\theta_0}/\pi_{\theta_1};\beta})+\log(C_{\pi_{\theta_0}/\pi_{\theta_{1,\beta}^*};\beta})$, \ 
$\Upsilon:=\mathbb P(|\Delta^{r_0^*}|>\eta)+\mathbb P(|\Delta^{\hat r_0}|>\eta)$,
The term under the square root is precisely bounded by what we defined as $\varepsilon_{\mathrm{stat}}^2$. Substituting this new bound yields:
\[
J_0(\pi_{\theta_1^*}) - J_0(\pi_{\theta_1}) \lesssim (C_{\pi_{\theta_1^*}}^{1/2} + C_{\pi_{\theta_1}}^{1/2}) \varepsilon_{\mathrm{stat}} + (C_{\pi_{\theta_1^*}}^{1/2} + C_{\pi_{\theta_1}}^{1/2}) \log(C_{\text{loss}}) \rho^{1/4} + \beta D_{\text{KL}}(\pi_{\theta_1^*} \| \pi_{\theta_0}).
\]
The remainder of the proof proceeds exactly as in the original version, using this updated definition of $\varepsilon_{\mathrm{stat}}$. After bounding the concentrability coefficients and KL divergence term, and balancing the error terms, we arrive at the clean form:
\[
J_0(\pi_{\theta_1^*}) - J_0(\pi_{\theta_1}) \lesssim \kappa_0^{1/2} \varepsilon_{\mathrm{stat}} + \beta \log(\kappa_0).
\]
Finally, we substitute this refined bound on the performance gap back into the initial inequality that connects it to the probability of failure. This yields the final, adapted result for the linear softmax model:
\begin{align}
\mathbb{P}_{x \sim \mu} \left[ \pi_{\theta_1}(y_0^*(x) \mid x) \leq 1 - \delta \right] &\leq \frac{J_0(\pi_{\theta_1^*}) - J_0(\pi_{\theta_1})}{\gamma \delta} \notag \\
&\lesssim \frac{1}{\gamma \delta} \left( \kappa_0^{1/2} \varepsilon_{\mathrm{stat}} + \beta \log(\kappa_0) \right) \notag \\
&\lesssim \frac{1}{\gamma \delta} \left( \kappa_0^{1/2} B_{n, \rho} \sqrt{\frac{d \log \left( \frac{n B}{d} \right) + \log \left( \frac{1}{\rho} \right)}{n}} + \beta \log(\kappa_0) \right).\notag\\
&\lesssim \frac{1}{\gamma \delta}\left(\kappa_0^{1/2}\cdot\frac{1}{\sqrt{n}} \left(d\log(\frac{nB}{d\rho})\right)^{\frac{3}{2}}+ \beta \log(\kappa_0)\right)
\end{align}
By definition of the round-1 policy maximizer, we denote
\begin{equation}
y_1^*(x) \in \arg\max_{y \in \mathcal{Y}} \pi_{\theta_1}(y \mid x).
\end{equation}
It immediately follows that the maximizer $y_1^*(x)$ dominates the baseline target $y_0^*(x)$ pointwise, in the sense that
\[
\pi_{\theta_1}\!\left(y_1^*(x)\mid x\right) \;\geq\; \pi_{\theta_1}\!\left(y_0^*(x)\mid x\right), \quad \forall\, x \in \mathcal{X}.
\]
Consequently, whenever $\pi_{\theta_1}(y_1^*(x)\mid x)\leq 1-\delta$, it must also hold that $\pi_{\theta_1}(y_0^*(x)\mid x)\leq 1-\delta$. Equivalently,
\begin{equation}
\Bigl\{x:\pi_{\theta_1}(y_1^*(x)\mid x)\leq 1-\delta\Bigr\}\subseteq \Bigl\{x:\pi_{\theta_1}(y_0^*(x)\mid x)\leq 1-\delta\Bigr\}.
\end{equation}

Taking probabilities under $x \sim \mu$ and invoking the previously established bound for the right-hand event, we obtain
\begin{align}
\mathbb{P}_{x\sim\mu}\!\left[\pi_{\theta_1}\!\bigl(y_1^*(x)\mid x\bigr)\leq 1-\delta\right]
&\leq \mathbb{P}_{x\sim\mu}\!\left[\pi_{\theta_1}\!\bigl(y_0^*(x)\mid x\bigr)\leq 1-\delta\right]\notag\\[4pt]
&\lesssim \frac{1}{\gamma \delta}\!\left(\kappa_0^{1/2}\cdot \frac{1}{\sqrt{n}}\left(d\log\frac{nB}{d\rho}\right)^{3/2}+\beta\log \kappa_0\right).\label{eq:final-y1}
\end{align}
This expression replaces the original Equation \ref{proof_41}. It successfully adapts the general proof architecture to the linear softmax model by incorporating a complexity measure, $d \log (n)$, derived from covering number arguments, instead of the inapplicable $\log(\left|\Pi\right|)$ term. The recursive argument for subsequent iterations ($t > 1$) would then proceed from this new single-step bound, with the complexity term propagating through the analysis of the policy condition numbers $\kappa_t$.

\subsection{Bounding the Policy Condition Number $\kappa_1$}

Our next objective is to establish a sharp upper bound for the coverage parameter
\begin{equation}
\kappa_1\ :=\ \mathbb{E}_{x\sim\mu}\!\left[\frac{1}{\pi_{\theta_1}\!\left(y_1^*(x)\mid x\right)}\right],
\qquad
y_1^*(x)\in\arg\max_{y\in\mathcal{Y}}\pi_{\theta_1}(y\mid x).
\label{eq:cov1-def}
\end{equation}
Define the nonnegative random variable
\begin{equation}
Z(x)\ :=\ \frac{1}{\pi_{\theta_1}\!\left(y_1^*(x)\mid x\right)}.
\label{eq:Z-def}
\end{equation}
Its expectation admits the standard tail–integral representation:
\begin{equation}
\mathbb{E}_{x\sim \mu}\big[Z(x)\big]
\;=\;
\int_0^{\infty} \mathbb{P}_{x\sim\mu}\!\big(Z(x) \ge t\big)\,\mathrm{d}t.
\label{eq:tail-integral}
\end{equation}

\paragraph{1. Support of $Z$ and reduction of the integral.}
By assumption there exists $c\in(0,1]$ such that
\begin{equation}
c \ \le\ \pi_{\theta_1}\!\left(y_1^*(x)\mid x\right)\ \le\ 1,
\qquad \forall\,x\in\mathcal{X}.
\label{eq:posterior-band}
\end{equation}
Consequently,
\begin{equation}
1\ \le\ Z(x)\ \le\ \frac{1}{c},
\qquad\text{and}\qquad
\mathbb{P}\!\big(Z(x)\ge t\big)=
\begin{cases}
1, & t\in[0,1),\\[3pt]
0, & t>\frac{1}{c}.
\end{cases}
\end{equation}
Hence Eq.~\ref{eq:tail-integral} reduces to
\begin{align}
\mathbb{E}_{x\sim\mu}\bigl[Z(x)\bigr]
&=\int_{0}^{1} 1 \,\mathrm{d}t\;+\;\int_{1}^{\frac{1}{c}}\mathbb{P}_{x\sim\mu}\!\bigl(Z(x)\ge t\bigr)\,\mathrm{d}t
\notag\\
&= 1 \;+\;\int_{1}^{\frac{1}{c}} \mathbb{P}_{x\sim\mu}\!\bigl(Z(x)\ge t\bigr)\,\mathrm{d}t.
\label{eq:EZ-split}
\end{align}

\paragraph{2. Relating the tail $\mathbb{P}(Z\ge t)$ to the round-1 failure bound.}
For any $t\ge 1$, set $1/t=1-\delta$ so that $\delta=(t-1)/t\in(0,1]$. Then
\begin{align}
\mathbb{P}_{x\sim\mu}\!\left[ Z(x)\ge t \right]
= \mathbb{P}_{x\sim\mu}\!\left[ \pi_{\theta_1}\bigl(y_1^*(x)\mid x\bigr)\le \tfrac{1}{t} \right]
= \mathbb{P}_{x\sim\mu}\!\left[ \pi_{\theta_1}\bigl(y_1^*(x)\mid x\bigr)\le 1-\delta \right]. \notag
\end{align}
Invoking the round-1 bound (cf.\ Eq.~\ref{eq:final-y1}) yields
\begin{align}
\mathbb{P}_{x\sim\mu}\!\left[ Z(x)\ge t \right]
&\lesssim
\frac{1}{\gamma \delta}\left(
\kappa_0^{1/2}\cdot \frac{1}{\sqrt{n}}
\Bigl(d\log\frac{nB}{d\rho}\Bigr)^{3/2}
+\beta\log \kappa_0
\right)
\notag\\
&= \frac{t}{t-1}\,A,
\qquad
A\ :=\ \frac{1}{\gamma}\left(
\kappa_0^{1/2}\cdot \frac{1}{\sqrt{n}}
\Bigl(d\log\frac{nB}{d\rho}\Bigr)^{3/2}
+\beta\log \kappa_0
\right).
\label{eq:tail-bound}
\end{align}
For notational convenience define
\begin{equation}
B(t)\ :=\ \frac{t}{t-1}\,A,
\qquad t>1.
\label{eq:B-def}
\end{equation}
Since probabilities are at most $1$,
\begin{equation}
\mathbb{P}\!\big(Z(x)\ge t\big)\ \le\ \min\{1,\,B(t)\},\qquad t\in[1,1/c].
\label{eq:tail-min}
\end{equation}

\paragraph{3. Exact integral evaluation via the switch point $t_0$.}
The function $B(t)$ is strictly decreasing on $(1,\infty)$, with
$\lim_{t\downarrow 1}B(t)=+\infty$ and $\lim_{t\uparrow\infty}B(t)=A$.
Hence there exists a unique $t_0\in(1,\infty)$ solving $B(t_0)=1$, namely
\begin{equation}
t_0 \ =\ \frac{1}{1-A},
\qquad\text{provided }A\in(0,1).
\label{eq:t0}
\end{equation}
We now distinguish two regimes:

\medskip\noindent
\emph{(i) Nontrivial regime: $A<1$ and $t_0\le 1/c$.}
Then
\[
\min\{1,B(t)\}=
\begin{cases}
1, & t\in[1,t_0],\\
B(t), & t\in[t_0,1/c],
\end{cases}
\]
and therefore
\begin{align}
\int_{1}^{\frac{1}{c}}\min\{1,B(t)\}\,\mathrm{d}t
&= (t_0-1)\;+\;\int_{t_0}^{1/c} B(t)\,\mathrm{d}t.
\label{eq:int-split}
\end{align}
Using the antiderivative
\begin{equation}
\int B(t)\,\mathrm{d}t
\;=\;
A\!\int\!\left(1+\frac{1}{t-1}\right)\mathrm{d}t
\;=\;
A\left(t+\ln|t-1|\right)+\mathrm{const},
\label{eq:anti}
\end{equation}
we obtain
\begin{align}
\int_{t_0}^{1/c} B(t)\,\mathrm{d}t
&= A\Bigl[\Bigl(\tfrac{1}{c}-t_0\Bigr)+\ln\!\Bigl(\tfrac{1/c-1}{t_0-1}\Bigr)\Bigr].
\label{eq:int-B}
\end{align}
Combining Eq.~\ref{eq:EZ-split}, Eq.~\ref{eq:int-split}, and Eq.~\ref{eq:int-B} yields
\begin{align}
\mathbb{E}[Z(x)]
&\le 1 + (t_0-1) + A\Bigl[\Bigl(\tfrac{1}{c}-t_0\Bigr)+\ln\!\Bigl(\tfrac{1/c-1}{t_0-1}\Bigr)\Bigr].
\label{eq:EZ-nontrivial}
\end{align}

\medskip\noindent
\emph{(ii) Saturated regime: either $A\ge 1$ or $t_0>1/c$.}
In this case $\min\{1,B(t)\}\equiv 1$ on $[1,1/c]$, hence
\begin{equation}
\mathbb{E}[Z(x)]\ \le\ 1+\left(\frac{1}{c}-1\right)\ =\ \frac{1}{c}.
\label{eq:EZ-saturated}
\end{equation}

\paragraph{4. A concise, numerically stable upper bound.}
To present a single bound covering both regimes, note that for $A\in(0,1)$,
\begin{align}
\mathbb{E}[Z(x)]
&\le
1+(t_0-1)+A\Bigl(\tfrac{1}{c}-t_0\Bigr)+A\ln\!\Bigl(\tfrac{1/c-1}{t_0-1}\Bigr)
\notag\\
&=
t_0 + A\Bigl(\tfrac{1}{c}-t_0\Bigr) + A\ln\!\Bigl(\tfrac{1/c-1}{t_0-1}\Bigr).
\label{eq:EZ-compact}
\end{align}
Using $t_0=\frac{1}{1-A}$ and the elementary bounds
$\ln u \le u-1$ and $\ln u \le \ln(1/c)-\ln(t_0-1)$ for $u=\frac{1/c-1}{t_0-1}$,
one arrives at the clean relaxation
\begin{equation}
\mathbb{E}[Z(x)]
\ \lesssim\
1+\frac{1}{c}+\Bigl(1+\frac{1}{c}\Bigr)A,
\qquad\text{whenever }A\in(0,1)\text{ and }t_0\le\frac{1}{c}.
\label{eq:EZ-clean}
\end{equation}
Combining Eq.~\ref{eq:EZ-clean} with the trivial bound Eq.~\ref{eq:EZ-saturated}, and recalling Eq.~\ref{eq:cov1-def}–Eq.~\ref{eq:Z-def}, we get the final estimate
\begin{equation}
\kappa_1
\ =\
\mathbb{E}_{x\sim\mu}\!\left[\frac{1}{\pi_{\theta_1}(y_1^*(x)\mid x)}\right]
\ \lesssim\
1+\frac{1}{c}+\Bigl(1+\frac{1}{c}\Bigr)\,A
\;\wedge\;
\frac{1}{c}
\label{eq:cov1-master}
\end{equation}
(where $a\wedge b:=\min\{a,b\}$), with $A$ given by Eq.~\ref{eq:tail-bound}–Eq.~\ref{eq:B-def}.

\paragraph{5. Specialization to the linear softmax model.}
Substituting the explicit form of $A$,
\begin{align}
A\ =\ \frac{1}{\gamma}\left(
\kappa_0^{1/2}\cdot \frac{1}{\sqrt{n}}
\Bigl(d\log\frac{nB}{d\rho}\Bigr)^{3/2}
+\beta\log \kappa_0
\right), \notag
\end{align}
into Eq.~\ref{eq:cov1-master} and using the non-saturated branch yields the advertised bound
\begin{equation}
\kappa_1
\ \lesssim\
1+\frac{1}{c}
+\Bigl(1+\frac{1}{c}\Bigr)\frac{1}{\gamma}\left(
\kappa_0^{1/2}\cdot \frac{1}{\sqrt{n}}
\Bigl(d\log\frac{nB}{d\rho}\Bigr)^{3/2}
+\beta\log \kappa_0
\right).
\label{eq:coverage_1_final}
\end{equation}

\paragraph{Remarks.}
\begin{itemize}
\item[(i)] The constant $A$ aggregates the statistical error and regularization effects. The ``nontrivial'' case $A<1$ corresponds to the high–confidence regime where the failure tail is integrable with a logarithmic correction.
\item[(ii)] The monotonicity of $B(t)$ justifies the unique switch point $t_0$ and the split–integral evaluation. The clean relaxation Eq.~\ref{eq:EZ-clean} trades a small slack for algebraic transparency.
\item[(iii)] The dependence on $d$ and $n$ enters through the covering–number–driven statistical term, producing the characteristic factor $n^{-1/2}\!\bigl(d\log\frac{nB}{d\rho}\bigr)^{3/2}$.
\end{itemize}

\subsection{Recursive Analysis and Asymptotic Behavior of Policy Condition Number}

The analysis of the first iteration ($t=1$) provides the essential foundation for a recursive framework that extends our performance guarantees to an arbitrary number of iterations $T$. At its core, the argument relies on the observation that the bound on the policy performance at iteration $t$ is determined by two factors: (i) the statistical accuracy of the update at that step, and (ii) the inherited coverage coefficient from the previous iteration, $\kappa_{t-1}$. The recursive nature of this dependency induces a dynamic system of bounds whose long-term stability properties must be carefully analyzed.

\paragraph{Recursive Formulation.}
By induction on $t$, the derivation for $\kappa_1$ extends naturally to arbitrary $t$. Specifically, the coverage parameter at iteration $t$ satisfies the recurrence
\begin{equation}
\kappa_t\ \lesssim\ 1+\frac{1}{c}
+\Bigl(1+\frac{1}{c}\Bigr)\frac{1}{\gamma}\!\left(
\kappa_{t-1}^{1/2}\cdot \frac{1}{\sqrt{n}}\Bigl(d\log\frac{nB}{d\rho}\Bigr)^{3/2}
+\beta\log \kappa_{t-1}
\right).
\label{eq:coverage_recursion_expanded}
\end{equation}
This recurrence encapsulates the interaction between statistical error, represented by the square-root term, and regularization effects, represented by the logarithmic term. The combination ensures that although $\kappa_t$ may initially be large, successive iterations prevent uncontrolled growth, enforcing stability in the long run.

\paragraph{Simplification via Constants.}
To streamline the analysis, we define
\[
M_0 := 1+\frac{1}{c},\quad
M_1 := \Bigl(1+\tfrac{1}{c}\Bigr)\frac{1}{\gamma \sqrt{n}}\Bigl(d\log\tfrac{nB}{d\rho}\Bigr)^{3/2},\quad
M_2 := \Bigl(1+\tfrac{1}{c}\Bigr)\frac{\beta}{\gamma}.
\]
Substituting these definitions, the recurrence can be written more compactly as
\begin{equation}
\kappa_t \ \le\ M_0 + M_1\sqrt{\kappa_{t-1}} + M_2 \log \kappa_{t-1}.
\label{eq:recurrence-M}
\end{equation}
The presence of both $\sqrt{\kappa_{t-1}}$ and $\log \kappa_{t-1}$ reflects the dual statistical–regularization structure of the update process.

\paragraph{Bounding the Recurrence.}
Although Eq.~\ref{eq:recurrence-M} is nonlinear, we can simplify its analysis. Using the classical inequality $\log x \le \tfrac{2}{e}\sqrt{x}$ for $x\ge 1$ (a condition satisfied since $\kappa_{t-1}\ge 1$ as an expectation of inverse probabilities), we obtain
\begin{equation}
\kappa_t \ \le\ M_0 + K\sqrt{\kappa_{t-1}},\qquad
K := M_1 + \tfrac{2}{e}M_2.
\label{eq:recurrence-K}
\end{equation}
This upper bound reduces the problem to a square-root recursion, which admits tractable asymptotic analysis.

\paragraph{Fixed-Point Analysis.}
The iterative map $h(\kappa)=M_0+K\sqrt{\kappa}$ has a unique positive fixed point $U$, obtained by solving $U=M_0+K\sqrt{U}$. Solving this quadratic in $\sqrt{U}$ yields
\begin{equation}
U \;=\; \left(\frac{K + \sqrt{K^2 + 4M_0}}{2}\right)^2.
\label{eq:fixed_point_U_expanded}
\end{equation}
This fixed point represents the stable asymptotic value of the coverage parameter. Its existence guarantees that the self-rewarding process cannot diverge indefinitely; rather, it is inherently stable and bounded by problem-specific constants.

\paragraph{Convergence Rate.}
The contraction rate of the iteration is determined by the derivative of $h(\kappa)$ at the fixed point:
\[
h'(\kappa) = \frac{K}{2\sqrt{\kappa}},\qquad
h'(U) = \frac{K}{2\sqrt{U}}.
\]
Defining
\begin{equation}
q := \frac{K}{2\sqrt{U}} = \frac{K}{K+\sqrt{K^2+4M_0}} \;<\; 1,
\label{eq:contraction-rate}
\end{equation}
we observe that $0<q<1$, which implies that convergence to $U$ is geometric. The deviation from the fixed point shrinks by a factor of at least $q$ at each iteration, guaranteeing rapid stabilization.

\paragraph{Explicit Bound for Finite $T$.}
This geometric contraction yields an explicit finite-$T$ bound:
\begin{equation}
\kappa_t \;\le\; U + q^T\big(\kappa_0-U\big)
\label{eq:covT-explicit_expanded}
\end{equation}
for all $T\ge 1$. The residual dependence on the initialization decays exponentially, while the asymptotic behavior is governed entirely by $U$.

\paragraph{Asymptotic Behavior and Practical Implications.}
To obtain further intuition, consider the regime where $\beta \lesssim n^{-1/2}$, so that the statistical term $M_1$ dominates the regularization term $M_2$. In this case, the fixed point satisfies
\[
U \;\asymp\; M_0 + M_1^2,
\]
leading to the explicit bound
\begin{align}
\kappa_t
&\lesssim M_0 + M_1^2 + M_2^2 \;+\; \Bigl(1+\sqrt{M_0/(M_1^2+M_2^2)}\Bigr)^{-T} \kappa_0 \notag\\
&\lesssim \underbrace{\frac{1}{c}\;+\;\frac{1}{n}\Bigl(\frac{1}{c\gamma}\Bigr)^2\!\left(d\log\frac{nB}{d\rho}\right)^3}_{\text{Asymptotic Stable Value}}
\;+\;\underbrace{\bigl(1+\sqrt{nc}\bigr)^{-T} \kappa_0}_{\text{Decaying Initial Condition}}.
\label{eq:cov_T_final_expanded}
\end{align}

This final bound highlights two crucial features. First, the influence of the initialization $\kappa_0$ vanishes exponentially fast with $T$, ensuring robustness to poor starting conditions. Second, the asymptotic stable value decreases as $n$ grows, confirming that larger sample sizes yield improved coverage and reduced statistical uncertainty. These insights reinforce the interpretation of the self-rewarding process as not only stable but also statistically efficient in the large-sample regime.

\subsection{Final Performance Bound for Arbitrary Iteration $T$}

Having established the asymptotic stability of the coverage coefficient, we are now prepared to present the main performance guarantee for the policy $\pi_{\theta_T}$ after an arbitrary $T$ rounds of self-rewarding training. The derivation closely parallels the single-step analysis, but it crucially leverages our recursive understanding of $\kappa_t$ across iterations.

\paragraph{From Coverage to Performance.}
The probability that the policy $\pi_{\theta_T}$ assigns insufficient probability mass to its own optimal response $y_T^*(x)$ is controlled by its performance gap relative to the KL-regularized optimal policy. This gap, in turn, depends on the statistical error of the learning step at iteration $T$, which is governed by the coverage coefficient of the previous iterate $\kappa_{t-1}$. Formally, we obtain the $T$-step analogue of the single-step bound:
\begin{align}
\mathbb{P}_{x\sim\mu}\!\big[\pi_{\theta_T}(y_T^*(x)\mid x)\le 1-\delta\big]
&\;\lesssim\; \frac{1}{\gamma \delta}\!\left(
\kappa_{t-1}^{1/2}\cdot \frac{1}{\sqrt{n}}\Bigl(d\log\frac{nB}{d\rho}\Bigr)^{3/2}
\;+\;\beta\log \kappa_{t-1}
\right).
\label{eq:T-step-bound}
\end{align}
This expression links the performance at iteration $T$ to the coverage inherited from iteration $T-1$.

\paragraph{Substituting the Asymptotic Bound.}
The crucial final step is to substitute into Eq.~\ref{eq:T-step-bound} our explicit asymptotic bound for $\kappa_{t-1}$ from Eq.~\ref{eq:cov_T_final_expanded}. This replacement eliminates the iteration-dependent and unknown $\kappa_{t-1}$, yielding an expression in terms of fundamental parameters of the problem $(n,d,c)$ and the initial condition $\kappa_0$. 

By taking the square root of the bound on $\kappa_{t-1}$ (and applying the inequality $\sqrt{a+b}\le \sqrt{a}+\sqrt{b}$), we obtain a clean decomposition. As $T$ grows, the dominant contribution arises from the square root of the asymptotic stable value, which scales with $1/c$. The residual dependence on the initialization decays exponentially and quickly becomes negligible. Substituting these estimates back into Eq.~\ref{eq:T-step-bound}, and noting that the $\beta\log \kappa_{t-1}$ term is of lower order, we arrive at the final result:
\begin{align}
\mathbb{P}_{x\sim\mu}\!\big[\pi_{\theta_T}(y_T^*(x)\mid x)\le 1-\delta\big]
&\;\lesssim\; \frac{1}{\gamma \delta \sqrt{n}}
\Bigl(d\log\frac{nB}{d\rho}\Bigr)^{3/2}
\left(
\frac{1}{\sqrt{c}}+\bigl(1+\sqrt{nc}\bigr)^{-\frac{T-1}{2}}\sqrt{\kappa_0}
\right).
\label{eq:final-performance-T}
\end{align}

\paragraph{Interpretation.}
This inequality, the culmination of our analysis, provides several insights into the behavior of the self-rewarding process for the linear softmax model class:

\begin{itemize}
    \item \textbf{Statistical Complexity.}  
    The leading error term scales as $\tfrac{1}{\sqrt{n}}(d\log n)^{3/2}$, quantifying the inherent learning difficulty of the parametric model. Performance improves as $n$ increases, while higher dimensionality $d$ incurs increased complexity. This replaces the inapplicable $\log|\Pi|$ term in the finite-class setting with a refined, distribution-dependent complexity measure.
    
    \item \textbf{Stability and Self-Correction.}  
    The second term inside the parentheses demonstrates exponential decay of the influence of the initial coverage $\kappa_0$. Even if the initial policy is poorly calibrated (large $\kappa_0$), its effect on performance vanishes rapidly as $T$ grows. This validates the intuition that the self-rewarding loop is self-correcting.
    
    \item \textbf{Long-Term Performance.}  
    In the limit $T\to\infty$, the decaying initialization term disappears entirely. The ultimate performance is dictated solely by the statistical error and the problem’s intrinsic difficulty, captured by $c$. This confirms that the process converges to a stable, initialization-independent error floor.
\end{itemize}

\paragraph{Conclusion.}
Equation \ref{eq:final-performance-T} completes the adaptation of the general recursive argument to the linear softmax model class. It provides a rigorous and interpretable guarantee: the model’s performance stabilizes to a predictable asymptotic rate, while transient initialization effects are washed out exponentially fast. This establishes both the robustness and the efficiency of self-rewarding training in this setting.

\color{black}

\section{Proof of Corollary \ref{corollary spectral}: Data-Dependent Bounds via Effective Dimension}\label{SEC; dimen}

We strengthen the parametric guarantee by replacing the ambient dimension $d$ in the statistical factor with a data-dependent \emph{effective dimension}. This mitigates  ``curse of dimensionality'' and better reflects the spectral structure of the feature covariance under the data distribution.

\begin{definition}[Ridge effective dimension]
\label{def:eff-dim}
For any $\lambda>0$ define the ridge effective dimension
\[
d_{\mathrm{eff}}(\lambda)\;:=\;\mathrm{Tr}\!\left(\Sigma_\varphi\big(\Sigma_\varphi+\lambda I\big)^{-1}\right)
\;=\;\sum_{i=1}^d \frac{\lambda_i}{\lambda_i+\lambda}\,,
\]
where $\{\lambda_i\}_{i=1}^d$ are the eigenvalues of $\Sigma_\varphi$ in nonincreasing order. It holds that $0\le d_{\mathrm{eff}}(\lambda)\le \mathrm{rank}(\Sigma_\varphi)\le d$ and $d_{\mathrm{eff}}(\lambda)$ is nonincreasing in $\lambda$. In the isotropic case $\Sigma_\varphi=I$ and for $\lambda\ll 1$ we have $d_{\mathrm{eff}}(\lambda)\approx d$.
\end{definition}

\begin{lemma}[Metric entropy with covariance geometry]
\label{lem:entropy-effdim}
Let $\mathcal{F}:=\{f_\theta(x.y)=\langle\theta,\varphi(x,y)\rangle:\|\theta\|_2\le B\}$. For the $L_2(\mu)$ metric induced by $\Sigma_\varphi$, the $\varepsilon$-covering number of $\mathcal{F}$ satisfies
\[
\log \mathcal{N}\!\big(\varepsilon,\mathcal{F},L_2(\mu)\big)
\;\lesssim\;
d_{\mathrm{eff}}\!\Big(\frac{\varepsilon^2}{B^2}\Big)\cdot \log\!\Bigg(\frac{c\,B}{\varepsilon}\Bigg)\,,
\]
for a universal constant $c>0$. Consequently, whenever an argument in the proof of Theorem~\ref{thm:main_results_softmax} uses the ambient-dimension entropy bound $\log \mathcal{N}\lesssim d\log(c'B/\varepsilon)$, it can be replaced by the covariance-adaptive bound above.
\end{lemma}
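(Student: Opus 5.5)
The plan is to reduce the covering problem for $\mathcal{F}$ to covering an ellipsoid in $\mathbb{R}^d$, and then split that ellipsoid into a ``head'' spanned by the large-eigenvalue directions of $\Sigma_\varphi$ and a ``tail'' that needs no covering at all.

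\textbf{Step 1 (isometry).} For $\theta,\theta'$ in the ball,
\[
\|f_\theta-f_{\theta'}\|_{L_2(\mu)}^2=\mathbb{E}\big[\langle\theta-\theta',\varphi(x,y)\rangle^2\big]=\|\Sigma_\varphi^{1/2}(\theta-\theta')\|_2^2 .
\]
So it suffices to cover $E:=\Sigma_\varphi^{1/2}\{\|\theta\|_2\le B\}$ in Euclidean norm at scale $\varepsilon$.

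\textbf{Step 2 (head and tail).} Diagonalize $\Sigma_\varphi=\sum_i\lambda_i u_iu_i^\top$ and set $\lambda:=\varepsilon^2/B^2$. Let $k:=\#\{i:\lambda_i\ge\lambda\}$.
\begin{itemize}
\item \emph{Counting the head.} Each index with $\lambda_i\ge\lambda$ contributes at least $1/2$ to $\sum_i\lambda_i/(\lambda_i+\lambda)$. Hence $k\le 2\,d_{\mathrm{eff}}(\lambda)$.
\item \emph{The tail is already small.} Write $\theta=\theta_H+\theta_T$ with $\theta_H$ the component in the head span $\mathrm{span}\{u_1,\dots,u_k\}$. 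Then
\[
\|\Sigma_\varphi^{1/2}\theta_T\|_2^2=\sum_{i>k}\lambda_i\langle u_i,\theta\rangle^2\le\lambda B^2=\varepsilon^2 .
\]
So approximating $\theta$ by a cover point of its head component costs at most an extra $\varepsilon$.
\end{itemize}

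\textbf{Step 3 (volumetric cover of the head).} The head image $\Sigma_\varphi^{1/2}\theta_H$ lies in a $k$-dimensional Euclidean ball of radius $\sqrt{\lambda_1}\,B\le B$. This uses $\lambda_1\le\mathrm{Tr}\,\Sigma_\varphi=\mathbb{E}\|\varphi\|_2^2\le 1$. The volumetric argument already used in Appendix~\ref{sec proof linear} gives an $\varepsilon$-net of this ball of size at most $(3B/\varepsilon)^k$.

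Pulling the net points back to $\theta$'s in the head span with norm at most $B$, they cover $\mathcal{F}$ at scale $2\varepsilon$. Taking logarithms and rescaling $\varepsilon$ gives
\[
\log\mathcal{N}\lesssim d_{\mathrm{eff}}(\varepsilon^2/B^2)\log(cB/\varepsilon).
\]
The rescaling changes the argument of $d_{\mathrm{eff}}$ only by a constant factor. That factor is harmless, since $d_{\mathrm{eff}}(a\lambda)\le\max(1,1/a)\,d_{\mathrm{eff}}(\lambda)$.

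\textbf{Main obstacle: the ``consequently'' clause.} The proof of Theorem~\ref{thm:main_results_softmax} uses nets in the sup log-ratio metric, and Lemma~\ref{lemma_Lip} needs pointwise closeness, whereas the new cover only controls the $L_2(\mu)$ distance. The first thing I would exploit is that in the pairwise quantity the log-normalizers cancel exactly:
\[
\Delta^{\pi_\theta}-\Delta^{\pi_{\theta'}}=\beta\big(f_{\theta-\theta'}(x,y)-f_{\theta-\theta'}(x,y')\big).
\]
Taking $\mu$ in the definition of $\Sigma_\varphi$ to be the law of $(x,y)$ under $\mu\otimes\pi_0$, the $L_2(\pi_0\otimes\pi_0)$ norm of this difference is at most $2\beta\|f_\theta-f_{\theta'}\|_{L_2(\mu)}$.

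I would then redo the Lipschitz step in mean rather than pointwise. Using $|a^2-b^2|\le|a-b|\,|a+b|$ with $|a+b|\le 4B_{n,\rho}$ on the clipping event, together with Cauchy--Schwarz, gives
\[
|L(\pi_\theta)-L(\pi_{\theta'})|\le 8\beta B_{n,\rho}\varepsilon .
\]
The empirical counterpart $|\hat L_n(\pi_\theta)-\hat L_n(\pi_{\theta'})|$ is not controlled by a population $L_2$ cover. I would handle it either with an empirical $L_2$ cover, since the entropy bound applies to the empirical covariance and this is then compared with $\Sigma_\varphi$ by matrix concentration at ridge level $\lambda\asymp 1/n$, or more simply with a chaining/Bernstein argument indexed by the $L_2$ cover. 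With this in place, the rest of the union-bound argument goes through verbatim, with $d$ replaced by $d_{\mathrm{eff}}$.
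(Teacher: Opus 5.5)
Your proof of the entropy bound is correct, and it takes a genuinely different route from the paper's.

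\emph{The paper's route.} It covers the ellipsoid by an axis-aligned product of slabs, with per-axis granularity matched to $\lambda_i^{1/2}$. It then sums $\log(1+cB\lambda_i^{1/2}/\varepsilon)$ over all $d$ directions. Finally it appeals to a ``monotone integral comparison with ridge truncation'' to reach $d_{\mathrm{eff}}(\varepsilon^2/B^2)$.

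\emph{Your route.} You cut at $\lambda=\varepsilon^2/B^2$. The tail directions are left uncovered, since they contribute at most $\lambda B^2=\varepsilon^2$ in $L_2(\mu)$. Only the $k\le 2d_{\mathrm{eff}}(\lambda)$ head directions are covered, with one volumetric net.

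That truncation is exactly what the paper's last step needs. Read literally, a tail term $\log(1+c\sqrt{\lambda_i/\lambda})\asymp\sqrt{\lambda_i/\lambda}$ is not dominated by $\lambda_i/(\lambda_i+\lambda)\asymp\lambda_i/\lambda$. For example, take a flat spectrum $\lambda_i=1/d$ with $d\lambda\gg1$. The paper's sum is then of order $\sqrt{d/\lambda}$, while $d_{\mathrm{eff}}(\lambda)\approx1/\lambda$ and the true covering number is $1$ (your $k=0$). Your argument is therefore the rigorous version, and it loses only constants. The eigenvalue-adaptive logarithm suggested by the per-axis count is also available to you: cover the head ellipsoid rather than its enclosing ball, which gives $\sum_{i\le k}\log(3B\sqrt{\lambda_i}/\varepsilon)$.

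One small correction. Pulling net points of the radius-$\sqrt{\lambda_1}B$ ball back through $\Sigma_\varphi^{-1/2}$ need not give parameters of norm at most $B$. The softmax proof's net definition requires centers inside the class. To get them, take an internal $2\varepsilon$-net of the head ellipsoid; its cardinality is no larger.

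On the replacement clause you go further than the paper, which simply asserts it. That assertion is not automatic, because the softmax argument and Lemma~\ref{lemma_Lip} use sup-norm log-ratio closeness. Your population-side step, via cancellation of the normalizers, is right. The empirical side, however, is still an unexecuted choice between two routes, and three further points need attention.

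\emph{Matrix-concentration route.} Suppose you use relative concentration of the empirical covariance, $\hat\Sigma\preceq C(\Sigma_\varphi+\lambda I)$. For $\|\varphi\|_2\le1$ and constant $C$, this needs $n\lambda\gtrsim\log(d_{\mathrm{eff}}(\lambda)/\rho)$. In addition, transferring a population net to the empirical norm costs $\lambda B^2$ in squared radius. Together these force $\varepsilon\gtrsim B\sqrt{\log(d_{\mathrm{eff}}/\rho)/n}$. That is far coarser than the $\varepsilon\asymp d\log n/n$ balanced in the softmax proof. The approximation term $8\beta B_{n,\rho}\varepsilon$ must then be re-balanced. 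For $\beta\asymp n^{-1/2}$ it adds a term linear in $B$ to $\varepsilon_{\mathrm{stat}}^2$, where the paper has only $\log B$.

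\emph{Clipping.} Like Lemma~\ref{lemma_Lip}, your mean-Lipschitz bound holds only on the common clipping event. Replace the hard indicator by the $1$-Lipschitz truncation $u\mapsto\max\{-B_{n,\rho},\min\{B_{n,\rho},u\}\}$ applied to each $\Delta$. Then the bound holds everywhere and $\hat L_n(\pi_{\theta_1})=0$ is preserved.

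\emph{Round dependence.} Round $t$ samples from $\mu\otimes\pi_{\theta_t}$. So $\Sigma_\varphi$ and the spectral hypothesis of Corollary~\ref{corollary spectral} must be taken per round, or uniformly over the iterates.
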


\noindent
\emph{Proof of Lemma \ref{lem:entropy-effdim}.}
Diagonalize $\Sigma_\varphi=U\Lambda U^\top$ and reparameterize $\theta=U z$. The $L_2(\mu)$-norm is $\|f_\theta\|_{L_2(\mu)}^2=\theta^\top\Sigma_\varphi\theta=\sum_i \lambda_i z_i^2$. Cover the ellipsoid $\{z:\sum_i \lambda_i z_i^2\le B^2\}$ by axis-aligned slabs with granularity matched to $\lambda_i^{1/2}$. Summing the logarithms of the per-axis covering counts yields $\sum_i \log\big(1+c B\lambda_i^{1/2}/\varepsilon\big)$. A monotone integral comparison together with the ridge truncation $\lambda_i/(\lambda_i+\varepsilon^2/B^2)$ gives the stated $d_{\mathrm{eff}}(\varepsilon^2/B^2)$ dependence. \hfill $\square$

\medskip

\paragraph{Explicit regimes for the ridge effective dimension.}
The following elementary bounds will be useful:
\begin{equation}
d_{\mathrm{eff}}(\lambda)
=\sum_{i=1}^{d}\frac{\lambda_i}{\lambda_i+\lambda}
\;\le\;\sum_{i=1}^{d}\mathbb{I}\{\lambda_i\ge \lambda\}
+\frac{1}{\lambda}\sum_{i:\,\lambda_i<\lambda}\lambda_i
\;\le\;\operatorname{rank}(\Sigma_\varphi)\;\wedge\;\frac{\operatorname{Tr}(\Sigma_\varphi)}{\lambda}.
\label{eq:deff-basic}
\end{equation}

We quantify $d_{\mathrm{eff}}(\lambda)$ under concrete spectral assumptions and then substitute it into the parametric bound of Theorem~\ref{thm:main_results_softmax}.

\begin{assumption}[Spectral regimes]
\label{asmp:spectral}
Let $\{\lambda_i\}_{i=1}^d$ be the spectrum of $\Sigma_\varphi$ in nonincreasing order. We consider the following cases.
\begin{enumerate}
\item[(A)] \textbf{Exponential decay.} There exist constants $C>0$ and $\alpha>0$ such that $\lambda_i\le C\,\mathrm{e}^{-\alpha i}$ for all $i$.
\item[(B)] \textbf{Polynomial decay.} There exist constants $C>0$ and $p>1$ such that $\lambda_i\le C\,i^{-p}$ for all $i$.
\item[(C)] \textbf{Spiked$+$small tail.} There is an integer $r\ge 1$ and $\vartheta>0$ such that $\lambda_i\ge \vartheta$ for $i\le r$ and $\sum_{i>r}\lambda_i=\tau$ with $\tau>0$ possibly small.
\end{enumerate}
\end{assumption}

\begin{lemma}[Effective dimension under Assumption~\ref{asmp:spectral}]
\label{lem:deff-regimes}
Under Assumption~\ref{asmp:spectral} the following bounds hold for all $\lambda>0$.
\begin{enumerate}
\item[(A)] \emph{Exponential decay implies logarithmic scaling:}
\[
d_{\mathrm{eff}}(\lambda)\;\lesssim\;\log\!\Big(\frac{C}{\lambda}\Big)\,.
\]
In particular, choosing $\lambda\asymp c/n$ gives $d_{\mathrm{eff}}(\lambda)\lesssim \log n$.
\item[(B)] \emph{Polynomial decay yields a power-law in $\lambda$:}
\[
d_{\mathrm{eff}}(\lambda)\;\lesssim\;\Big(\frac{C}{\lambda}\Big)^{\!1/p}\,.
\]
In particular, choosing $\lambda\asymp c/n$ gives $d_{\mathrm{eff}}(\lambda)\lesssim n^{1/p}$.
\item[(C)] \emph{Spiked$+$small tail gives $r+\mathcal{O}(1)$ when $\lambda$ dominates the tail:}
\[
d_{\mathrm{eff}}(\lambda)\;\le\;r\;+\;\frac{\tau}{\lambda}\,.
\]
Hence for any $\lambda\ge \tau$ we have $d_{\mathrm{eff}}(\lambda)\le r+1$. If moreover $r=\mathcal{O}(\log d)$, then $d_{\mathrm{eff}}(\lambda)=\mathcal{O}(\log d)$.
\end{enumerate}
\end{lemma}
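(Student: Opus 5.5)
The plan is to bound the sum $d_{\mathrm{eff}}(\lambda)=\sum_{i}\lambda_i/(\lambda_i+\lambda)$ using the elementary split already recorded in Eq.~\eqref{eq:deff-basic}. Fix a cutoff index $k$, use $\lambda_i/(\lambda_i+\lambda)\le 1$ for $i\le k$, and use $\lambda_i/(\lambda_i+\lambda)\le \lambda_i/\lambda$ for $i>k$. This gives
\[
d_{\mathrm{eff}}(\lambda)\;\le\; k+\frac{1}{\lambda}\sum_{i>k}\lambda_i .
\]
In each regime we choose $k$ so that the tail sum divided by $\lambda$ is $\mathcal{O}(1)$, or at most of the same order as $k$. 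The three cases then reduce to summing the assumed spectral envelope.

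For (A), take $k=\lceil \alpha^{-1}\log(C/\lambda)\rceil\vee 0$. The geometric tail then satisfies
\[
\sum_{i>k}\lambda_i\le C e^{-\alpha k}/(e^{\alpha}-1)\le \lambda/(e^\alpha-1),
\]
so $d_{\mathrm{eff}}(\lambda)\le \alpha^{-1}\log(C/\lambda)+1+1/(e^\alpha-1)$. This is $\lesssim\log(C/\lambda)$, with the $\alpha$-dependent constants absorbed into $\lesssim$ (we assume $\lambda\le C/e$ so that the logarithm dominates the constants). Setting $\lambda\asymp c/n$ gives $\log(Cn/c)\lesssim\log n$.

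For (B), take $k=\lceil (C/\lambda)^{1/p}\rceil$. An integral comparison gives
\[
\sum_{i>k} C i^{-p}\le C k^{1-p}/(p-1),
\]
so the tail contribution is at most $\frac{C}{\lambda}k^{1-p}/(p-1)\le k/(p-1)$, because $k^p\ge C/\lambda$. Hence $d_{\mathrm{eff}}(\lambda)\le (1+\tfrac{1}{p-1})\big((C/\lambda)^{1/p}+1\big)\lesssim (C/\lambda)^{1/p}$. Substituting $\lambda\asymp c/n$ yields $n^{1/p}$.

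For (C), take $k=r$ directly. The tail equals $\tau$ by assumption, which gives $d_{\mathrm{eff}}(\lambda)\le r+\tau/\lambda$. When $\lambda\ge\tau$ this is at most $r+1$, and if $r=\mathcal{O}(\log d)$ the bound is $\mathcal{O}(\log d)$.

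The only delicate point is (A) and (B) for large $\lambda$, where the chosen cutoff can be $0$ or the logarithm can be negative. There we fall back on the trivial bound $d_{\mathrm{eff}}(\lambda)\le \operatorname{Tr}(\Sigma_\varphi)/\lambda$ from Eq.~\eqref{eq:deff-basic}, which is $\mathcal{O}(1)$ in that range. We state that the $\lesssim$ bounds are meant in the regime $\lambda\lesssim C$, which is the regime relevant to Corollary~\ref{corollary spectral}.
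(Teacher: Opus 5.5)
Your proposal is correct and follows the paper's proof essentially step for step: the same cutoffs $\lceil\alpha^{-1}\log(C/\lambda)\rceil$ and $\lceil(C/\lambda)^{1/p}\rceil$, a head count plus a geometric-series or integral-comparison tail bound, and the split at $r$ for (C). Two of your choices are, if anything, more careful than the paper. Your fixed-index split $d_{\mathrm{eff}}(\lambda)\le k+\lambda^{-1}\sum_{i>k}\lambda_i$ avoids the paper's loose mixing of the threshold split in Eq.~\ref{eq:deff-basic} with an index tail. Your restriction of (A) to $\lambda\lesssim C$ is genuinely needed: the paper asserts (A) for all $\lambda>0$, yet once $\lambda\ge C$ we have $\log(C/\lambda)\le 0$ while $d_{\mathrm{eff}}(\lambda)>0$.
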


\noindent
\emph{Proof of Lemma \ref{lem:deff-regimes}.}
\textbf{(A)} Let $m=\big\lceil\alpha^{-1}\log(C/\lambda)\big\rceil$. Then $\lambda_m\lesssim \lambda$ and thus $\#\{i:\lambda_i\ge\lambda\}\le m\lesssim\log(C/\lambda)$. The tail sum obeys $\sum_{i>m}\lambda_i\lesssim \mathrm{e}^{-\alpha m}\lesssim \lambda$, so the second term in Eq.~\ref{eq:deff-basic} is $\mathcal{O}(1)$. \textbf{(B)} Let $m=\big\lceil(C/\lambda)^{1/p}\big\rceil$ so that $\lambda_m\lesssim \lambda$. Then $\#\{\lambda_i\ge\lambda\}\le m\lesssim (C/\lambda)^{1/p}$ and $\sum_{i>m}\lambda_i\lesssim \int_m^\infty x^{-p}\mathrm{d}x \asymp m^{1-p}$, making the tail contribution $\lesssim m^{1-p}/\lambda\asymp (C/\lambda)^{1/p}$. \textbf{(C)} Split the sum at $r$ and use Eq.~\ref{eq:deff-basic} on the tail: $\sum_{i>r}\frac{\lambda_i}{\lambda_i+\lambda}\le \frac{1}{\lambda}\sum_{i>r}\lambda_i=\tau/\lambda$. \hfill $\square$

\medskip

\begin{corollary}[Parametric bound with effective dimension]
\label{cor:lin-softmax-effdim}
Under conditions of Theorem~\ref{thm:main_results_softmax}, fix any $\lambda>0$. With probability at least $1-\rho$,
\[
\mathbb{P}_{x\sim\mu}\!\left[\pi_{\theta_T}\!\big(y_T^\ast(x)\mid x\big)\le 1-\delta\right]
\;\lesssim\;
\frac{1}{\sqrt{n}}\cdot\frac{1}{\gamma\,\delta}\,
\Bigg(\,d_{\mathrm{eff}}(\lambda)\cdot \log\!\frac{nB}{\rho}\,\Bigg)^{\!3/2}
\cdot \Bigg(\frac{1}{\sqrt{c}}+\sqrt{\kappa_0}\,\big(1+\sqrt{n c}\big)^{-(T-1)/2}\Bigg)\,.
\]
\end{corollary}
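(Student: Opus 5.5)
The plan is to rerun the proof of Theorem~\ref{thm:main_results_softmax} verbatim and change only the complexity bookkeeping in its uniform-convergence step. The whole pipeline converts the statistical error $\varepsilon_{\mathrm{stat}}$ into the final tail bound: Lemma~\ref{lemma:prob_bound_initial}, the performance-gap decomposition \eqref{eq:display-21}, the good/bad region split, the recursion for $\kappa_t$, and Lemma~\ref{lem:coverage_stability}. None of these steps uses the form of $\varepsilon_{\mathrm{stat}}$ except through the coefficient $M_1$, and there only as a multiplicative constant. So if the single-step statistical factor $(d\log\frac{nB}{d\rho})^{3/2}$ can be replaced by $(d_{\mathrm{eff}}(\lambda)\log\frac{nB}{\rho})^{3/2}$, the contraction argument and the final substitution $\kappa_{T-1}^{1/2}\lesssim c^{-1/2}+(1+\sqrt{nc})^{-(T-1)/2}\sqrt{\kappa_0}$ go through unchanged. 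The claimed bound then follows.

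The first step is the covering bound. In the finite-net argument of Step~1 of the uniform-convergence section, the parameter-space net of size $(8B/\varepsilon)^d$ is replaced by a net for $\mathcal{F}$ in the covariance-induced metric. By Lemma~\ref{lem:entropy-effdim}, this net has $\log|\Psi_\varepsilon|\lesssim d_{\mathrm{eff}}(\varepsilon^2/B^2)\log(cB/\varepsilon)$. We then rerun Bernstein's inequality with the union bound over this net and obtain
\[
L(\pi_{\theta'})\le 2\hat L_n(\pi_{\theta'})+c\tfrac{M}{n}\big(d_{\mathrm{eff}}\log(B/\varepsilon)+\log(1/\rho)\big).
\]
Choosing the resolution so that $\varepsilon^2/B^2\asymp\lambda$ with $\varepsilon\gtrsim 1/n$ turns the log factor into $\log(nB/\rho)$. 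Since $d_{\mathrm{eff}}$ is nonincreasing, any fixed $\lambda$ can be reached by taking the resolution at least that coarse, at the price of extra approximation error.

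The second step, and the expected main obstacle, is the approximation step. Lemma~\ref{lemma_Lip} needs sup-norm closeness of log-ratios, which includes the log-partition term. The covariance-metric net, however, only gives $L_2(\mu)$ closeness of the linear scores. I would handle this by using the $L_2$ version instead. The DPO loss $Z_i$ depends on $\pi_\theta$ only through differences $\langle\varphi(x,y)-\varphi(x,y'),\theta\rangle$, because the log-partition $A_\theta(x)$ cancels in $\Delta^{\pi_\theta}$. Hence $|Z_i(\pi_\theta)-Z_i(\pi_{\theta'})|\le 4\beta B_{n,\rho}\,|\langle\varphi(x,y)-\varphi(x,y'),\theta-\theta'\rangle|$ on the clipping event. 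Taking expectations and applying Cauchy--Schwarz bounds the population discrepancy by $8\beta B_{n,\rho}\varepsilon$, measured in the $L_2(\mu\otimes\pi_0)$ metric. For the empirical discrepancy, I would either use the same net with respect to the empirical covariance together with a standard concentration of $\hat\Sigma_\varphi$ around $\Sigma_\varphi$ (costing only log factors), or simply note that only the ERM's population loss is needed and bound it through the net element's population loss. If this is too delicate, a fallback is to cover at resolution $1/n$ in the sup metric only along the top $d_{\mathrm{eff}}$ eigendirections and absorb the tail directions into the bias via $\sum_{i>m}\lambda_i\lesssim\lambda$.

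The third step is the clipping level. With $\log|\Psi_\varepsilon|$ now of order $d_{\mathrm{eff}}\log(nB/\rho)$, it gives $B_{n,\rho}\asymp d_{\mathrm{eff}}(\lambda)\log(nB/\rho)$, so that
\[
\varepsilon_{\mathrm{stat}}\lesssim n^{-1/2}\big(d_{\mathrm{eff}}(\lambda)\log\tfrac{nB}{\rho}\big)^{3/2},
\]
exactly as in the ambient proof. Substituting this into \eqref{eq:final-y1}, the recursion \eqref{eq:coverage_recursion_expanded}, and \eqref{eq:final-performance-T} yields the statement.
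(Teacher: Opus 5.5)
Your route is the paper's route: its entire proof of this corollary is the instruction to repeat the proof of Theorem~\ref{thm:main_results_softmax} with the ambient entropy bound replaced by Lemma~\ref{lem:entropy-effdim}. You improve on it in one respect. The log-partition cancels, giving $\Delta^{\pi_\theta}=\beta\langle\varphi(x,y)-\varphi(x,y'),\theta-\theta_0\rangle$, so an $L_2$-type net controls the population discrepancy. This correctly handles the mismatch between the sup-norm hypothesis of Lemma~\ref{lemma_Lip} and the $L_2(\mu)$ cover of Lemma~\ref{lem:entropy-effdim}, which the paper passes over.

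\textbf{The approximation error is never paid.} You say any $\lambda$ can be reached by a coarser net ``at the price of extra approximation error'', but that price does not appear in your final bound. To get entropy $d_{\mathrm{eff}}(\lambda)$ you need resolution $\varepsilon\gtrsim B\sqrt{\lambda}$. This adds $8\beta B_{n,\rho}B\sqrt{\lambda}$ to the bound on $L(\pi_{\theta_1})$. That term is absorbed by $\varepsilon_{\mathrm{stat}}^2\asymp B_{n,\rho}^2\,d_{\mathrm{eff}}(\lambda)\log(nB/\rho)/n$ only if $\beta B\sqrt{\lambda}\lesssim B_{n,\rho}\,d_{\mathrm{eff}}(\lambda)\log(nB/\rho)/n$; in particular this holds for $\lambda\lesssim 1/(nB^2)$ when $\beta\asymp n^{-1/2}$.

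This restriction is not an artifact of your method. By \eqref{eq:deff-basic}, $d_{\mathrm{eff}}(\lambda)\le\mathrm{Tr}(\Sigma_\varphi)/\lambda\le 1/\lambda$. So the right-hand side of the claimed bound tends to $0$ as $\lambda\to\infty$, while the left-hand side does not depend on $\lambda$ at all. Hence no argument, yours or the paper's one-line substitution, gives the bound for every $\lambda>0$. You must either restrict $\lambda$ to the small regime above, or carry the bias term into $\varepsilon_{\mathrm{stat}}^2$. The restriction is harmless for Corollary~\ref{corollary spectral}, where $d_{\mathrm{eff}}$ grows only logarithmically in $1/\lambda$.

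\textbf{The empirical side of your approximation step also needs repair.} Your second option is circular. Bounding $L(\pi_{\theta'})$ for the net element through the union-bounded Bernstein inequality requires a bound on $\hat L_n(\pi_{\theta'})$. Since $\hat L_n(\pi_{\theta_1})=0$, the only handle on it is closeness of $\theta'$ to the data-dependent $\theta_1$ in the empirical metric $\frac{1}{n}\sum_i\langle\psi_i,\theta'-\theta_1\rangle^2$, with $\psi_i=\varphi(x_i,y_i)-\varphi(x_i,y_i')$. A population $L_2(\mu)$ net does not provide this.

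Your first option works only under two conditions:
\begin{itemize}
\item The net must be fixed in the population metric. A net built from the empirical covariance is data dependent and invalidates the union bound.
\item It must be transferred by a high-probability comparison of empirical and population covariances of $\psi$, such as $\hat\Sigma_\psi\preceq 2\Sigma_\psi+\lambda' I$. This needs $n\gtrsim\log(\cdot)/\lambda'$ and adds a term of order $\beta^2\lambda' B^2$ to $\hat L_n(\pi_{\theta'})$. So it costs more than logarithmic factors and must be tracked.
\end{itemize}

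\textbf{The covariance changes across rounds.} At round $t$ the pairs are drawn from $\mu\otimes\pi_{\theta_t}\otimes\pi_{\theta_t}$, so the covariance defining the net changes with $t$. For the recursion to run with a fixed $M_1$, you need one $\Sigma_\varphi$ that dominates the feature covariance under every iterate. Then the difference-feature covariance is $\preceq 2\Sigma_\varphi$, and $d_{\mathrm{eff}}$ is monotone in the Loewner order.
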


\noindent
\emph{Proof}
Repeat the proof of Theorem~\ref{thm:main_results_softmax} and Corollary~\ref{corollay 1} replacing the ambient-dimension entropy bound by Lemma~\ref{lem:entropy-effdim}. The empirical Bernstein step and the iterative self-correction argument are unchanged, yielding the same stability–transient factor $\big(1/\sqrt{c}+\sqrt{\kappa_0}(1+\sqrt{nc})^{-(T-1)/2}\big)$ and the covering-number contribution with $d_{\mathrm{eff}}(\lambda)$. \hfill $\square$

\medskip

\begin{corollary}[Substitution into the parametric risk bound]
\label{cor:substitute-deff}
Under the conditions of Theorem~\ref{thm:main_results_softmax}, the following explicit forms hold when $d$ in the statistical factor is replaced by $d_{\mathrm{eff}}(\lambda)$.
\begin{enumerate}[leftmargin=10pt]
\item[(A)] If $\Sigma_\varphi$ has exponential spectral decay and $\lambda\asymp c/n$, then
\[
\mathbb{P}_{x\sim\mu}\!\left[\pi_{\theta_T}(y_T^\ast(x)\mid x)\le 1-\delta\right]
\;\lesssim\;
\frac{1}{\sqrt{n}}\cdot \frac{1}{\gamma\delta}\,
\Big(\log n\cdot \log\frac{nB}{\rho}\Big)^{3/2}
\cdot\Big(\frac{1}{\sqrt{c}}+\sqrt{\kappa_0}(1+\sqrt{nc})^{-(T-1)/2}\Big).
\]
Thus the dependence on $d$ is replaced by a \emph{double logarithm in $n$}.
\item[(B)] If $\Sigma_\varphi$ has polynomial decay of order $p>1$ and $\lambda\simeq c/n$, then
\[
\mathbb{P}_{x\sim\mu}\!\left[\pi_{\theta_T}(y_T^\ast(x)\mid x)\le 1-\delta\right]
\;\lesssim\;
\frac{1}{\sqrt{n}}\cdot \frac{1}{\gamma\delta}\,
\Big(n^{1/p}\cdot \log\frac{nB}{\rho}\Big)^{3/2}
\cdot\Big(\frac{1}{\sqrt{c}}+\sqrt{\kappa_0}(1+\sqrt{nc})^{-(T-1)/2}\Big).
\]
Here the ambient dimension $d$ no longer appears. The rate is governed by the spectral exponent $p$.
\item[(C)] In the spiked$+$small-tail model with rank $r$ spike and tail mass $\tau$, any $\lambda\ge \tau$ gives
\[
\mathbb{P}_{x\sim\mu}\!\left[\pi_{\theta_T}(y_T^\ast(x)\mid x)\le 1-\delta\right]
\;\lesssim\;
\frac{1}{\sqrt{n}}\cdot \frac{1}{\gamma\delta}\,
\Big((r+1)\cdot \log\frac{nB}{\rho}\Big)^{3/2}
\cdot\Big(\frac{1}{\sqrt{c}}+\sqrt{\kappa_0}(1+\sqrt{nc})^{-(T-1)/2}\Big).
\]
If $r=\mathcal{O}(\log d)$, the $d$-dependence is reduced to $\mathcal{O}(\log d)$ inside the statistical factor.
\end{enumerate}
\end{corollary}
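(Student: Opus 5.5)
This is a direct substitution argument built on Corollary~\ref{cor:lin-softmax-effdim}. That corollary already gives, for any fixed $\lambda>0$ and with probability at least $1-\rho$, the risk bound with statistical factor $\big(d_{\mathrm{eff}}(\lambda)\log(nB/\rho)\big)^{3/2}$ multiplied by the stability--transient factor $\big(1/\sqrt{c}+\sqrt{\kappa_0}(1+\sqrt{nc})^{-(T-1)/2}\big)$. So I will prove each of (A), (B), (C) by bounding $d_{\mathrm{eff}}(\lambda)$ with Lemma~\ref{lem:deff-regimes} under the corresponding spectral regime of Assumption~\ref{asmp:spectral}, and then inserting that bound. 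Because $x\mapsto x^{3/2}$ is monotone and the other factors do not depend on $\lambda$, an upper bound on $d_{\mathrm{eff}}(\lambda)$ carries over directly.

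\textbf{Case (A).} I set $\lambda\asymp c/n$. Lemma~\ref{lem:deff-regimes}(A) then gives $d_{\mathrm{eff}}(\lambda)\lesssim \log(C/\lambda)=\log(Cn/c)\lesssim \log n$, where the constants $C,c,\alpha$ are absorbed into $\lesssim$. The factor becomes $(\log n\cdot\log(nB/\rho))^{3/2}$, which is the claimed bound.

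\textbf{Case (B).} With $\lambda\asymp c/n$, Lemma~\ref{lem:deff-regimes}(B) gives $d_{\mathrm{eff}}(\lambda)\lesssim (Cn/c)^{1/p}\lesssim n^{1/p}$.

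\textbf{Case (C).} For $\lambda\ge\tau$, Lemma~\ref{lem:deff-regimes}(C) gives $d_{\mathrm{eff}}(\lambda)\le r+\tau/\lambda\le r+1$. If $r=\mathcal{O}(\log d)$, the statistical factor is $\mathcal{O}((\log d\,\log(nB/\rho))^{3/2})$.

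One bookkeeping check is needed: in Corollary~\ref{cor:lin-softmax-effdim} the ridge level that actually enters the entropy bound of Lemma~\ref{lem:entropy-effdim} is $\varepsilon^2/B^2$, where $\varepsilon\asymp d\log n/n$ is the net resolution from the proof of Theorem~\ref{thm:main_results_softmax}. Since $d_{\mathrm{eff}}$ is nonincreasing, I will choose the resolution so that $\varepsilon^2/B^2\gtrsim c/n$. The approximation term $8\beta B_{n,\rho}\varepsilon$ must then still be $\lesssim 1/\sqrt{n}$ times logarithmic factors. Under $\beta\lesssim n^{-1/2}$ this holds for any $\varepsilon\lesssim \mathrm{polylog}(n)$, so the choice $\varepsilon\asymp B\sqrt{c/n}$ is admissible. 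It gives $\log(B/\varepsilon)\lesssim\log n$, which is absorbed into $\log(nB/\rho)$.

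\textbf{Main obstacle.} The one place needing care is the clipping level $B_{n,\rho}$. In the softmax proof it scaled as $d\log(nB/d)$ through $\log|\Psi_\varepsilon|$. It must now be re-derived with $\log|\Psi_\varepsilon|\lesssim d_{\mathrm{eff}}\log(nB/\rho)$, so that the $3/2$ power (one factor from $B_{n,\rho}$, one half from the entropy) involves $d_{\mathrm{eff}}$ and not $d$. I would rerun that single line of the proof of Theorem~\ref{thm:main_results_softmax} with the covariance-adaptive entropy; everything else, including the recursion for $\kappa_t$, is unchanged.
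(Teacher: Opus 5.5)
Your main argument, inserting the bounds of Lemma~\ref{lem:deff-regimes} into Corollary~\ref{cor:lin-softmax-effdim} and using monotonicity of $x\mapsto x^{3/2}$, is exactly how the paper obtains this corollary; the paper gives no further proof. If Corollary~\ref{cor:lin-softmax-effdim} is accepted as stated for every $\lambda>0$, nothing more is needed. You also go beyond the paper and are right that the ridge level is tied to the net resolution through $\lambda=\varepsilon^2/B^2$. With the paper's $\varepsilon\asymp d\log n/n$, case (B) would give roughly $n^{2/p}$ rather than $n^{1/p}$.

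The problem is your admissibility test for the new resolution. The approximation term $8\beta B_{n,\rho}\varepsilon$ from Lemma~\ref{lemma_Lip} is added to the bound on $L(\pi_{\theta_1})=\varepsilon_{\mathrm{stat}}^2$, which is the \emph{squared} error, while the risk scales with $\varepsilon_{\mathrm{stat}}$. So this term must be $\widetilde{\mathcal{O}}(1/n)$, not $\widetilde{\mathcal{O}}(1/\sqrt{n})$. An approximation error of order $n^{-1/2}$ gives $\varepsilon_{\mathrm{stat}}\asymp n^{-1/4}$ and destroys the rate. Hence your claim that any $\varepsilon\lesssim\mathrm{polylog}(n)$ is admissible is false.

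Your specific choice $\varepsilon\asymp B\sqrt{c/n}$ does pass the correct test, but only up to a non-logarithmic factor. It gives $8\beta B_{n,\rho}\varepsilon\lesssim B\sqrt{c}\,B_{n,\rho}/n$, which can contribute order $\sqrt{B\,B_{n,\rho}/n}$ to $\varepsilon_{\mathrm{stat}}$. The stated bound has $B$ only inside $\log(nB/\rho)$. To reach it, you must either treat $B$ as at most polylogarithmic or assume $\beta\lesssim 1/(B\sqrt{n})$.

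The corrected accounting also means your re-derivation does not give case (C) in the stated generality. There you need $\varepsilon^2/B^2\gtrsim\tau$, i.e.\ $\varepsilon\gtrsim B\sqrt{\tau}$. For $\beta$ at the top of its allowed range, $\beta\asymp n^{-1/2}$, the approximation error stays $\widetilde{\mathcal{O}}(1/n)$ only if $\tau\lesssim\widetilde{\mathcal{O}}(1/(nB^2))$. For a larger tail mass, ``any $\lambda\ge\tau$'' follows only by citing Corollary~\ref{cor:lin-softmax-effdim} as a black box, as the paper does.

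Finally, if you actually rerun the proof rather than cite it, $B_{n,\rho}$ is not the only line that changes. Lemma~\ref{lem:entropy-effdim} counts an $L_2(\mu)$ cover of the scores $\langle\theta,\varphi\rangle$. Lemma~\ref{lemma_Lip} and the transfer to the empirical loss instead require a sup-norm log-ratio net. The discretization step itself must therefore be rebuilt. One option is to use the fact that the log-partition cancels in $\Delta^{\pi_\theta}$ and then control the empirical loss on the sample separately. The paper glosses over this point too.
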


\begin{remark}[Choice and interpretation of $\lambda$]
\label{rmk:lambda-choice}
The parameter $\lambda$ is an \emph{analysis device} that interpolates between ambient and intrinsic complexity. 
\begin{itemize}
\item In isotropic or full-rank regimes with flat spectra, taking $\lambda\downarrow 0$ recovers $d_{\mathrm{eff}}(\lambda)\approx d$.
\item With spectral decay, e.g., $\lambda_i\asymp i^{-p}$ for $p>1$, one has $d_{\mathrm{eff}}(\lambda)\ll d$ for moderate $\lambda$, tightening the bound.
\item A practical choice is $\lambda\asymp c\big/\!n$ (or any monotone schedule $\lambda_T$), which trades a slightly larger bias in the entropy radius for a sharply reduced $d_{\mathrm{eff}}(\lambda)$. The contraction threshold in $T$ remains unaffected because it depends only on $n$ and $c$.
\end{itemize}
\end{remark}

\begin{remark}[When does $d_{\mathrm{eff}}(\lambda)$ look like $\log d$?]
\label{rmk:logd}
Two representative scenarios lead to a logarithmic-in-$d$ dependence.
\begin{enumerate}
\item[(i)] \emph{Log-rank energy concentration.} Suppose the spectrum concentrates on $r=\mathcal{O}(\log d)$ leading directions, in the sense that $\sum_{i>r}\lambda_i\le \tau$ with $\tau$ small. Taking $\lambda\ge \tau$ gives $d_{\mathrm{eff}}(\lambda)\le r+1=\mathcal{O}(\log d)$ by Lemma~\ref{lem:deff-regimes}.
\item[(ii)] \emph{Geometric decay until the ambient cutoff.} If $\lambda_i\asymp C\,\rho^{\,i}$ with $\rho\in(0.1)$, then for any $\lambda$ above the tail floor one has $d_{\mathrm{eff}}(\lambda)\asymp \log(C/\lambda)$. If additionally $\lambda$ is chosen proportional to $\tau:=\sum_{i>\tilde r}\lambda_i$ for some $\tilde r=\mathcal{O}(\log d)$, then $d_{\mathrm{eff}}(\lambda)\lesssim \tilde r+\mathcal{O}(1)=\mathcal{O}(\log d)$.
\end{enumerate}
In both cases, substituting $d_{\mathrm{eff}}(\lambda)=\mathcal{O}(\log d)$ into the statistical factor replaces $\big(d\log(\frac{nB}{d\rho})\big)^{3/2}$ by $\big(\log d\cdot \log(\frac{nB}{\rho})\big)^{3/2}$.
\end{remark}

\end{document}